\newtheorem{assumption}{Assumption}
\newtheorem{property}{Property}
\begin{document}

\title{Nystr\"{o}m Regularization for Time Series Forecasting\thanks{Z. Sun and M. Dai contribute  equally and are co-first authors of the paper}}

\author{
\name Zirui Sun \email sunzirui@stu.xjtu.edu.cn\\
\addr Center of Intelligent Decision-Making
        and Machine Learning\\
        School of Management\\
       Xi'an Jiaotong University\\
       Xi'an, China\\
\name Mingwei Dai \email daimw@swufe.edu.cn \\
\addr Center of Statistical Research and School of Statistics\\
     Southwestern University of Finance and Economics\\
       Chengdu, China\\
\name Yao Wang \email yao.s.wang@gmail.com  \\
       \addr Center of Intelligent Decision-Making
        and Machine Learning\\
        School of Management\\
       Xi'an Jiaotong University\\
       Xi'an, China\\
\name Shao-Bo Lin\thanks{Corresponding author}  \email sblin1983@gmail.com \\
       \addr Center of Intelligent Decision-Making
        and Machine Learning\\
        School of Management\\
       Xi'an Jiaotong University\\
       Xi'an, China
}

\editor{???}

\maketitle

\begin{abstract}%   <- trailing '%' for backward compatibility of .sty file
This paper focuses on learning rate analysis of Nystr\"{o}m regularization with sequential sub-sampling for $\tau$-mixing time series. Using a recently developed Banach-valued Bernstein inequality for $\tau$-mixing sequences and an integral operator approach based on second-order decomposition, we succeed in deriving almost optimal learning rates of Nystr\"{o}m regularization with sequential sub-sampling for $\tau$-mixing time series. A series of numerical experiments are carried out to verify our theoretical results, showing the  excellent learning performance of Nystr\"{o}m regularization with sequential sub-sampling in learning massive  time series data.
All these results extend the applicable range of  Nystr\"{o}m regularization  from i.i.d. samples to non-i.i.d. sequences.
\end{abstract}

\begin{keywords}
Time series forecasting,   Sub-sampling, Nystr\"{o}m regularization,
$\tau$-mixing process.
\end{keywords}

\section{Introduction}
\qquad Time series is one of the most common data types abounding in almost every aspect of human life  \citep{Fu2011}, including clinical medicine, finance data, speech recognition, motion capture data, traffic data, music recognition and video data. Besides the time-dependent nature, time series data in recent years exhibit additional massiveness  and hard-to-model  properties,  making  the existing  models such as autoregressive models, linear dynamical systems and hidden Markov models be no more efficient    \citep{Rakthanmanon2012}. It is thus highly desired to develop scalable learning algorithms of high quality to tackle massive     time series data  that are not generated by simple parametric models.

One of the most productive solutions to tackle hard-to-model  time series data is the nonparametric approach \citep{Meir2000}, which aims to find  the intrinsic nature of time series
without  imposing any structural restrictions on the models. Neural networks \citep{Hagan1997}
and kernel methods  \citep{ShaweTaylor2004} are two popular nonparametric schemes in time series forecasting. In particular, it was shown in \citep{Modha1996,Modha1998,Xu2008,Steinwart2009} that neural networks  and kernel methods  are memory-universal and possess  good generalization performance, provided the time series satisfy the well known  $\alpha$-mixing condition \citep{Doukhan1994}. The problem is, however, that the existing methods require huge computations, especially when the data size is large.

This paper aims to derive a  scalable kernel-based learning algorithm  to tackle massive time series data.  Our study is motivated by three important observations. At first,   the dependent nature of time series data enhances the dependence among columns of the kernel matrix, making its effective rank\footnote{The effective rank in this paper denotes the number of eigenvalues that are larger than a specific threshold.}
much smaller than that of independent and identical (i.i.d.) data (see Figure 2 below for detailed description). Secondly,  numerous studies showed that the  dependence of time series data can be maintained via kernelization, i.e.,   columns of the kernel matrix and input data possess the same mixing property \citep{Bradley2005}. For example, it can be found in \citep{Bradley2005,Sun2021} that the so-called $\alpha$-mixing condition \citep{Rosenblatt1956} is unchanged via kernelization. Finally, Nystr\"{o}m regularization \citep{Williams2000}, a special type of learning with sub-sampling that randomly sketches a few columns from the kernel matrix to build up the estimator in the premise that  such a computation-reduction scheme does not sacrifice the learning performance, has been widely used for i.i.d. data. Nystr\"{o}m regularization \citep{Williams2000} successfully reduces the computational burden of kernel methods without losing the generalization performance  \citep{Rudi2015,Kriukova2017}, provided the number of sketched columns is  larger than the effective rank of the kernel matrix.
Taking   these interesting observations into accounts, we  then  devote to
utilizing Nystr\"{o}m regularization for kernel ridge regression (KRR) \citep{Gittens2016} to yield a novel scalable learning strategy for time series data.

Different from the classical Nystr\"{o}m regularization for i.i.d. data, Nystr\"{o}m regularization for time series  requires  strict orders of selected columns  to reflect the time-dependent nature, making the widely used sub-sampling schemes including the plan Nystr\"{o}m \citep{Williams2000}, leverage score approach \citep{Gittens2016} and random sketching \citep{Yang2017} be no more available.  % Furthermore, the dependence
Therefore, novel sub-sampling strategies are needed to equip Nystr\"{o}m regularization to tackle time series data.  This raises two challenges: (i) designing an exclusive sub-sampling mechanism available to time series data and (ii)  providing theoretical guarantees for  corresponding Nystr\"{o}m regularization approach.

As kernelization  maintains the mixing property of time series data,
we employ a simple but effective sub-sampling strategy for the first challenge, via selecting  continuous columns in the kernel matrix to guarantee the order. As a result, the selected columns possess similar mixing property as the time series data.
We call such a sub-sampling strategy as Nystr\"{o}m regularization with sequential sub-sampling. There are mainly two advantages of sequential sub-sampling. One is its user-friendly nature, making the sub-sampling be easy to be implemented. The other is that the mixing property of selected columns plays a crucial role in providing theoretical guarantees for the corresponding Nystr\"{o}m regularization.

For the second challenge, it should be noted that
previous approaches for   theoretical guarantees for learning with time series data require quantifying the dependence of time series via  mixing conditions, including  the $\alpha$-mixing \citep{Modha1996}, $\beta$-mixing \citep{Yu1994} and  $\phi$-mixing  \citep{Billingsley1968}. However, learning rates established in \citep{Yu1994,Xu2008,Steinwart2009,Sun2010,Alquier2012,Alquier2013,Hang2017} concerning  the corresponding  mixing data are sub-optimal, since the dependence among data reduces the effective samples and then makes the Bernstein-type inequality established in \citep{Yu1994,Modha1996} for dependent data be a little bit worse than the classical Bernstein inequality for i.i.d. data. Therefore, to provide optimal theoretical guarantees for Nystr\"{o}m regularization with sequential sub-sampling, it is necessary to develop a novel analysis approach such as the integral operator approach in \citep{Sun2010,Sun2021}. Fortunately, for the well known $\tau$-mixing sequences,  \citet{Blanchard2019} have derived almost optimal learning rates for KRR via establishing a novel integral operator approach. Noting further that  $\tau$-mixing is somewhat weaker than $\alpha$-mixing \citep{Dedecker2004}, we borrow the idea from   \citep{Blanchard2019} to derive almost optimal learning rates of Nystr\"{o}m regularization for  $\tau$-mixing time series.

Our main  contributions can be summarized in the following three aspects:

$\bullet$ {\it Methodology:} To tackle massive and hard-to-model time series data, we propose a novel
Nystr\"{o}m regularization with sequential sub-sampling based on kernel methods. The sequential sub-sampling mechanism succeeds in maintaining the mixing property of time series and the Nystr\"{o}m regularization significantly reduces the computational burden of kernel methods.
Meanwhile, compared with i.i.d. data, the dependence nature of time series results in smaller effective rank of kernel matrix and thus requires smaller sub-sampling ratio\footnote{The sub-sampling ratio in this paper means the ratio between the number of selected  columns in kernel matrix and   size of data.} in Nystr\"{o}m regularization.

$\bullet$ {\it Theory:} Utilizing a recently developed Banach-valued Bernstein inequality for $\tau$-mixing sequences \citep{Blanchard2019} and the second-order decomposition approach for integral operator \citep{Guo2017}, we derive almost optimal learning rates of Nystr\"{o}m regularization with sequential sub-sampling for $\tau$-mixing time series. In particular, we show that, with a small sub-sampling ratio, Nystr\"{o}m regularization  with sequential sub-sampling performs the same as KRR, provided the $\tau$-mixing coefficients of time series decay exponentially. This is the first result, to the best of our knowledge, to show almost optimal learning rates for learning non-i.i.d. data with the sub-sampling strategy.

$\bullet$ {\it Experiments:} Our theoretical assertions are verified by numerous toy simulations and two real data experiments including the BITCOIN (BTC) data and  Western Texas Intermediate  (WTI) data.   Our experimental results show that     Nystr\"{o}m regularization with sequential sub-sampling is an effective and efficient approach to reduce the computational burden of KRR without scarifying its excellent learning performance, provided the sampling ratio is larger than a specific value. Furthermore, we find that  Nystr\"{o}m regularization with sequential sub-sampling is a feasible noise-extractor  in the sense that it can quantify the random noise in time series. All these results show that Nystr\"{o}m regularization with sequential sub-sampling is a scalable and feasible strategy  to tackle massive and hard-to-model time series.

The remainder of this paper is organized as follows. In Section 2,
 we introduce some basic properties of time series and $\tau$-mixing sequences, and then propose the
Nystr\"{o}m regularization with sequential sub-sampling for time series. In Section 3, we study  theoretical behaviors of Nystr\"{o}m regularization via presenting its almost optimal learning rates.
  In Section 4, we compare our results with some
related literature and present some discussions. In Section 5,
extensive experimental studies are carried out to verify our
theoretical assertions.  In the last section, we prove our
 main results.

\section{Time Series Forecasting via Nystr\"{o}m Regularization}
\qquad In this section, we introduce  time series forecasting problem   and then propose the   Nystr\"{o}m regularization  with sequential sub-sampling.

\subsection{Time series forecasting}
%Time series refers to the sequence formed by arranging the values of the same statistical indicator in chronological order, which is in contrast with cross-sectional data. Time series data is an important type of time data, which has the characteristics of large amount of data, continuity and continuous updating. It can be easily obtained from practical applications such as healthcare and financial applications. For example, the WTI (Western Texas Intermediate) spot prices data we used in the experimental part(Section 3) is an important time series data. It is a collection of observations made on fixed indicators in chronological order, that is, tracking the movement of selected data points (WTI spot prices) within a specified time period, and records data points regularly (daily record).

\qquad We are interested in a standard time series forecasting setting where the learner receives   data of the form $D:=D_n:=\{z_t\}_{t=1}^n=\{(x_t,y_t)\}_{t=1}^n$ with $x_t\in\mathcal X$, $y_t\in\mathcal Y$ and $z_t\in\mathcal Z:=\mathcal X\times\mathcal Y$.  The aim is to learn a function $f_n:\mathcal X\rightarrow\mathcal Y$ such that $f_n(x_{n+1})$ can predict $y_{n+1}$ well.
Without loss of generality, we assume $\mathcal Y=[-M,M]$ and $\mathcal X=[-M,M]^d$. The following are three widely used time series forecasting models.
 % Under such a setting, our aim is to find an $f:\mathcal X\rightarrow\mathcal Y$ such that $f(x_{n+1})$ can predict $y_{n+1}$ well. This setup covers a large number of time series learning scenarios commonly used in practice.

$\bullet$ {\it Non-parametric auto-regression:}
Let $d\in\mathbb N$ be the memory size. Assume $\mathcal X=\mathcal Y^d$ and  there is an $f_0:\mathcal X \rightarrow \mathcal Y$ such that
\begin{equation}\label{AR-model}
      x_t=f_0(x_{t-1},\dots,x_{t-d})+\varepsilon_t,
\end{equation}
where $\{\varepsilon_t\}_{t=1}^n$ are independent of $x_0$.

$\bullet$ {\it Non-parametric ARX:}
Let $d\in\mathbb N$ be the memory size. Assume $\mathcal X=\mathcal Y^{d+d'}$ for some $d'\in\mathbb N$.
Let $\{\xi_t\}_{t=1}^n$ with $\xi_t\in\mathcal Y$ be a set of auxiliary variants.
Assume  that there is an $f_0:\mathcal X\rightarrow\mathcal Y$ such that  $x_t$, $t=1,\dots,n$ are generated via
\begin{equation}\label{ARKT-model}
      x_t=f_0(x_{t-1},\dots,x_{t-d},\xi_t,\dots,\xi_{t-d'})+\varepsilon_t,
\end{equation}
where $\{\varepsilon_t\}_{t=1}^n$ are independent of $x_0$.
%
%$\bullet$ {\it Non-parametric sparse additive AR models:}
%The $j$-th time series observations has the following model
%$$
%     X_{t+1,j}|x_t\sim p(V_j+\sum_{k=1}^D f_{j,k}(x_{t,k}).
%$$

$\bullet$ {\it Nonlinear processes:} Let $d\in\mathbb N$ be the memory size. Assume $\mathcal X=\mathcal Y^d$ and there is an $f_0:\mathcal X \rightarrow \mathcal Y$ such that
$$
      x_t=f_0(x_{t-1},\dots,x_{t-d};\varepsilon_t),
$$
where $\{\varepsilon_t\}_{t=1}^n$ are independent of $x_0$.

\begin{figure}\label{Fig:timeseries}
\centering
\includegraphics[scale=0.5]{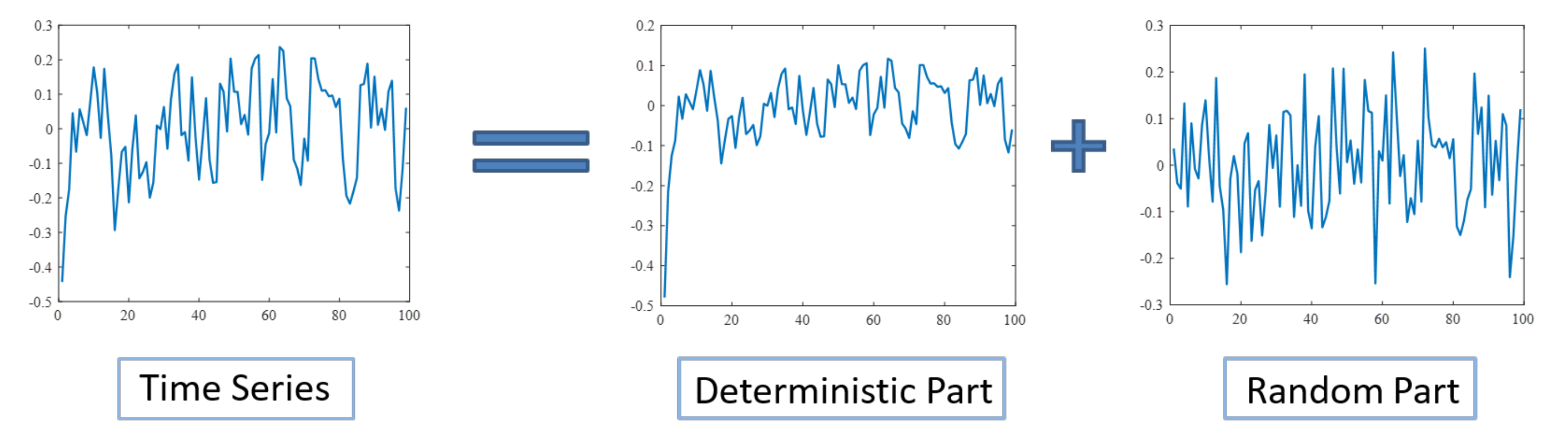}
\caption{The decomposition of time series.}\label{decomp}
\end{figure}

Due to the existence of noise  $\{\varepsilon_t\}_{t=1}^n$, it is not a good choice  to formulate time series forecasting problem into the standard regression setting for i.i.d. data \citep{Gyorfi2002}. Instead, as shown in Figure  1, a general time series forecasting problem can be divided into searching a deterministic relation between input and output and describing a random part that is mainly caused by the random noise. Taking non-parametric auto-regression regression for example, the deterministic part refers to find a good estimate, $f_D$, of $f_0$ in (\ref{AR-model}) and the random part concerns the distribution of the noise $\{\varepsilon_t\}_{t=1}^n$. Once  $f_D$  is good enough, then it can be regarded as a noise-extractor in the sense that $y_t-f_0(x_t)$ is near to $\varepsilon_t$ and then the distribution of noise of time series data can be revealed.

%
%We simplify the time series forecasting problem into the following form:
%\[x_{t} = \widehat{f}(x_{t-1},\cdots,x_{t-T})+\varepsilon_t\]
%In this paper, we divide the time series forecasting problem into two parts, that is finding an optimal estimator $\widehat{f}$ and learning the distribution form of random noise $\varepsilon_t$.
%
%There exists a time lag between the predicted value and the real value when predicting real data (such as stock data) by many existed methods. Unknown random noise is one of the reasons for hysteresis, therefore, catching noise $\varepsilon_t$ is also an important and necessary task.
%
%As shown in fig.~\ref{decomp}, we divide time series forecasting into two tasks. In practical applications, the data is noisy, so we decompose the time series into clean data and noise. Among them, clean data is a fixed part, we want to accurately predict its value, and noise is a random part, we try to learn the distribution of noise. We verify through experiments that our method can capture the information of clean data well and can effectively analyze the distribution of noise.

\subsection{$\tau$-mixing sequences}
\qquad It is well known that some restriction on the dependence is necessary to establish satisfactory  generalization error bounds for time series data. Strong mixing condition \citep{Modha1996} is one of most popular restriction  to quantify the dependence among time series and  is much weaker than the so-called $\beta$-mixing condition \citep{Yu1994} and $\phi$-mixing condition \citep{Billingsley1968}.
For two $\sigma$-fields $\mathcal J$ and $\mathcal K$, define the $\alpha$-mixing (or strong mixing) coefficient as
\begin{equation}\label{alpha}
     \alpha(\mathcal J,\mathcal K):=\sup_{A\in\mathcal J, B\in\mathcal K}|P(A\cap B)-P(A)P(B)|.
\end{equation}
Denote by $\mathcal M_{i,j}$ the $\sigma$-filed generated by random variables $z_{i:j}:=(z_i,z_{i+1},\dots,z_j)$.
The strong  mixing condition is defined  as follows.
\begin{definition}\label{Def:alpha}
A set of random sequence $\{z_i\}_{i=1}^\infty$ is said to satisfy a strong  mixing condition (or $\alpha$-mixing condition) if
\begin{equation}\label{def-alpha}
       \alpha_j:=\sup_{k\geq 1}\alpha(\mathcal M_{1,k},\mathcal M_{k+j,\infty})\rightarrow0, \qquad\mbox{as}\ j\rightarrow \infty.
\end{equation}
\end{definition}

 We refer the readers to \citep{Doukhan1994} for more details and  examples for $\alpha$-mixing sequences.
Unfortunately,  the $\alpha$-mixing condition presented in Definition \ref{Def:alpha} is still a little bit strong,   excluding
some simple Markov chains and causal linear processes.

$\bullet$ Markov chains with Bernoulli distribution: Let
$$
     x_t=\frac12(x_{t-1}+\varepsilon_t)
$$
where $\{\varepsilon_t\}_{t=1}^n$ are i.i.d. drawn with the Bernoulli distribution $\mathcal B(1/2)$ and are independent of $x_0$. It was proved in \citep{Andrews1984} that $\alpha_j=1/2$ for any $j$.

$\bullet$ Causal linear process: Let $(\xi_j)_{j\in\mathbb Z}$ be a sequence of i.i.d. random variables with values in $\mathbb R$. Define the time series $\{x_t\}_{t\in\mathbb N}$ with
$$
       x_t=\sum_{j=0}^\infty a_j\xi_{t-j},
$$
where  $a_j=2^{-j-1}$. If $\xi_0$ is drawn from $\mathcal B(1/2)$, then it can be found in \citep[P.871]{Dedecker2004} that $\alpha_j=1/4$ for   any $j$.

Noting this dilemma,  \citep{Dedecker2004,Dedecker2005} proposed a slightly weaker $\tau$-mixing condition. Let $\mathcal C_{Lip}$ be the set of bounded Lipschitz functions over $\mathcal X$. Consider
$$
     C_{Lip}(f):=\|f\|_{Lip(\mathcal X)}=\sup\left\{\frac{|f(x)-f(x')|}{\|x-x'\|}\big|x,x'\in\mathcal X,x\neq x'\right\}.
$$
It is easy to see that $C_{Lip}$ is a semi-norm.
We consider a norm of $\mathcal C_{Lip}$ of the form:
$$
     \|g\|_{\mathcal C_{Lip}}:=\|g\|_\infty+C_{Lip}(g),
$$
where  $\|\cdot\|_\infty$ is the sup-norm on $\mathcal C_{Lip}$. Let $\mathcal C_1$ be the ``semi-ball'' of functions $g\in\mathcal C_{Lip}$ such that $C_{Lip}(g)\leq 1$. Let $\mathcal M_i$ be the $\sigma$-algebra generated by $z_1,\dots,z_t$.

\begin{definition}\label{Def:C-mixing}
The $\tau$-mixing coefficients are defined by
\begin{eqnarray}\label{C-mix-coeff}
   \tau_j&:=&\sup\{ E(\eta g(z_{i+j}))- E(\eta) E(g(z_{i+j}))|i\in\mathbb N, \nonumber\\
   && \eta\ \mbox{is $\mathcal M_i$-measurable and}\ \|\eta\|_1\leq 1, g\in\mathcal C_1\}.
\end{eqnarray}
\end{definition}
 If  there are some constants $b_0>0,c_0\geq0,\gamma_0 >0$ such that
\begin{equation}\label{def-Galpha}
           \tau_j\leq c_0\exp( {-(b_0j)}^{\gamma_0}),\qquad \forall \ j\geq 1,
\end{equation}
then   $\{z_t\}_{t=1}^\infty$ is said    to be geometrical $\tau$-mixing.
 If  there are some constants $c_1>0, \gamma_1 >0$ such that
\begin{equation}\label{def-Galpha11}
           \tau_j\leq c_1j^{-\gamma_1},\qquad \forall \ j\geq 1,
\end{equation}
then   $\{z_t\}_{t=1}^\infty$ is said    to be algebraic $\tau$-mixing.

%It is easy to derive from Definition \ref{Def:C-mixing} that the Markov chains and causal linear processes we mentioned above are geometrical $\tau$-mixing.

It can be found in \citep{Dedecker2004} that the above Markov chains with Bernoulli distribution and causal linear process are  geometrical $\tau$-mixing, showing that $\tau$-mixing   is essentially different from $\alpha$-mixing.
Some basic properties of $\tau$-mixing sequences were derived in \citep{Dedecker2004,Dedecker2005}, among which the following are important for our analysis.

\begin{property}\label{Property:1}
 If $\{z_t\}_{t=1}^\infty$ is $\tau$-mixing with coefficient $\tau_j$, then for arbitrary $i,k\in\mathbb N$,  $\{z_{t}\}_{t=i}^{i+k}$  is $\tau$-mixing with coefficient $\tau_j$.
\end{property}

\begin{property}\label{Property:2}
 If $\{z_t\}_{t=1}^\infty$ is $\tau$-mixing with coefficient $\tau_j$ and $h\in \mathcal C_{Lip}$ with Liptchiz constant $C_h$, then $\{h(z_t)\}_{t=1}^\infty$ is $\tau$-mixing with coefficient $C_h\tau_j$.
\end{property}

\begin{property}\label{Property:3}
 If $\{z_t\}_{t=1}^\infty$ is $\alpha$-mixing, then $\{z_t\}_{t=1}^\infty$ is $\tau$-mixing.
\end{property}

Property \ref{Property:1} and Property \ref{Property:2} can be deduced from the definition directly and play crucial roles in developing Nystr\"{o}m Regularization for $\tau$-mixing series. Property 3 that can be found in \citep[Lemma 7]{Dedecker2004} shows that the $\tau$-mixing condition is   weaker than the $\alpha$-mixing condition.
We highlight that Property \ref{Property:3} only  implies $\tau_j\leq g(\alpha_j)$ for some monotonously increasing function $g$ rather than  $\tau_j\leq \alpha_j$.

\subsection{Nystr\"{o}m regularization}
\qquad Let $D:=D_n:=\{z_t\}_{t=1}^n=\{(x_t,y_t)\}_{t=1}^n$ with $x_t\in\mathcal X=[-M,M]^d$ and $y_t\in\mathcal Y=[-M,M]$ and $K(\cdot,\cdot)$ be  a Mercer kernel and $({\mathcal H}_K, \|\cdot\|_K)$ be
the corresponding reproduced kernel Hilbert space (RKHS).  Kernel ridge regression (KRR) \citep{Evgeniou2000}, given a regularization parameter $\lambda>0$,   is defined by
\begin{equation}\label{KRR}
    f_{D,\lambda} =\arg\min_{f\in \mathcal{H}_{K}}
    \left\{\frac{1}{n}\sum_{t=1}^n(f(x_t)-y_t)^2+\lambda\|f\|^2_{K}\right\}.
\end{equation}
It is easy to check that the complexities in storage and training of KRR are $\mathcal O(n^2)$ and $\mathcal O(n^3)$, respectively.  As a result,  KRR is difficult to tackle massive time series, although it is one of the most popular learning algorithms in the past two decades. Sub-sampling \citep{Gittens2016} is a preferable way to reduce the computational burden of KRR.

For any subset  {$D_j:=D_{j,m}:=\{(\tilde{x}_i, \tilde{y}_i)\}_{i=1}^m$} of $D$, define
\begin{equation}\label{hypothesis-space-sub}
      \mathcal H_{D_j}:=\left\{\sum_{i=1}^ma_iK_{\tilde{x}_i}:a_i\in\mathbb R\right\},
\end{equation}
where  $K_x:=K(x,\cdot)$.  Nystr\"{o}m regularization with sub-samples $D_j$ is  then defined  by
\begin{equation}\label{Nystrom}
       f_{D,D_j,\lambda}:=\arg\min_{f\in\mathcal
       H_{D_j}}\frac1{n}\sum_{t=1}^n(f(x_t)-y_t)^2+\lambda\|f\|_K^2.
\end{equation}
Direct computation \citep{Rudi2015} yields
\begin{equation}\label{Analytic-solution}
   f_{D,D_j,\lambda}(\cdot)=\sum_{i=1}^m \alpha_iK_{\tilde{x}_i}(\cdot),
\end{equation}
where
\begin{equation}\label{Analytic-solution1}
   \alpha=(\alpha_1,\dots,\alpha_m)^T= (\mathbb K_{nm}^T\mathbb K_{nm}+\lambda|D|\mathbb K_{mm})^\dagger \mathbb K_{nm}^Ty_D,
\end{equation}
$\mathbb A^\dagger$ and $\mathbb A^T$ denote  the Moore-Penrose pseudo-inverse and transpose  of a matrix $\mathbb A$ respectively, and $(\mathbb K_{n,m})_{t,i}=K(x_t,\tilde{x}_i)$, $(\mathbb K_{mm})_{k,i}=K(\tilde{x}_k,\tilde{x}_i)$ and $y_D=(y_1,\dots,y_{|D|})^T$. In this way, it requires $\mathcal O(nm)$ and $\mathcal O(nm^2)$ complexities in storage and training respectively to derive a Nystr\"{o}m regularization  estimator. If the sub-sampling ratio, i.e., $m/n$ is small, then Nystr\"{o}m regularization  significantly reduces the computational burden of KRR.

\begin{figure}
\centering
\subfigure{\includegraphics[scale=0.4]{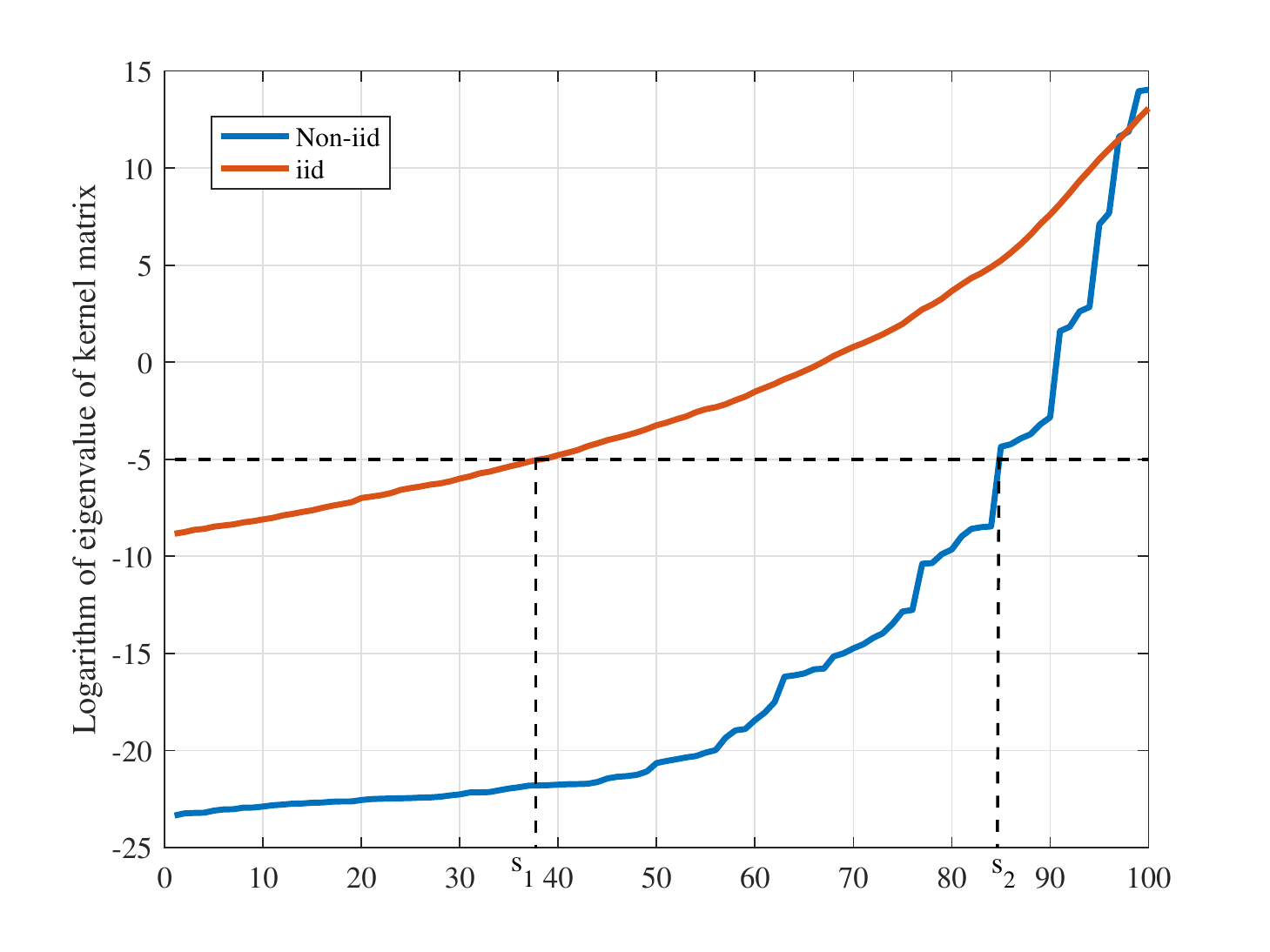}}
\subfigure{\includegraphics[scale=0.4]{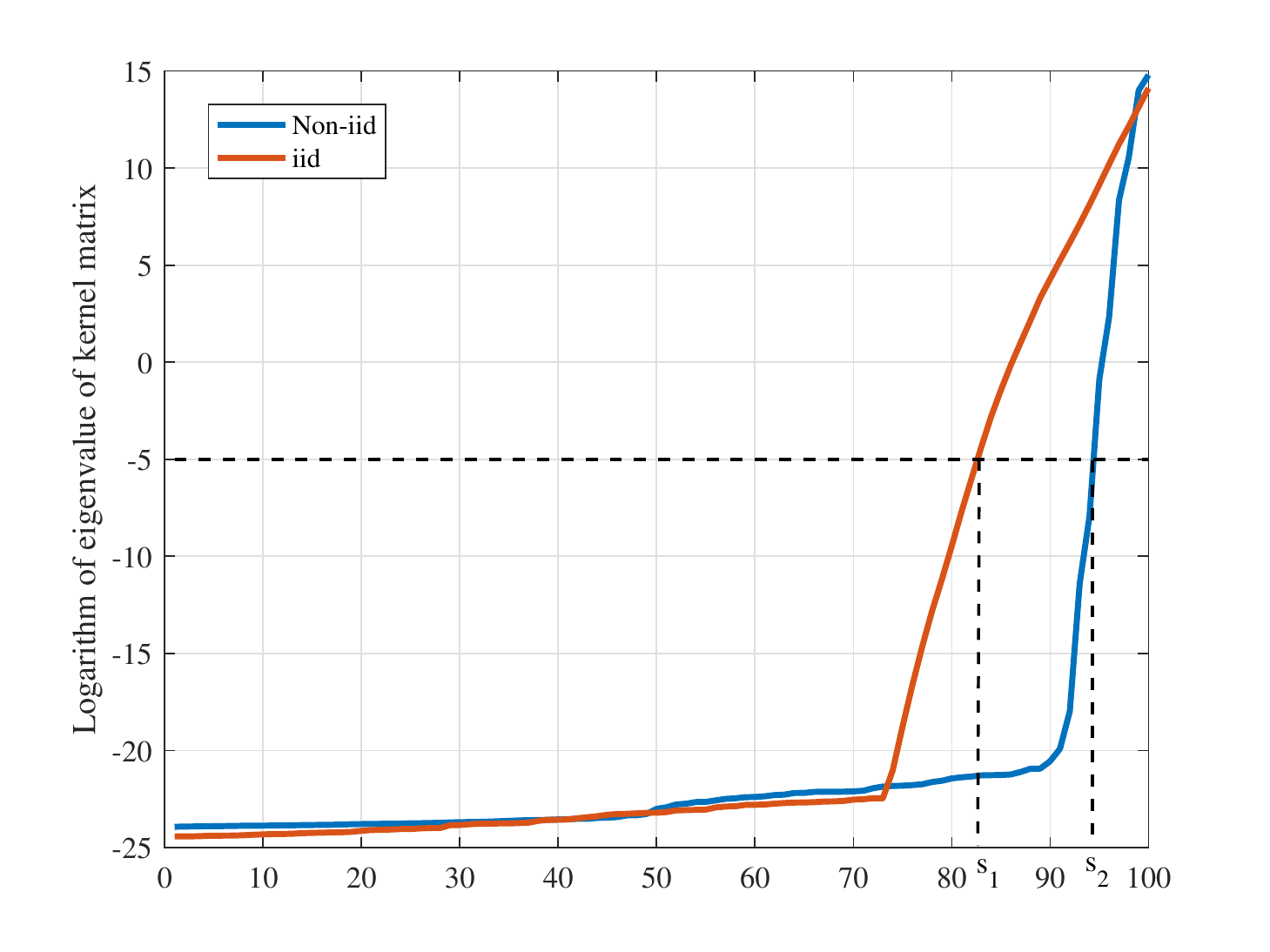}}
\caption{Eigenvalues of kernel matrix. Data are generated with $ x_{t} = 0.5\sin(x_{t-1})+\varepsilon_t$ (the blue line) and $ f(x) = 0.5\sin(x)+\varepsilon$ (the red line) where $\varepsilon_t,\varepsilon \sim \mathcal B(1/2)$. The number of samples is 3000 and we draw the largest 100 eigenvalues. The left figure and right figure are the eigenvalues under different kernel functions, which are Wendland kernel (\ref{kernel}) below and Gaussian kernel with $\sigma = 0.5$ respectively.}
\end{figure}\label{Fig:eigenvalue}

Though theoretical behaviors of  Nystr\"{o}m regularization have been rigorously verified in  \citep{Rudi2015}, it remains open whether Nystr\"{o}m regularization is available to time series. Noting further that the basic idea of Nystr\"{o}m regularization \citep{Williams2000} is that the effective rank
of kernel matrix is much smaller than the size of data and sub-sampling aims to find principal component of the kernel matrix, we show the applicability of  sub-sampling for time series by comparing eigenvalues of the kernel matrices generated from  time series and  i.i.d. data respectively. In order to avoid the influence of the   kernel functions on the calculation of eigenvalues, we choose two different types of kernel functions. As shown in Figure~2, we see that $s_1$ is always less than $s_2$, implying that   it is easier to retain the main information  after sub-sampling   time series data.
That is,  time series data allow      lower sub-sampling ratio to  maintain the spectrum  information of kernel matrix than i.i.d. data. Therefore, we may claim that time series data may be more suitable than i.i.d. data for sub-sampling methods.
%Next we propose our sub-sampling strategy for time series, called Nystr\"{o}m regularization via sequential sub-sampling.

\subsection{Nystr\"{o}m regularization with sequential sub-sampling}
\qquad Different from i.i.d. data, time series  exhibit additional difficulty in designing the sub-sampling strategy due to their non-i.i.d. nature. In particular, it is unknown whether the selected columns satisfy   certain mixing conditions. For this purpose, we propose a concept of  sequential sub-sampling to guarantee the $\tau$ mixing property of sub-sampling.
\begin{definition}\label{Def:seq-subsampling}
Let $m\in\mathbb N$ and $m\leq n$. Randomly select $j\in[1,n-m+1]$ according to the uniform distribution. The set $D_j^\ast:=D_{j,m}:=\{z_j,\dots,z_{j+m-1}\}:=\{\tilde{z}_i\}_{i=1}^m$ is then defined to be a sequential sub-sampling of size $m$  for $D$.
\end{definition}

Due to Property \ref{Property:1},  the  $\tau$-mixing property of $D$ implies the $\tau$ mixing property of $D_j^\ast$. To facilitate the analysis, we also need the following assumption on the kernel.

\begin{assumption}\label{Assumption:kernel}
Assume $\kappa:=\sup_{x\in\mathcal X}\sqrt{K(x,x)}\leq 1$ and there exists a $\mathcal K>0$ such that $\max\left\{\left|\frac{\partial K(x,x')}{\partial x}\right|,\left|\frac{\partial K(x,x')}{\partial x'}\right|\right\}\leq \mathcal K.$
\end{assumption}

Assumption \ref{Assumption:kernel} is mild. In particular,  the boundedness assumption  $\kappa\leq 1$ is satisfied for any continuous kernels with  scaling. The restriction on the partial derivatives can also be satisfied for any smooth kernels.  It is easy to check that almost all widely used kernels such as the Gaussian kernel, Wendland kernel, Sobolev kernel and multi-quadratic  kernel \citep{Steinwart2008} satisfy Assumption \ref{Assumption:kernel}. Based on Assumption \ref{Assumption:kernel} and Property \ref{Property:2}, we  obtain the following property directly \citep{Blanchard2019}.

\begin{property}\label{Property:mixing}
Under Assumption \ref{Assumption:kernel}, if $\{z_t\}_{t=1}^n$ is $\tau$-mixing and  $\{\tilde{z}_i\}_{i=1}^m$ is a sequential sub-sampling of $\{z_t\}_{t=1}^n$, then $\{K_{\tilde{z}_i}\}_{i=1}^m$ is Banach-valued  $\tau$-mixing.
\end{property}

Property \ref{Property:mixing} plays an important role in guaranteeing the feasibility of sequential sub-sampling. With the proposed sequential sub-sampling scheme, we can develop an exclusive Nystr\"{o}m regularization for time series by using $D^*_j$ in Definition \ref{Def:seq-subsampling} to take place of $D_j$ in (\ref{Nystrom}). The framework of Nystr\"{o}m regularization with sequential sub-sampling is given in Algorithm~\ref{alg:seq sub}.

%
%
%For arbitrary  sequential sub-sampling $D_j^\ast$ of $D$, define
%$$
%      \mathcal H_{D_j^\ast}:=\left\{\sum_{i=1}^ma_iK_{\tilde{x}_i}:a_i\in\mathbb R\right\}.
%$$
% Nystr\"{o}m regularization with sequential sub-sampling  is  then defined by
%\begin{equation}\label{Nystrom}
%       f_{D,D_j^\ast,\lambda}:=\arg\min_{f\in\mathcal
%       H_{D_j^\ast}}\frac1{n}\sum_{t=1}^n(f(x_t)-y_t)^2+\lambda\|f\|_K^2.
%\end{equation}
%It is easy to derive \citep{Rudi2015}
%\begin{equation}\label{Analytic-solution}
%   f_{D,D_j^\ast,\lambda}(\cdot)=\sum_{i=1}^m \alpha_iK_{\tilde{x}_i}(\cdot),
%\end{equation}
%where
%\begin{equation}\label{Analytic-solution1}
%   \alpha=(\alpha_1,\dots,\alpha_m)^T= (\mathbb K_{nm}^T\mathbb K_{nm}+\lambda|D|\mathbb K_{mm})^\dagger \mathbb K_{nm}^Ty_D,
%\end{equation}
%$\mathbb A^\dagger$ and $\mathbb A^T$ denote  the Moore-Penrose pseudo-inverse and transpose  of a matrix $\mathbb A$ respectively, and $(\mathbb K_{n,m})_{t,i}=K(x_t,\tilde{x}_i)$, $(\mathbb K_{mm})_{k,i}=K(\tilde{x}_k,\tilde{x}_i)$ and $y_D=(y_1,\dots,y_{|D|})^T$.

\begin{algorithm}\label{alg:seq sub}
\caption{Nystr\"{o}m regularization with sequential sub-sampling}
\LinesNumbered
\KwIn{Given $D:=\{z_t\}_{t=1}^n=\{(x_t,y_t)\}_{t=1}^n$, Kernel $K(\cdot,\cdot)$, regularization parameter $\lambda$, sub-sampling number $m$.}
\KwOut{ $\widehat{f}_{D,D_j^\ast,\lambda}(x)$.}
// Sequential sub-sampling\\
${D_j^\ast:=\{z_j,\dots,z_{j+m-1}\}:=\{\tilde{z}_i\}_{i=1}^m} \leftarrow$\textbf{Sub-sampling} ($m$, $\{z_i\}_{i=1}^n$);\label{sam}\\
// Calculate kernel $K_{nm}$, $K_{mm}$\\
$(K_{nm})_{ij}\leftarrow K(x_i,\widetilde{x}_j)$ for all $i=\{1,\cdots, n\}, j = \{1,\cdots,m\}$;\label{knm}\\
$(K_{mm})_{ij}\leftarrow K(\widetilde{x}_i,\widetilde{x}_j)$ for all $i=\{1,\cdots, m\}, j = \{1,\cdots,m\}$;\label{kmm}\\
// Calculate $\alpha$\\
$\alpha \leftarrow (K_{nm}^{\intercal}K_{nm}+\lambda \cdot n K_{mm} )^{\dagger}K_{nm}^{\intercal}y$;\label{alpha}\\
\Return $\widehat{f}_{D,D_j^\ast,\lambda}(x)\leftarrow \sum_{i=1}^m \alpha_i \cdot K(\widetilde{x}_i,x)$.
\end{algorithm}

As shown in Algorithm \ref{alg:seq sub}, there are two parameters, $m$ and $\lambda$, to be tuned. It should be mentioned that $\lambda$ is a model parameter that balances the bias and variance of the Nystr\"{o}m regularization, while $m$ is an algorithmic parameter that reflects the trade-off between computation burden and prediction accuracy.  Our analysis below will show that for appropriately selected $\lambda$, the learning performance of Nystr\"{o}m regularization is not sensitive to $m$, provided $m$ is not extremely small. This shows the feasibility and efficiency of utilizing sub-sampling to tackle  time series data. Practically, to reduce the computational burden, it is preferable to set $m=\sqrt{n}$ and then choose $\lambda$ according to the well-known cross-validation. Besides these two explicit parameters, there are also two hidden parameters in  Algorithm \ref{alg:seq sub},  the sub-sampling location ($j$ in Definition \ref{Def:seq-subsampling}) and  sub-sampling interval (the margin between two selected column which is $1$ in Definition \ref{Def:seq-subsampling})   in Algorithm \ref{alg:seq sub}. Our theoretical analysis and numerical results below will  show that different sub-sampling strategies do not affect the learning performance of Nystr\"{o}m regularization with sequential sub-sampling very much, provided the $\tau$-mixing property of the selected columns is guaranteed.

\section{Theoretical Behaviors}\label{Sec.Mainresult}
\qquad In this section, we present our main results on analyzing learning performances of  Nystr\"{o}m regularization with sequential sub-sampling for $\tau$-mixing time series.

\subsection{ Setup and Assumptions}
\qquad Our analysis is carried out in a standard random setting \citep{Alquier2012,Alquier2013}, where
$D=\{z_t\}_{t=1}^n$ is assumed to be drawn according to a  joint distribution
$$
   \rho_{1:n}:=\rho_1(\cdot)\times\rho_2(\cdot|z_1)\times\dots\times\rho_n(\cdot|z_{1:n-1})
$$
and $z_{1:k}=\{z_t\}_{t=1}^k$. Our first assumption in this section is on the distribution $\rho_{1:n}$.

\begin{assumption}\label{Assumption:marginal distributions}
  $\rho_1(\cdot)=\rho_2(\cdot|z_1)=\cdots=\rho_n(\cdot|z_{1:n-1})$, that is, the marginal distribution $\rho_t$ is independent of $t$.
\end{assumption}

It should be mentioned that  Assumption \ref{Assumption:marginal distributions} can be regarded as  the same distribution restriction of $\rho_{1:n}$ and  is essentially weaker than the strict stationarity \citep{Alquier2013}. In particular, it can be found in \citep{Sun2021} that there exist a joint distribution  $\rho^*_{1:n}$ which satisfies Assumption \ref{Assumption:marginal distributions} but is not stationary.  With the help of Assumption \ref{Assumption:marginal distributions}, we denote
$$
       \rho=\rho_1(\cdot)=\rho_2(\cdot|z_1)=\cdots=\rho_n(\cdot|z_{1:n-1}).
$$
Noting further $ z_t=(x_t,y_t)$, we can write $\rho=\rho_X\times\rho(y|x)$. Our aim is then to find a function $f$ to minimize the generalization error $\int_{Z}(f(x)-y)^2d\rho$, which is minimized by the well-known regression function
$$
         f_\rho(x)=\int_{\mathcal Y} y d\rho(y|x), \qquad x\in\mathcal X.
$$
Let $L^2_{\rho_{_X}}$ be the Hilbert space of $\rho_X$ square
integrable functions on $\mathcal X$, with norm denoted by
$\|\cdot\|_\rho$.
Our purpose is then to bound
$$
   \int_{Z}(f(x)-y)^2d\rho -  \int_{Z}(f_\rho(x)-y)^2d\rho=\|f-f_\rho\|_\rho^2,
$$
which makes the time series forecasting problem be similar as the standard least-squares regression setting in \citep{Gyorfi2002}. The only difference is that the independent assumption of samples is replaced by the following $\tau$-mixing assumption.

\begin{assumption}\label{Assumption:mixing}
 $D=\{z_t\}_{t=1}^n$ is a $\tau$-mixing sequence with mixing coefficient $\tau_j$.
\end{assumption}

The mixing condition is a standard assumption to describe the dependence of time series \citep{Doukhan1994,Bradley2005,Alquier2012,Alquier2013}, among which $\beta$-mixing \citep{Yu1994} and $\alpha$-mixing \citep{Modha1996} are widely used. From Property \ref{Property:3}, we obtain that $\tau$-mixing is  weaker than $\alpha$-mixing and therefore $\beta$-mixing, which implies that our results also hold for $\alpha$-mixing time series with a slight change of $\alpha$-mixing or $\beta$-mixing coefficients.  In this way, there are numerous time series  satisfying the $\tau$-mixing condition, including  the causal linear processes, functional autoregressive processes, and Markov kernel associated to expanding maps \citep{Dedecker2004,Dedecker2005}.

Our next assumption  is to quantify the regularity of $f_\rho$. For this purpose, we introduce the well known integral operator associated with the Mercer kernel $K$. Define
$L_K$ on ${\mathcal H}_K$  (or $L_{\rho_X}^2$) by
$$
         L_K(f) =\int_{\mathcal X} K_x f(x)d\rho_X, \qquad f\in {\mathcal
          H}_K \quad (\mbox{or}\ f\in L_{\rho_X}^2).
$$
It is easy to check that $L_K$ is a compact and positive operator.  Denote further
 $L_K^r$ by the  $r$-th power of $L_K: L_{\rho_X}^2 \to
L_{\rho_X}^2$. We can get the following assumption.

\begin{assumption}\label{Assumption:regularity}
There exists an $r>0$ such that
\begin{equation}\label{regularitycondition}
         f_\rho=L_K^r (h_\rho),~~{\rm for~some}~  h_\rho\in L_{\rho_X}^2.
\end{equation}
\end{assumption}

Let $\{(\sigma_\ell,\phi_\ell)\}_{\ell=1}^\infty$ be the normalized eigen-pairs of $L_{K}$ with
$\sigma_1\geq\sigma_2\geq\dots\geq0$.
The Mercer expansion \citep{Aronszajn1950} shows
$$
    K(x,x')=\sum_{\ell=1}^\infty \phi_\ell(x)\phi_\ell(x')
           =\sum_{\ell=1}^\infty \sigma_\ell\frac{\phi_\ell(x)}{\sqrt{\sigma_\ell}}\frac{\phi_\ell(x')}{\sqrt{\sigma_\ell}}
$$
Then
 (\ref{regularitycondition}) is equivalent to
\begin{equation}\label{Regularity-1111}
      f_\rho=\mathcal L_K^rh_\rho=\sum_{\ell=1}^\infty \sigma^{r-1/2}_\ell\langle  h_\rho,\phi_\ell/\sqrt{\sigma_\ell}\rangle_\rho  \phi_\ell.
\end{equation}
That is,  the parameter $r$  in Assumption \ref{Assumption:regularity} determines the smoothness of the target functions. In particular, (\ref{regularitycondition}) with $r=1/2$ implies $f_\rho\in\mathcal H_K$ and  (\ref{regularitycondition}) with $r=0$ yields $f_\rho\in L_{\rho_X}^2$.
 Generally speaking, the larger value of $r$ is, the smoother the function $f_\rho$ is.

Our final assumption is to quantify the capacity of the RKHS $\mathcal H_K$ via the
  effective dimension $\mathcal{N}(\lambda)$ \citep{Zhang2005},  which is
defined to be the trace of the operator $( L_K+\lambda I)^{-1}L_K,$
that is,
$$
        \mathcal{N}(\lambda)={\rm Tr}((\lambda I+L_K)^{-1}L_K),  \qquad \lambda>0.
$$
To obtain explicit learning rates for algorithm, we give an assumption on the decaying rate of the effective dimension as follow.
\begin{assumption}\label{Assumption:capacity}
 There exists some $s\in(0,1]$ such that
\begin{equation}\label{assumption on effect}
      \mathcal N(\lambda)\leq C_0\lambda^{-s},
\end{equation}
where $C_0\geq 1$ is  a constant independent of $\lambda$.
\end{assumption}

Condition (\ref{assumption on effect}) with $s=1$ is always satisfied by taking $C_0=\mbox{Tr}(L_K)\leq\kappa^2$.
For $0<s<1$, let
\[
        L_K=\sum_{\ell=1}^\infty \lambda_\ell\langle\cdot,\phi_{\ell}\rangle_K \phi_{\ell}
\]
be the spectral decomposition. If $\sigma_\ell\leq c_0n^{-1/s}$
for some   $c_0\geq 1$, then
\begin{eqnarray*}
       \mathcal N(\lambda)
       &=&
       \sum_{\ell=1}^\infty\frac{\sigma_\ell}{\lambda+\sigma_\ell}
       \leq
       \sum_{\ell=1}^\infty\frac{c_0\ell^{-1/s}}{\lambda+c_0\ell^{-1/s}}
       =
       \sum_{\ell=1}^\infty\frac{c_0}{c_0+\lambda\ell^{1/s}}\\
       &\leq&
     \int_0^\infty\frac{c_0}{c_0+\lambda t^{1/s} }dt
       =
       \mathcal O(\lambda^{-s}),
\end{eqnarray*}
which implies that Assumption \ref{Assumption:capacity} is more general than the widely used eigenvalue decaying assumption in the literature \citep{Caponnetto2007,Zhang2015}.

In summary, there are totally five assumptions in our analysis, among which Assumptions \ref{Assumption:kernel} and \ref{Assumption:capacity} concern  properties of the kernel $K$, Assumptions \ref{Assumption:marginal distributions} and \ref{Assumption:regularity} refer to   properties of the  joint distribution $\rho$ and Assumption \ref{Assumption:mixing} describes the mixing property of time series. There are numerous time series satisfying the above assumptions. Taking the non-parametric auto-regression for example. Assume
$$
      z_t=f_0(z_{t-1},\dots,z_{t-d})+\varepsilon_t,
$$
where $\rho_1(\cdot)=\rho_2(\cdot|z_1)=\cdots=\rho_n(\cdot|z_{1:n-1})$ are the uniform distribution,
$\varepsilon_t$ is independent of $z_0$ and $ E[\varepsilon_t]=0$, and $f_0\in \mathcal H_K$ with $K$ being a Sobolev kernel. Then it can be found in \cite[p.102]{Doukhan1994} (see also \cite[Proposition 1]{Chen1998}) that Assumptions 1-5 hold
 under certain additional restrictions on $\{\varepsilon_t\}$ and  $f_0$.

\subsection{Learning rate analysis}

\qquad By the aid of above assumptions, we are in a position to present our main results.
Our first main result is the Nystr\"{o}m regularization with sequential sub-sampling given in Algorithm \ref{alg:seq sub} for geometrical $\tau$-mixing time series.

\begin{theorem}\label{Theorem:error-for-exp}
Let $0<\delta\leq 1/2$ and $D_j^*$ be an arbitrary sequential sub-sampling of size $m$ for $D$.
Under Assumptions 1-5 with $\frac12\leq r\leq1$ and $0<s\leq 1$,  if (\ref{def-Galpha}) holds, $\lambda\sim\left(\frac{n}{ (\log n)^{1/\gamma_0}}\right)^{1/(2r+s)}$
  and
\begin{equation}\label{bound-on-m}
      m\geq n^\frac{s+1}{2r+s}(\log m)^{1/\gamma_0}(\log n)^{-\frac{s+1}{(2r+s)\gamma_0}},
\end{equation}
 then with confidence $1-\delta$, there holds
\begin{eqnarray}\label{Error-analysis-2-exp}
    \|f_{D,D_j^*,\lambda}-f_\rho\|_\rho
    \leq
     C^* n^{-\frac{r}{2r+s} }( {1+\log (n)})^{\frac{r}{(2r+s)\gamma_0}} {\log^4\frac2\delta},\qquad \forall j\in[1,n-m+1],
\end{eqnarray}
 where
$C^*$ is a constant independent of $m,n,j$ or $\delta$.
\end{theorem}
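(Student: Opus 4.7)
The plan is to follow the integral operator route of \citet{Rudi2015}, combined with the second-order decomposition of \citet{Guo2017}, but to substitute every i.i.d.\ concentration step by the Banach-valued Bernstein inequality for $\tau$-mixing sequences of \citet{Blanchard2019}. First I would introduce the sampling operator $S_D:\mathcal H_K\to\mathbb R^n$, its adjoint $S_D^\ast$, and the empirical integral operator $L_{K,D}:=S_D^\ast S_D$, together with the orthogonal projection $P_{D_j^\ast}$ in $\mathcal H_K$ onto $\mathcal H_{D_j^\ast}$. The standard Nystr\"om identity
$$
   f_{D,D_j^\ast,\lambda}=(P_{D_j^\ast}L_{K,D}P_{D_j^\ast}+\lambda I)^{\dagger}P_{D_j^\ast}S_D^\ast y_D
$$
together with the triangle inequality yields
$$
   \|f_{D,D_j^\ast,\lambda}-f_\rho\|_\rho\leq \|f_{D,D_j^\ast,\lambda}-f_{D,\lambda}\|_\rho+\|f_{D,\lambda}-f_\rho\|_\rho,
$$
where $f_{D,\lambda}$ is the full KRR solution, and the first (Nystr\"om) term is further controlled by the residual $\|(I-P_{D_j^\ast})(L_K+\lambda I)^{1/2}\|^2$ plus a multiple of the KRR error, in the manner of Rudi et al.

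Next I would bound the KRR term via
$$
   f_{D,\lambda}-f_\rho=(L_{K,D}+\lambda I)^{-1}\bigl[(S_D^\ast y_D-L_{K,D}f_\rho)-\lambda f_\rho\bigr],
$$
and then insert the second-order decomposition of $(L_{K,D}+\lambda I)^{-1}(L_K+\lambda I)$ from \citet{Guo2017} to transfer the analysis to the population operator. This reduces the problem to controlling, in high probability, the Hilbert--Schmidt quantity $\|(L_K+\lambda I)^{-1/2}(L_K-L_{K,D})\|_{\mathrm{HS}}$ and the $\mathcal H_K$-vector $\|(L_K+\lambda I)^{-1/2}(S_D^\ast y_D-L_{K,D}f_\rho)\|_K$. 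Both are sums of centred Banach-valued random variables along the $\tau$-mixing sequence $\{z_t\}_{t=1}^n$, so the Blanchard--Zadorozhnyi inequality applies after bounding their second moments via Assumption \ref{Assumption:capacity}; the geometric decay (\ref{def-Galpha}) translates into an effective sample size of order $n/(\log n)^{1/\gamma_0}$, which is precisely the source of the logarithmic factor appearing in the final rate.

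The genuinely new step is the control of the Nystr\"om residual $\|(I-P_{D_j^\ast})(L_K+\lambda I)^{1/2}\|$. I would reduce it, as in Rudi et al., to bounding the operator norm of $(L_K+\lambda I)^{-1/2}(L_K-L_{K,D_j^\ast})(L_K+\lambda I)^{-1/2}$, where $L_{K,D_j^\ast}$ is the empirical operator built from the $m$ sub-sampled points. Here Property \ref{Property:1} guarantees that the sub-sample inherits the $\tau$-mixing coefficients $\tau_j$, and Property \ref{Property:mixing} upgrades $\{K_{\tilde x_i}\}_{i=1}^m$ to a Banach-valued $\tau$-mixing sequence, so that the Blanchard--Zadorozhnyi inequality applies once more, this time to a sequence of length $m$. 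A short calculation then shows $\|(I-P_{D_j^\ast})(L_K+\lambda I)^{1/2}\|^2\leq C\lambda$ with the required confidence as soon as $m\geq C'\mathcal N(\lambda)(\log m)^{1/\gamma_0}$; substituting Assumption \ref{Assumption:capacity} and the prescribed $\lambda\sim (n/(\log n)^{1/\gamma_0})^{-1/(2r+s)}$ reproduces exactly condition (\ref{bound-on-m}).

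Finally the claimed rate follows by balancing the bias $O(\lambda^{r})$ coming from Assumption \ref{Assumption:regularity} (with the saturation $r\leq 1$ entering through the estimate of $\|L_K^{-r}\lambda f_\rho\|$) against the variance of order $\sqrt{\mathcal N(\lambda)(\log n)^{1/\gamma_0}/n}$ at the chosen $\lambda$, which yields exactly (\ref{Error-analysis-2-exp}). I expect the main obstacle to be the third step: one must ensure that the high-probability bound on the Nystr\"om residual holds uniformly over the sub-sampling index $j\in[1,n-m+1]$, which requires either verifying that the moment conditions entering the Bernstein inequality depend only on the mixing coefficients (and not on the starting position $j$), or absorbing a careful union bound without inflating the logarithmic factor beyond the stated exponent $r/((2r+s)\gamma_0)$. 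Keeping track of the truncation level inside the $\tau$-mixing Bernstein inequality so that the final logarithmic power is $1/\gamma_0$ rather than $2/\gamma_0$ is the second delicate point of the argument.
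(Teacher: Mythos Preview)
Your high-level plan---integral operator framework, second-order decomposition of $(L_{K,D}+\lambda I)^{-1}(L_K+\lambda I)$, and the Banach-valued Bernstein inequality of \citet{Blanchard2019} for both the full sample and the sequential sub-sample---matches the paper's ingredients. However, two points diverge from the paper, and one of them is a genuine gap.

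\textbf{Different decomposition.} The paper does \emph{not} pass through the full KRR estimator $f_{D,\lambda}$. Instead it introduces the noise-free Nystr\"om target $f_{D,D_m,\lambda}^{*}:=g_{D_m,\lambda}(L_{K,D})L_{K,D}f_\rho$ and splits
\[
\|f_{D,D_m,\lambda}-f_\rho\|_\rho\leq \mathcal A(D,\lambda,m)+\mathcal S(D,\lambda,m)+\mathcal C(D,\lambda,m),
\]
an approximation/sample/computational error decomposition, each piece then being bounded by the operator products $\mathcal Q_{D,\lambda}$, $\mathcal Q_{D,\lambda}^{*}$ and $\mathcal Q_{D_m,\lambda}$ together with $\mathcal P_{D,\lambda}$. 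This route exploits the smoothness $r>1/2$ directly through $(I-P_{D_m})^{2r}$ and Lemma~\ref{Lemma:operator inequality general}, whereas your comparison to $f_{D,\lambda}$ would still need an argument showing that $\|f_{D,D_j^*,\lambda}-f_{D,\lambda}\|_\rho$ inherits the extra factor $\lambda^{r-1/2}$ when $r>1/2$; ``plus a multiple of the KRR error'' is not enough by itself.

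\textbf{The condition on $m$.} Your claim that ``$m\geq C'\mathcal N(\lambda)(\log m)^{1/\gamma_0}$ \dots reproduces exactly condition (\ref{bound-on-m})'' is wrong. With $\lambda\sim n_\gamma^{-1/(2r+s)}$ and Assumption~\ref{Assumption:capacity} one gets $\mathcal N(\lambda)\lesssim n_\gamma^{s/(2r+s)}$, so your condition reads $m_\gamma\gtrsim n_\gamma^{s/(2r+s)}$, whereas (\ref{bound-on-m}) is $m_\gamma\gtrsim n_\gamma^{(s+1)/(2r+s)}$. In fact, the very route you describe---reducing to boundedness of $\mathcal Q_{D_m,\lambda}$ via the second-order decomposition and then invoking the Banach-valued Bernstein bound on $\mathcal R_{D_m,\lambda}$---forces $\mathcal B(m_\gamma,\lambda)\lesssim\sqrt{\lambda}$, i.e.\ $m_\gamma\gtrsim \mathcal N(\lambda)/\lambda\sim n_\gamma^{(s+1)/(2r+s)}$, which is precisely the paper's (\ref{bound-on-m}). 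Getting away with $m_\gamma\gtrsim\mathcal N(\lambda)$ would require an operator-norm (not Hilbert--Schmidt) concentration inequality for $\tau$-mixing sequences, which the paper does not have and you do not supply.

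\textbf{The ``main obstacle'' you anticipate is not one.} Uniformity in $j$ is automatic: by Property~\ref{Property:1} every sequential block $\{z_j,\dots,z_{j+m-1}\}$ has the \emph{same} $\tau$-mixing coefficients, so the Bernstein bound and all ensuing constants are independent of $j$; no union bound is taken. The paper simply proves the estimate for an arbitrary fixed $D_m$ and observes that nothing depends on its starting index.
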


If the samples in $D$ are i.i.d. drawn, the optimal learning rates for KRR $f_{D,\lambda}$, defined by (\ref{KRR}),  have been established in \citep{Caponnetto2007} in the sense that there exists a distribution $\rho^*$ satisfying Assumptions \ref{Assumption:kernel}, \ref{Assumption:regularity} and \ref{Assumption:capacity} such that with high probability,
$$
       \|f_{\rho^*} -f_{D,\lambda}\|_{\rho^*}\geq C_1^*n^{-\frac{r}{2r+s}}
$$
for some constant $C_1^*>0$. Noting that i.i.d. samples always satisfy Assumptions \ref{Assumption:marginal distributions} and \ref{Assumption:mixing}, the
derived error estimate in (\ref{bound-on-m}) is  optimal up to a logarithmic factor. Therefore, the derived learning rate cannot be essentially improved further.  Furthermore, it should be mentioned that  Theorem \ref{Theorem:error-for-exp} is
an   extension of  \citep[Theorem 1]{Rudi2015}, where optimal learning rates of plain Nystr\"{o}m  regularization for i.i.d. samples are deduced, since Theorem \ref{Theorem:error-for-exp} with $\gamma_0\rightarrow\infty$ coincides with \citep[Theorem 1]{Rudi2015}. In particular, setting $\mathcal M_{1:n}$ be the set of distribution $\rho_{1:n}=\{\rho_1,\dots,\rho_n\}$ that satisfying Assumptions 1-5, we can get the following corollary directly.

\begin{corollary}\label{Corollary:optimal-exp}
Let $D_j^*$ be an arbitrary sequential sub-sampling of size $m$ for $D$. If (\ref{def-Galpha}), (\ref{bound-on-m}) hold, and $\lambda\sim\left(\frac{n}{ (\log n)^{1/\gamma_0}}\right)^{1/(2r+s)}$, then for any $j\in[1,n-m+1]$, there holds
$$
    C^*_1n^{-\frac{2r}{2r+s} }
    \leq  \sup_{\rho_{1:n}\in\mathcal M_{1:n}} E[\|f_{D,D_j^*,\lambda}-f_\rho\|_{\rho_n}^2]
    \leq
     C^*_2 n^{-\frac{2r}{2r+s} }( {1+\log (n)})^{\frac{2r}{(2r+s)\gamma_0}},
$$
where $C^*_2$ is  a constant independent of $m,n,j$.
\end{corollary}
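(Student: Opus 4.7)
The plan is a two-sided argument: the upper bound will be obtained by integrating the exponential-type tail in Theorem \ref{Theorem:error-for-exp} to a second-moment bound, while the lower bound will be obtained by exhibiting i.i.d. distributions inside $\mathcal{M}_{1:n}$ and invoking the classical minimax lower bound for kernel regression under Assumptions \ref{Assumption:regularity} and \ref{Assumption:capacity}.

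For the upper bound, set $A_n := n^{-r/(2r+s)}(1+\log n)^{r/((2r+s)\gamma_0)}$. Theorem \ref{Theorem:error-for-exp} gives, for every $\delta\in(0,1/2]$, a tail bound of the form $P(Z > C^{*} A_n\,\log^4(2/\delta))\le\delta$, where $Z:=\|f_{D,D_j^{*},\lambda}-f_\rho\|_\rho$. Setting $u=C^{*}A_n(\log(2/\delta))^4$ (equivalently $\delta=2\exp(-(u/(C^{*}A_n))^{1/4})$) and plugging into the layer-cake formula
\[
E[Z^2]=2\int_0^\infty u\,P(Z>u)\,du,
\]
the exponential decay $P(Z>u)\lesssim \exp(-c(u/A_n)^{1/4})$ makes the integral converge to $O(A_n^2)$ because $\int_0^\infty u\exp(-c u^{1/4})du<\infty$. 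The regime $\delta\in(1/2,1]$ contributes only through a deterministic ceiling on $Z$: from the closed-form (\ref{Analytic-solution})--(\ref{Analytic-solution1}) one deduces $\|f_{D,D_j^{*},\lambda}\|_K\le M/\sqrt{\lambda}$, which together with $\kappa\le 1$ yields a uniform bound $Z\le C$; this contribution is absorbed into the constant. Putting these together produces $E[Z^2]\le C^{*}_2\,n^{-2r/(2r+s)}(1+\log n)^{2r/((2r+s)\gamma_0)}$, which is the claimed upper bound.

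For the lower bound, the observation is that any i.i.d. product law $\rho^{\otimes n}$ whose one-sample marginal $\rho$ satisfies Assumptions \ref{Assumption:regularity} and \ref{Assumption:capacity} lies in $\mathcal{M}_{1:n}$. Indeed, Assumption \ref{Assumption:marginal distributions} holds by construction, and the $\tau$-mixing coefficients of an i.i.d. sequence satisfy $\tau_j=0$ for $j\ge1$, so (\ref{def-Galpha}) is satisfied vacuously for any choice of $b_0,c_0,\gamma_0$. Hence $\mathcal{M}_{1:n}$ contains the standard i.i.d. class used in the classical minimax analysis, and for the concrete estimator $f_{D,D_j^{*},\lambda}$
\[
\sup_{\rho_{1:n}\in\mathcal{M}_{1:n}}E\bigl[\|f_{D,D_j^{*},\lambda}-f_\rho\|_{\rho_n}^2\bigr]
\ge
\inf_{\hat f}\sup_{\rho\text{ i.i.d.}}E\bigl[\|\hat f-f_\rho\|_\rho^2\bigr]
\ge C^{*}_1 n^{-2r/(2r+s)},
\]
where the last inequality is the minimax lower bound of Caponnetto and De Vito (2007) for the regularity--capacity pair $(r,s)$. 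Because $\rho_n=\rho$ under Assumption \ref{Assumption:marginal distributions}, the norms $\|\cdot\|_{\rho_n}$ and $\|\cdot\|_\rho$ agree.

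Both bounds are uniform in $j$: Theorem \ref{Theorem:error-for-exp} already delivers a $j$-uniform upper bound, and for an i.i.d. source the joint law of $(D,D_j^{*})$ is invariant under the choice of $j$, so the lower bound is identical for every $j\in[1,n-m+1]$. The main obstacle I anticipate is the careful bookkeeping in the tail-to-expectation passage: one must split the integral at $\delta=1/2$, use the deterministic ceiling on $Z$ from the pseudo-inverse formula to kill the complementary regime, and verify that the constant produced by $\int_0^\infty u\exp(-cu^{1/4})du$ combines cleanly with $C^{*}$ from Theorem \ref{Theorem:error-for-exp} to give a single constant $C^{*}_2$ independent of $n,m,j$.
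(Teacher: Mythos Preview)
Your approach is correct and matches what the paper (implicitly) does: the text surrounding the corollary says it follows ``directly'' from Theorem~\ref{Theorem:error-for-exp} together with the Caponnetto--De~Vito minimax lower bound, after observing that i.i.d.\ samples satisfy Assumptions~\ref{Assumption:marginal distributions} and~\ref{Assumption:mixing}. One small correction: your deterministic ceiling $Z\le C$ is not actually an $n$-free constant, since $\|f_{D,D_j^*,\lambda}\|_K\le M/\sqrt{\lambda}$ grows with $n$ under the stated choice of $\lambda$; but the ceiling is unnecessary anyway, because on the range $u\le u_0:=C^*A_n(\log 4)^4$ the trivial bound $P(Z>u)\le 1$ already gives $\int_0^{u_0}u\,du=O(A_n^2)$, so the layer-cake integral converges to $O(A_n^2)$ without invoking any pointwise bound on $Z$.
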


There are two tunable  parameters, $m$ and $\lambda$, in Nystr\"{o}m regularization. Theoretically speaking  \citep{Rudi2015},   the regularization parameter $\lambda$ is introduced to balance  the bias and variance,  while the sub-sampling size $m$ is involved to balance computational cost and generalization error. The problem is, however, that such an ideal assertion neglects the interaction between parameters. In fact, for an arbitrary fixed $\lambda\geq 0$, the sub-sampling size $m$ can also be utilized to balance the bias and variance, just as \citep{Lin2021} did. In our theorem, we do not consider the interaction between $\lambda $ and $m$, and only use $\lambda$  for the bias-variance trade-off purpose.  Under this circumstance, our derived generalization error in (\ref{Error-analysis-2-exp}) is non-increasing with respect to $m$.

Finally, we should highlight that the restriction on $m$ in (\ref{bound-on-m}) is a bit strict, which makes Theorem \ref{Theorem:error-for-exp} trivial for $r=1/2$ in the sense that $m$ should be not smaller  than $n$. The main reason for this is due to that we adopt the integral algorithm operator based on second order decomposition for operator difference  \citep{Lin2017} in our proof. We believe it can be relaxed by using the approach developed by  \citet{Lin2020} that estimated the operator difference by using some operator concentration inequality. Furthermore, compared with our numerical results in Section \ref{Sec.experiment} in which an extremely small $m$ (such as $m=6$ for $n=4000$) is sufficient for yielding a comparable generalization error with KRR, our theoretical result  in (\ref{bound-on-m}) is too pessimistic. The main reason for such an inconsistency is that our theory holds for all distributions satisfying Assumptions 1-5, which is indeed  a worst case analysis, while our numerical results only focus on some specific distributions.

Theorem \ref{Theorem:error-for-exp} established learning rate analysis for geometrical $\tau$-mixing time series, which is an extension of analysis for the classical i.i.d. samples \citep{Rudi2015}.
In our next theorem, we prove that similar results also hold for Nystr\"{o}m regularization with sequential sub-sampling  for algebraic $\tau$-mixing time series.

\begin{theorem}\label{Theorem:error-for-alg}
Let $0<\delta\leq 1/2$ and $D_j^*$ be an arbitrary sequential sub-sampling of size $m$ for $D$.
Under Assumptions 1-5 with $\frac12\leq r\leq1$ and $0<s\leq 1$,
   if (\ref{def-Galpha11}) holds, $\lambda=n^{-\frac{2\gamma_1}{2\gamma_1(2r+s)+2r+1}}$ and
\begin{equation}\label{bound-on-m-alg}
      m\geq n^{\frac{2\gamma_1(s+1)+2}{2\gamma_1(2r+s)+2r+1}},
\end{equation}
 then with confidence $1-\delta$, there holds
\begin{eqnarray}\label{Error-analysis-2-exp}
    \|f_{D,D_j^*,\lambda}-f_\rho\|_\rho
    \leq
     \hat{C}n^{-\frac{2\gamma_1r}{2\gamma_1(2r+s)+2r+1}} {\log^4\frac2\delta}, \qquad \forall j\in[1,n-m+1],
\end{eqnarray}
 where
$\hat{C}$ is a constant independent of $m,n,j $ or $\delta$.
\end{theorem}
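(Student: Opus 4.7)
The proof of Theorem \ref{Theorem:error-for-alg} will follow the same skeleton as that of Theorem \ref{Theorem:error-for-exp}, so the plan is to reuse the machinery assembled for the geometric case and replace only the concentration estimates with their algebraic-mixing counterparts. First, I will invoke the same error decomposition obtained via the integral operator approach together with the second-order decomposition for operator differences of \citet{Guo2017}. This splits $\|f_{D,D_j^*,\lambda}-f_\rho\|_\rho$ into a deterministic approximation error, bounded by $\lambda^r$ under Assumption \ref{Assumption:regularity}, several sample operator deviation terms such as $\|(L_K+\lambda I)^{-1/2}(L_K-L_{K,D})\|$ and $\|(L_K+\lambda I)^{-1/2}(S_D^\top y_D - L_K f_\rho)\|$, and a Nystr\"om projection term controlled through the sub-sampled operator $L_{K,D_j^*}$.

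Second, each sample operator deviation is estimated using the Banach-valued Bernstein inequality for $\tau$-mixing sequences from \citet{Blanchard2019}, but evaluated under the algebraic decay (\ref{def-Galpha11}) rather than the exponential decay (\ref{def-Galpha}). Under algebraic $\tau$-mixing, the standard blocking argument used to derive the Bernstein inequality produces an effective sample size of order $n^{2\gamma_1/(2\gamma_1+1)}$ instead of $n/(\log n)^{1/\gamma_0}$, so that each deviation bound carries an extra polynomial (rather than logarithmic) loss. For the Nystr\"om projection term, I will use Property \ref{Property:mixing}, which transfers the $\tau$-mixing property to $\{K_{\tilde z_i}\}_{i=1}^m$ through Assumption \ref{Assumption:kernel}, and then apply the same Bernstein inequality to the sub-sampled Gram-type operator; this is where the sub-sampling lower bound (\ref{bound-on-m-alg}) is used to ensure that the projection error is dominated by $\lambda^r$.

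Third, assembling these probabilistic estimates and imposing $\lambda\sim n^{-2\gamma_1/[2\gamma_1(2r+s)+2r+1]}$ gives the bias-variance balance
\begin{equation*}
\lambda^{2r}\sim\frac{\mathcal N(\lambda)}{n^{2\gamma_1(2r+s)/[2\gamma_1(2r+s)+2r+1]}}\sim\frac{\lambda^{-s}}{n^{2\gamma_1(2r+s)/[2\gamma_1(2r+s)+2r+1]}},
\end{equation*}
from which the target rate $n^{-2\gamma_1 r/[2\gamma_1(2r+s)+2r+1]}$ falls out. As a sanity check, letting $\gamma_1\to\infty$ recovers the i.i.d. rate $n^{-r/(2r+s)}$ of Theorem \ref{Theorem:error-for-exp} (with its logarithmic correction disappearing), which is consistent with the intuition that faster mixing yields more effective samples.

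The main obstacle will be the explicit tracking of the algebraic mixing cost through the blocking argument inside the Banach-valued Bernstein inequality, together with the careful choice of the block length so that the residual tail $n\tau_q$ is balanced against the exponential term; any suboptimal choice here propagates into a worse exponent in the final rate. A related subtlety is verifying that the chosen $m$ in (\ref{bound-on-m-alg}) is large enough for the second-order decomposition of $(L_{K,D_j^*}+\lambda I)^{-1}-(L_K+\lambda I)^{-1}$ to remain valid under the weaker algebraic mixing, since the same second-order approach used by \citet{Lin2017} ties $m$ to the sample complexity of the operator concentration for the sub-sampled kernel, and this is precisely what forces the polynomial-in-$n$ lower bound on $m$ rather than a milder condition.
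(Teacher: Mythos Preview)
Your proposal is correct and follows essentially the same route as the paper: it reuses the error decomposition and second-order operator-difference machinery from Proposition \ref{Proposition:Error decomposition detailed} (which yields the general bound in Theorem \ref{Theorem:boundwith-effective}), replaces the geometric effective sample size $n_\gamma\sim n/(\log n)^{1/\gamma_0}$ by the algebraic one $n_\gamma=(c_2\lambda\mathcal N(\lambda))^{1/(2\gamma_1+1)}n^{2\gamma_1/(2\gamma_1+1)}$ coming from the Banach-valued Bernstein inequality of \citet{Blanchard2019}, and then specializes to the stated choice of $\lambda$ and the lower bound on $m$. One small remark: the paper does not redo the blocking argument but simply invokes the ready-made bound of \citep[Lemma~4.1]{Blanchard2019} for the algebraic case, so you need not track the block length explicitly.
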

As ${\gamma_1} \rightarrow {\infty}$, meaning that $D$ consists i.i.d. samples,  (\ref{Error-analysis-2-exp}) becomes
$$
    \|f_{D,D_j^*,\lambda}-f_\rho\|_\rho
    \leq
     \hat{C}n^{-\frac{r}{2r+s}} {\log^4\frac2\delta}, \qquad \forall j\in[1,n-m+1],
$$
which is the optimal learning rates for KRR. This shows that (\ref{Theorem:error-for-alg}) is a reasonable extension of the classical results for KRR \citep{Caponnetto2007} and Nystr\"{o}m regularization \citep{Rudi2015} for i.i.d. samples. It should be mentioned that for $\gamma_1<\infty$, the derived learning rate is always worse than that for i.i.d. samples. The main reason is that the dependence nature of samples sometimes reduces  the effective samples, just as \citep{Yu1994} did for $\beta$-mixing sequences and \citep{Modha1996} did for $\alpha$-mixing sequences.

%\subsection{Related Work}
\section{Related Work}\label{Sec.Related Work}
\qquad In this section, we present some related work to highlight the novelty of our results. For this purpose, we divide our presentation into three parts: related work on scalable kernel methods for i.i.d. samples, related work on learning rates analysis of kernel methods for non-i.i.d. samples and related work on scalable kernel methods for non-i.i.d. samples.

\subsection{Scalable kernel methods for massive data}
\qquad Along with the development of data mining, data of massive size are  usually collected for  learning purpose. Scalable learning algorithms that can tackle these massive data are  highly desired in the community of machine learning and thus numerous scalable schemes including the distributed learning \citep{Zhang2015,Shi2019}, localized SVM \citep{Meister2016,Thomann2017} and
learning with sub-sampling \citep{Williams2000,Gittens2016} were developed to equip kernel methods to reduce the computational burden.  For example,
learning with sub-sampling firstly  selects centers
of kernel with small size either in a random manner or by computing some leverage scores in   a data dependent  way, and then deduces  the final estimator based on the selected centers. The feasibility of these scalable variants have been rigorously verified in \citep{Zhang2015,Rudi2015,Meister2016,Yang2017,Lin2017,Wang2021}, provided the samples are i.i.d. drawn.

The most related work  on Nystr\"{o}m regularization for i.i.d. samples is \citep{Rudi2015}, in which learning rates of  Nystr\"{o}m regularization with both plain sub-sampling and leverage scores were derived.
Our results can be regarded as an extension of the interesting work \citep{Rudi2015} from i.i.d. samples to $\tau$-mixing  time series. It should be mentioned that there are mainly three differences between our work and \citep{Rudi2015}, although some important tools (Lemmas \ref{Lemma:Projection general} and \ref{Lemma:operator inequality general} below) for proofs   are borrowed from  \citep{Rudi2015}. At first, we are interested in developing Nystr\"{o}m regularization for time series, which requires totally different sub-sampling mechanism. In particular, we propose  a novel sequential sub-sampling approach to equip Nystr\"{o}m regularization to guarantee the mixing property of the selected columns of kernel matrix. Then, due to the non-i.i.d. nature of time series, the well developed integral operator  approach in \citep{Caponnetto2007,Lin2017,Guo2017} for i.i.d. samples is unavailable. We then turn to utilizing  a Banach-valued Bernstein inequality  established in a recent work \citep{Blanchard2019} for $\tau$-mixing sequences to derive tight bounds for differences between integral operators and their empirical counterparts.  Finally,  besides establishing almost optimal learning rates for the developed Nystr\"{o}m regularization, we numerically  find that learning with sub-sampling is more suitable for time series than i.i.d. samples in the sense that the former requires less sub-sampling ratio due to the dependent nature of samples. In fact, our toy simulations and real data experiments show that a sub-sampling ratio not larger than 0.05 is good enough to maintain the learning performance of KRR for time series, illustrating that Nystr\"{o}m regularization is practically feasible and efficient for massive time series.

\subsection{Learning performance of kernel methods for weak dependent samples}

\qquad It is impossible to quantify the learning rates of kernel methods for time series without presenting any restrictions on the dependence among samples \citep{Chen1998}, an extreme case of which is that all   samples in the data set are identical. Therefore, some mixing properties \citep{Doukhan1994} concerning the weak dependence among samples should be imposed on time series. $\alpha$-mixing \citep{Modha1996}, $\beta$-mixing \citep{Yu1994} and $\tau$-mixing \citep{Dedecker2004} are three most widely used conditions in learning theory.

Based on the well developed Bernstein-type inequality for $\beta$-mixing sequences \citep{Yu1994}, $\alpha$-mixing sequences \citep{Modha1996}, and $\tau$-mixing sequences \citep{Hang2017}, learning rates of kernel methods for $\alpha$-mixing, $\beta$-mixing and $\tau$-mixing sequences have been established in \citep{Alquier2012,Mcdonald2017}, \citep{Xu2008,Steinwart2009} and \citep{Dedecker2005,Hang2017}, respectively.
Unfortunately,  the derived  learning rates   for non-i.i.d data are always worse than those for i.i.d. samples.
This is mainly due to the dependence of mixing sequences, which reduces the number of valid   samples and makes the established Bernstein-type inequality be not so tight as that for i.i.d. samples.

The interesting work \citep{Sun2010}, to the best of our knowledge, is the first result to derive  learning rates for non-i.i.d. data via using the  integral operator approach rather than Bernstein inequality. As a result, the learning rates of KRR for $\alpha$-mixing sequences can achieve the optimal learning rates for i.i.d. samples, provided the $\alpha$-mixing sequences decay sufficiently fast and $s=1$ in Assumption \ref{Assumption:capacity}. This provides a springboard to study the learning performance of kernel methods for non-i.i.d. data, although the approach developed in \citep{Sun2010} cannot be extended for the general case $0<s\leq 1$ directly. Recently, \citep{Blanchard2019} established a Banach-valued Bernstein inequality for $\tau$-mixing sequences and developed a novel integral approach to deduce almost optimal learning rates for kernel-based spectral algorithms, provided the $\tau$-mixing coefficients decay sufficiently fast. Compared with \citep{Blanchard2019}, there are two novelties in our work. From the theoretical side,  we consider scalable variant of KRR  to reduce its   computational burden, while the analysis in \citep{Blanchard2019} is for KRR. As a result, our results require novel proof skills including the projection strategy for sub-sampling. In a word, our proofs skill can be regarded as a combination of \citep{Rudi2015} for projection strategy, \citep{Guo2017} for second-order decomposition of operator differences and \citep{Blanchard2019} for Banach-valued Bernstein inequality. From the numerical side, we conduct both toy simulations and two real time-series forecasting analysis to verify our theoretical findings, but \citep{Blanchard2019}  is only in a theoretical flavor. It should  be highlighted that the numerical experiments in this paper are important in the sense that they reveal the low sub-sampling ratio in Nystr\"{o}m regularization for time series and imply that Nystr\"{o}m regularization can be used as a noise-extractor in practice.

\subsection{Scalable kernel methods for massive time series}
\qquad There are numerous learning approaches developed to tackle massive time series, including scalable  bootstrap \citep{Laptev2012}, sketching \citep{Indyk2000}, and fast approximation correlation \citep{Mueen2010}. Though these approaches have been verified  to be feasible in practice, there lack solid theoretical results to demonstrate the running mechanism and reasonability, making  these methods sensitive to data.

In our recent work \citep{Sun2021}, we propose a distributed kernel ridge regression (DKRR) to handle massive $\alpha$-mixing time series. Using some covariance inequality for $\alpha$-mixing sequences,  \citet{Sun2021} successfully derived optimal learning rates for DKRR under similar assumptions as this paper. There are mainly four difference between \citep{Sun2021} and this paper. Firstly,
we focus on Nystr\"{o}m regularization while \citep{Sun2021} devoted to distributed learning. It should be noted that there are totally different scalable variants of KRR. In particular, our numerical results show that the proposed Nystr\"{o}m regularization for time series admits extremely small sub-sampling ratio while the numerical results in \citep{Sun2021} showed that DKRR is infeasible if the number of samples in local machines is too small.  Secondly, we are interested in $\tau$-mixing time series, which can be regarded as an extension of   $\alpha$-mixing time series in \citep{Sun2021}. The direct consequence is that the covariance inequality for $\alpha$-mixing sequences is unavailable to $\tau$-mixing sequence and novel integral operator approach is required. Thirdly, our results are described in probability while the results in \citep{Sun2021} are in the framework of expectation. It should be mentioned that learning rate analysis in probability is usually stronger than that in expectation in the sense that it is easy to derive an error estimate in expectation based on error in probability, just as our Corollary  \ref{Corollary:optimal-exp} shows, but not vice-verse. At last, our analysis holds for Assumptions \ref{Assumption:regularity} and  \ref{Assumption:capacity} with all $0<r\leq 1$ and $0<s\leq 1$, while the analysis in \citep{Sun2021} imposed an additional $2r+s\geq 1$ in deriving the learning rates.

In summary, Nystr\"{o}m regularization with sequential sub-sampling is a novel scalable learning approach for tackling massive time series data. Different from the widely used distributed learning schemes, our approach admits small sub-sampling ratio, possesses slightly better theoretical behavior and can be used without advanced computing resources.

\section{Simulation Studies}\label{Sec.experiment}
\qquad In this section, we conduct both toy simulations and two real world time series forecasting analysis to verify our theoretical statements  and show the advantages of Nystr\"{o}m regularization with sequential sub-sampling. Our numerical experiments were carried out in Matlab R2016a with Intel(R) Core(TM) i7-4720HQ CPU @2.60GHz, Windows 10.

\subsection{Toy simulations}
\qquad In this part, we carry out four simulations to verify the theoretical statements. The first simulation is to study the relationship between the sub-sampling ratio and test error (measured by RMSE: rooted mean squared error) to demonstrate the power of Nystr\"{o}m regularization for massive time series.
The second simulation focuses on illustrating the effectiveness of the proposed sequential sub-sampling by showing that the learning performance of Nystr\"{o}m regularization is independent of the the sampling position or sampling interval. The third simulation aims to verify Theorem \ref{Theorem:error-for-exp} and Theorem \ref{Theorem:error-for-alg} via showing the relationship between test error and the number of the training samples. The last one is to exhibit the capability of the Nystr\"{o}m regularization with sequential sub-sampling as a noise-extractor.

In all simulations, we consider two time series:  nonlinear model (\ref{f_1})  with $\varepsilon_t$ being  the independent noise satisfying $\varepsilon_t\sim\mathcal{U}(-0.7,0.7)$ (here  $\mathcal{U}(a,b)$ represents the uniform distribution on $(a, b)$), and  Markov chains with Bernoulli distribution (\ref{f_2}), where $\{\varepsilon_t\}_{t=1}^T$ are i.i.d. drawn from the Bernoulli distribution $\mathcal B(1/2)$ and are independent of $x_0$. It should be mentioned that the time series generated by (\ref{f_1}) is an $\alpha$-mixing sequence \citep{Alquier2013} while that generated by (\ref{f_2}) is a $\tau$-mixing sequence but not an $\alpha$-mixing sequence \citep{Dedecker2005}.
\begin{eqnarray}
\mbox{Mechanism 1$(M_1)$}:\qquad x_{t} &=& 0.5\sin(x_{t-1})+\varepsilon_t,  \label{f_1}\\
\mbox{Mechanism 2$(M_2)$}:\qquad x_{t} &=& \frac12(x_{t-1}+\varepsilon_t), \label{f_2}
\end{eqnarray}
We adopt a widely used   Wendland  kernel \citep{Chang2017}  as follow:
\begin{eqnarray}\label{kernel}
   K(x,x')=\left\{
\begin{array}{cc}
 (1-\|x-x'\|_2)^4(4\|x-x'\|_2+1)\ &\text{if}\ 0<\|x-x'\|_2\leq1 \\
 0 &\ \text{if}\ \|x-x'\|_2>1.
\end{array}
\right.
\end{eqnarray}
It is easy to check that $K(t)$ is triple differentiable but not quadratic differentiable.

Let $N$  and $N_{test}$ be the number of training samples and test points respectively.
 We generate ($N+N_{test}+1$) samples via (\ref{f_1}) or (\ref{f_2}). The training samples are: $\{x_{t},x_{t+1}\}_{t=1}^{N}$  with $x_0$ drawn randomly according to $\mathcal U(0,1)$. Meanwhile, we also construct a test set: $ \{x_{t},x_{t+1}-\varepsilon_t\}_{t=N+1}^{N+N_{test}}$. In order to make a more faithful evaluation, we are concerned with one-step prediction, that is, the prediction of  $k$th test sample is built upon $N+(k-1)$ samples, where $k$ is an integer and varies in the range [$1$, $N_{test}$].

{\bf Simulation 1:\rm} In this simulation, we aim at studying relation between the learning performance of the proposed Nystr\"{o}m regularization and sub-sampling ratio.
As shown in our theoretical results in Section \ref{Sec.Mainresult}, the sub-sampling ratio controls not only   the generalization performance of  the proposed algorithm but also  the computational complexities and memory requirements.

The number of training samples $N$ and test samples $N_{test}$ are 2000 and 50, respectively.  We repeat the experiments 5 times to obtain the average RMSE.   The regularization parameters $\lambda$ are selected from $[5\times 10^{-4}:5\times 10^{-4}:0.01]$ (the first value is the lower bound of range, the second value is the step size, the third one is the upper bound of the range) and $[5\times 10^{-4}:5\times 10^{-5}:0.001]$ via cross-validation for $M_1$ and $M_2$ respectively. Our numerical results are reported in Figure \ref{select_m1}.
\begin{figure}
\centering
\subfigure[Relation on $M_1$]{\includegraphics[scale=0.4]{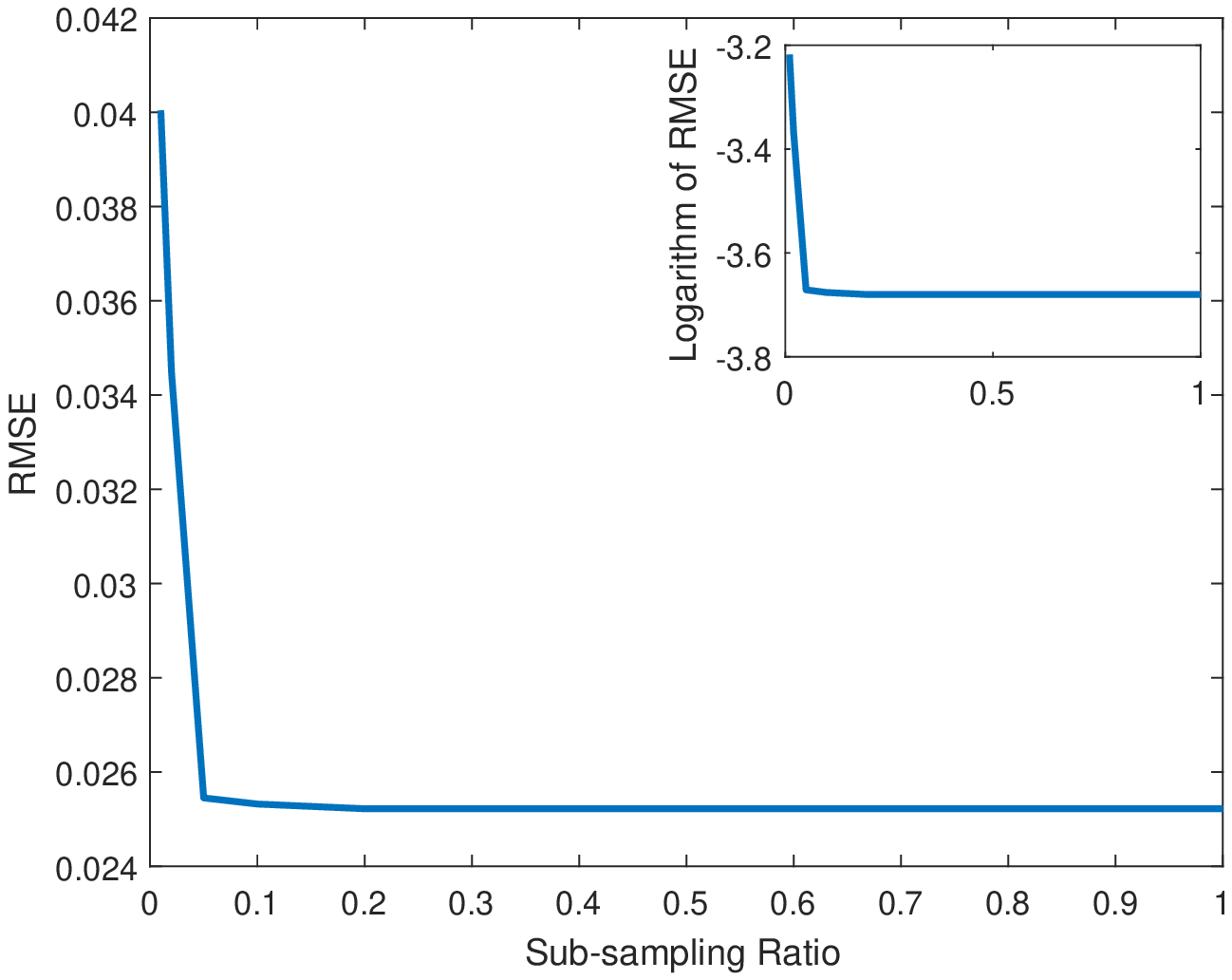}}
\subfigure[Relation on  $M_2$]{\includegraphics[scale=0.4]{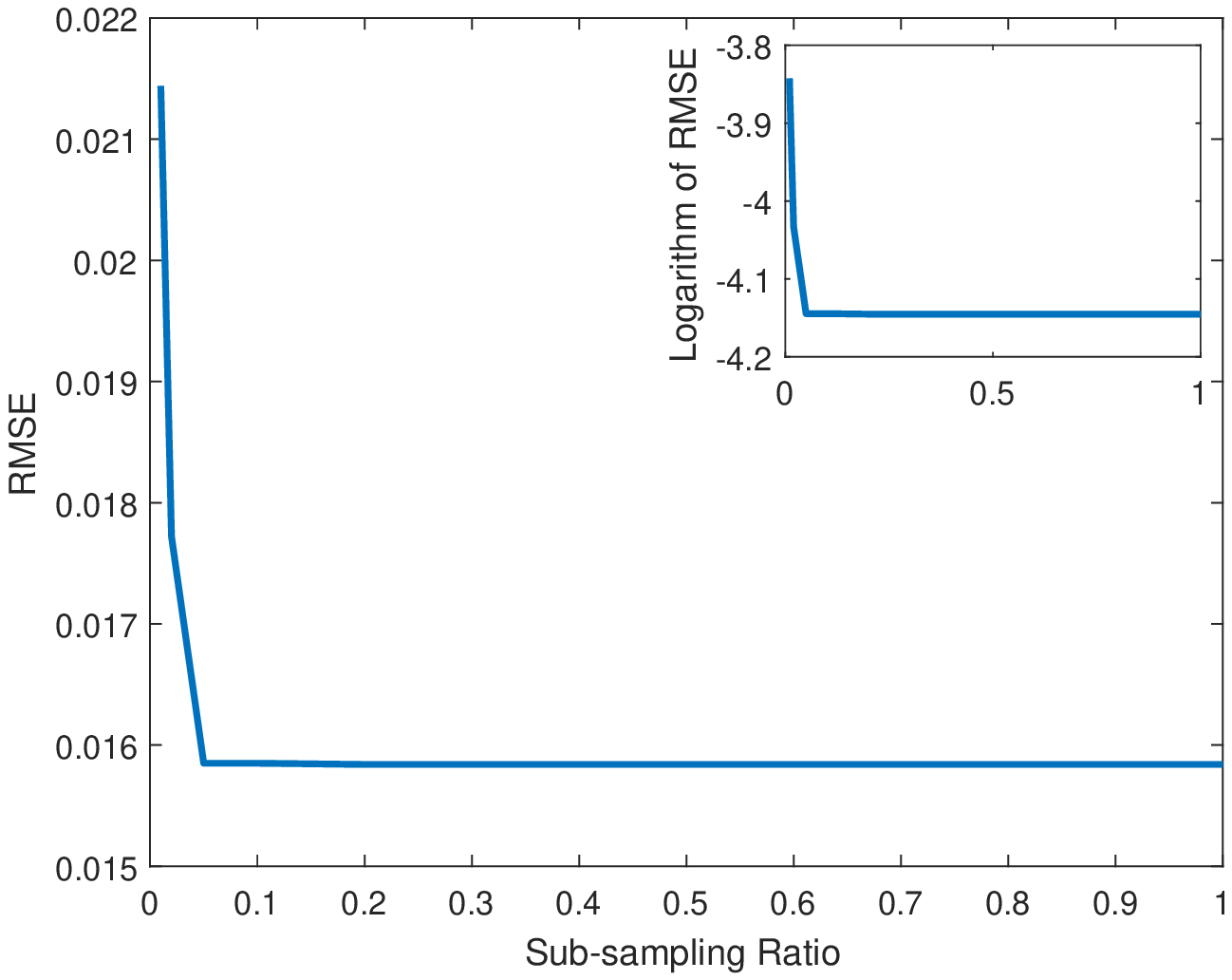}}
\caption{Relation between generalization error and sub-sampling ratio}\label{select_m1}
\end{figure}

From Figure \ref{select_m1}, we find three interesting phenomena: 1) for both $M_1$ and $M_2$, the generalization capability  does not decrease with respect to the sub-sampling ratio, which verifying our theoretical results in Theorem \ref{Theorem:error-for-exp} in the sense that if $m$ is larger than a specific value, then Nystr\"{o}m regularization with sequential sub-sampling reaches the optimal learning rates of KRR; 2) there exists a  lower bound of sub-sampling ratio (e.g., about 0.05 for both $M_1$ and $M_2$), smaller than which,  Nystr\"{o}m regularization with sequential sub-sampling degrades the learning performance of KRR dramatically. Noting that  the complexity of training for Nystr\"{o}m regularization with sequential sub-sampling is $O(nm^2)$, this reflects the dilemma in selecting $m$.  From the computational side, it is desired to select $m$ as small as possible. However, too small $m$ inevitably leads to bad generalization capability; 3) it should be highlighted that the lower bound of  sub-sampling ratio   to guarantee the learning performance of Nystr\"{o}m regularization is extremely small  (about 0.05) with which it is safe to set $m$ to be not so small  ($m=0.1$ for example). In particular, for 2000 samples,  Nystr\"{o}m regularization with $m\geq10$ is good enough to yield an estimator of high quality. It should be mentioned that the effective sub-sampling ratio for time series is  much less than that for i.i.d. sampling \citep{Rudi2015}. The main reason, as shown in Figure 2, is that the dependence nature of samples enhances the dependence among columns in the kernel matrix. As a result, the effective rank of kernel matrix of time series is smaller than that of i.i.d. samples.

{\bf Simulation 2:\rm} In this simulation, we devote to pursuing the role of sub-sampling strategy. As shown in Theorem \ref{Theorem:error-for-exp}, the learning performance of Nystr\"{o}m regularization is independent of the position of sub-sampling. It is thus urgent to verify such an independence. Additionally, we also show the role of sampling intervals in  Nystr\"{o}m regularization via comparing the proposed sequentially  sub-sampling set $D_j^*$ in Definition \ref{Def:seq-subsampling} with $D_{j,k}^*:=\{x_j,x_{j+(k+1)},x_{j+2(k+1)},\dots,x_{j+(m-1)(k+1)}\}$.

The number of training samples $N$ is fixed as $N=2000,10000$, and the sub-sampling ratio is fixed as $0.01$, which means that the sub-sampling size is fixed at $20,100$, respectively. To test on sub-sampling at arbitrary positions, we sub-sample the first 20(or 100) samples, the middle 20(or 100) samples and the last 20(or 100) samples of the training data sequence respectively.  Meanwhile,we choose different sub-sampling intervals with $k$ varying in $\{5,10,15,20,50,100\}$. The algorithm is employed to predict 5 testing samples and the experiments are repeated 20 times. The regularization parameter $\lambda$ is set as:
\begin{itemize}
\item For $M_1$ with 2000 training samples, the $\lambda$ is selected from $[5\times 10^{-4}:5\times 10^{-4}:0.01]$;
\item For $M_1$ with 10000 training samples, the $\lambda$ is selected from $[5\times 10^{-5}:1\times 10^{-4}:0.001]$;
\item For $M_2$ with 2000 training samples, the $\lambda$ is selected from $[5\times 10^{-4}:5\times 10^{-5}:0.001]$;
\item For $M_2$ with 10000 training samples, the $\lambda$ is selected from $[1\times 10^{-4}:2\times 10^{-5}:4\times 10^{-4}]$;
\end{itemize}

\begin{figure}
\subfigure[ $M_1$ with 2000 data]{\includegraphics[scale=0.25]{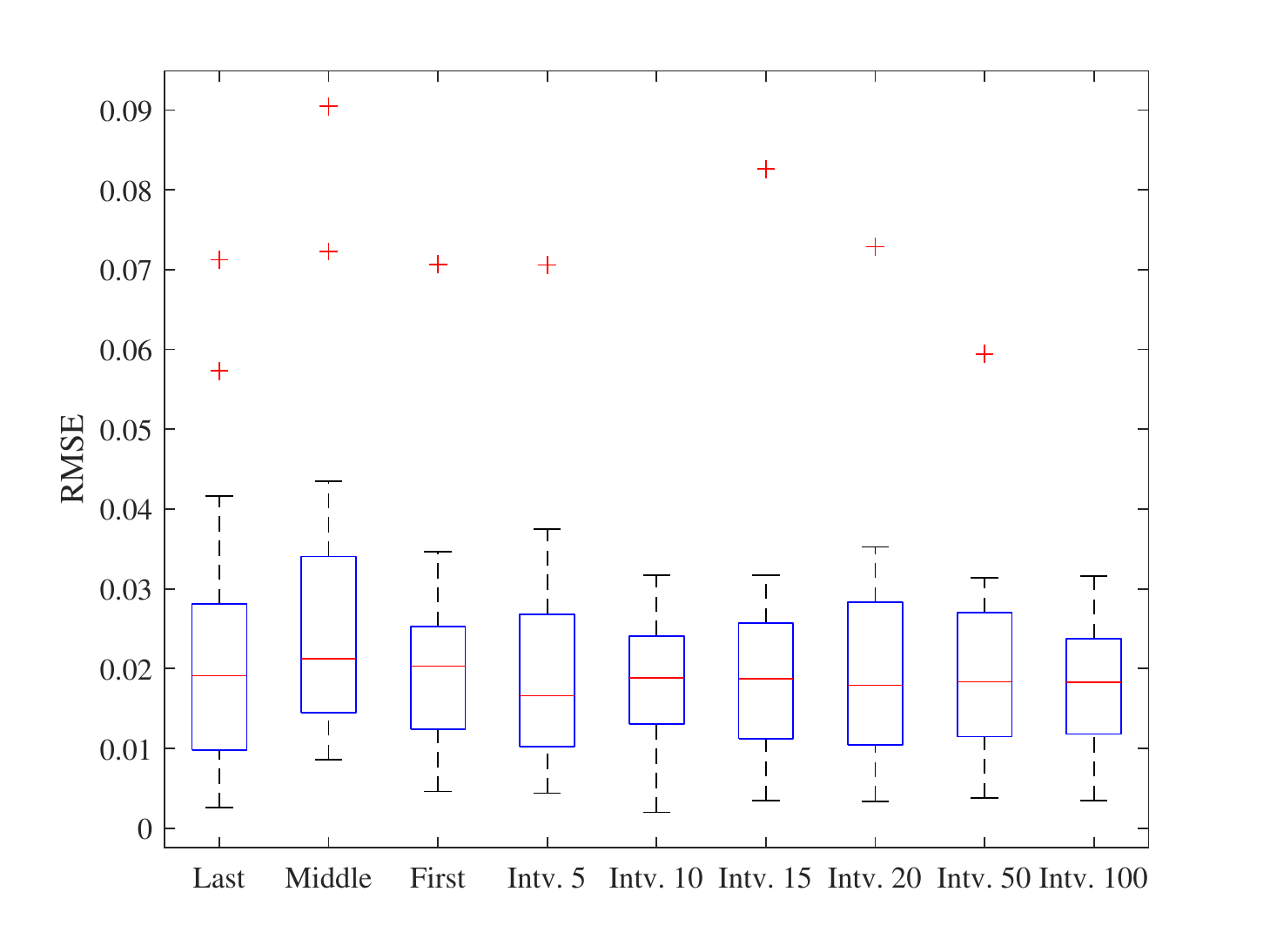}}
\subfigure[ $M_1$ with 10000 data ]{\includegraphics[scale=0.25]{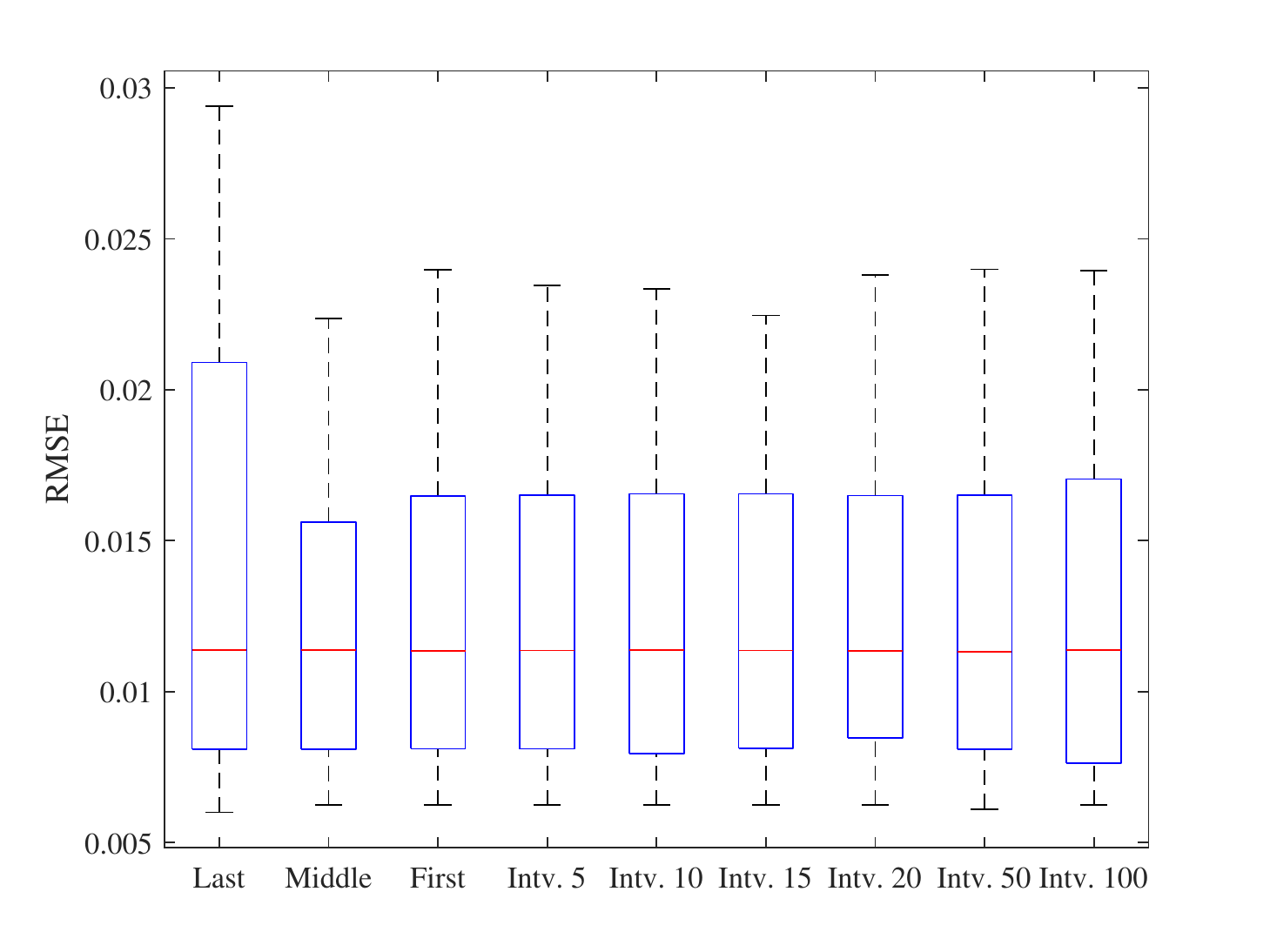}}
\subfigure[ $M_2$ with 2000 data]{\includegraphics[scale=0.25]{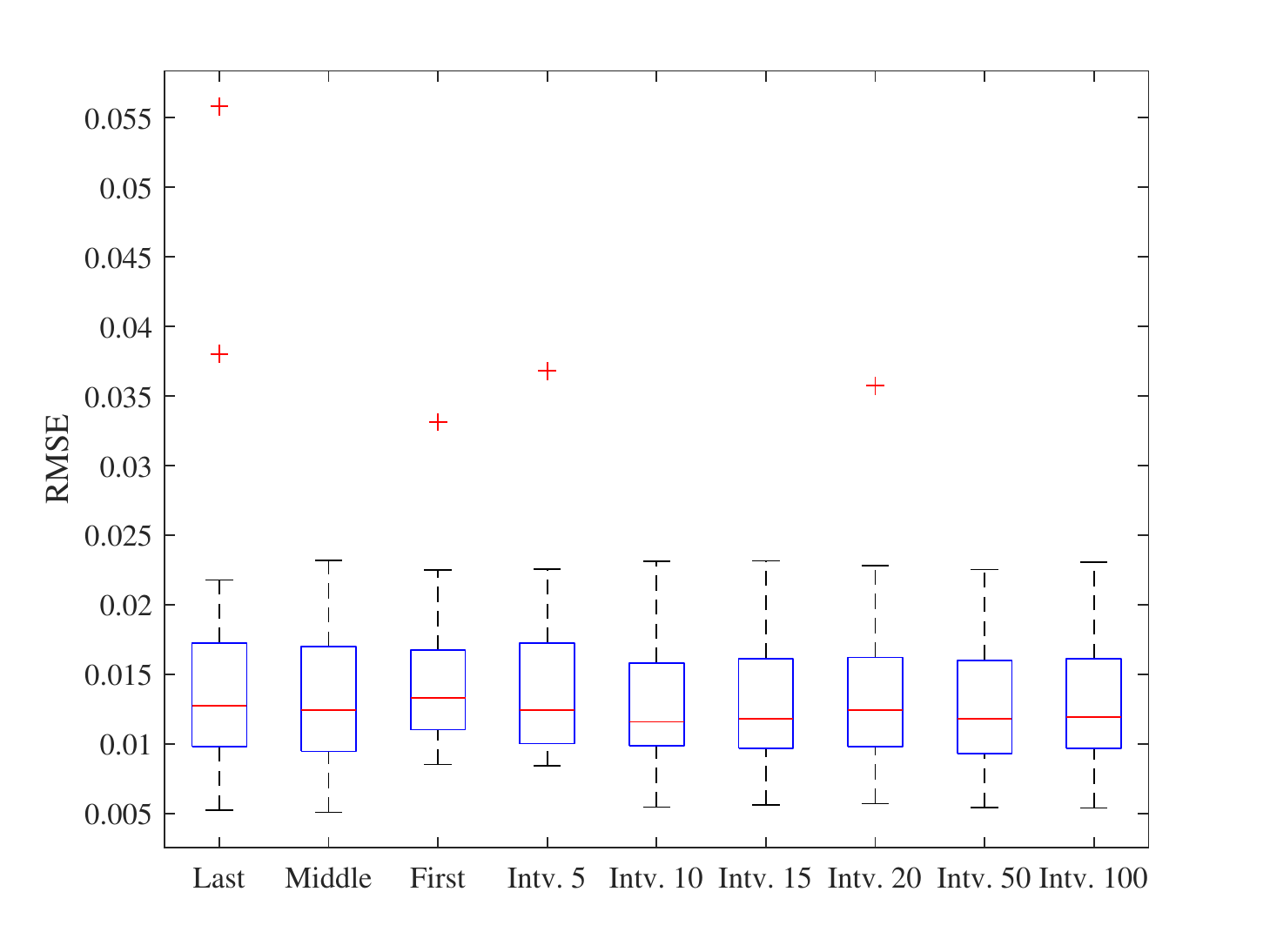}}
\subfigure[ $M_2$ with 10000 data]{\includegraphics[scale=0.25]{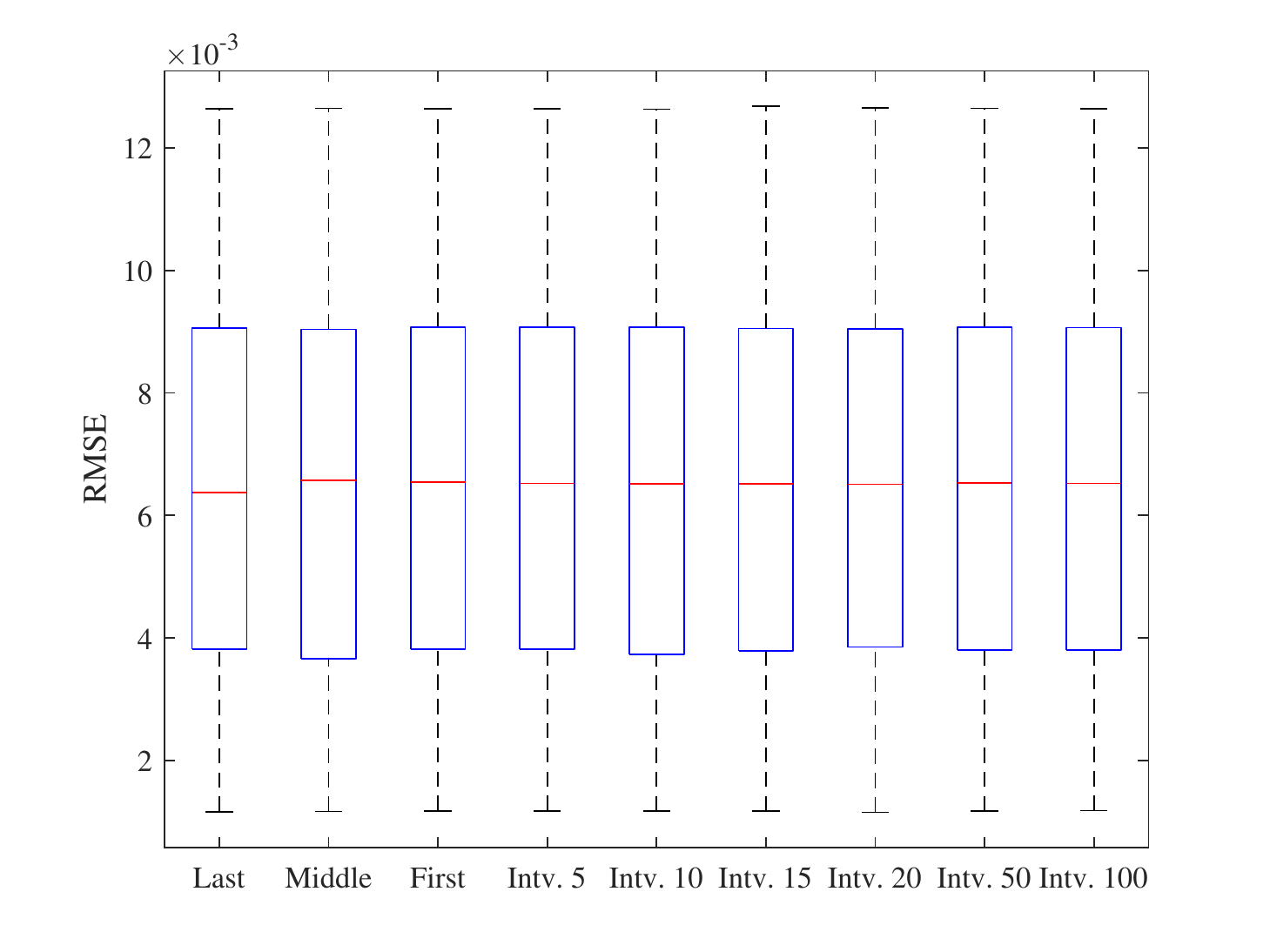}}
\caption{Role of sub-sampling mechanisms in    Nystr\"{o}m regularization}\label{arbitrary}
\end{figure}

Our numerical results are shown in Figure \ref{arbitrary}, where ``Last'', ``Middle'' and ``First''  means $j=N-m+1$, $j=N/2$ and $j=1$ respectively, while ``Intv.a'' means $k=a$ for $a=5,10,15,20,50,100$ in $D_{j,k}^*$.  For $N=2000$, though the mean of RMSE is almost the same for all sub-sampling mechanisms,  as shown in Figure  \ref{arbitrary} (a) and (c), the generalization capability  of Nystr\"{o}m regularization changes slightly with respect to different sub-sampling mechanism in each trivial. The reason is that for $N=2000$, there are only $20$ columns selected to build up the estimators. When the number of samples increases, for the same sub-sampling ratio, the number of  selected columns also increases. Then, it can be found in Figure  \ref{arbitrary} (b) and (d) that the generalization capability of Nystr\"{o}m regularization is independent of the sub-sampling mechanism. All the numerical results above show that the proposed sequential sub-sampling strategy is a good choice in practice.

{\bf Simulation 3:\rm} In this simulation, we study the relation between the generalization capability and number of samples to verify Theorem \ref{Theorem:error-for-exp} and Theorem \ref{Theorem:error-for-alg}.
We build the Nystr\"{o}m regularization estimator on  data sets of size  varying in $\{2000,5000,10000,20000,50000\}$ and fix the number  of testing samples as 10. We consider   Nystr\"{o}m regularization with three  sub-sampling ratios: 0.002, 0.005 and 0.01.
The regularization parameter  $\lambda$ is selected from $[2\times 10^{-4}:2\times 10^{-4}:0.004]$ and $[1\times 10^{-4}:1\times 10^{-5}:2\times 10^{-4}]$ for $M_1$ and $M_2$, respectively. The experimental results can be found in Figure \ref{select_n1}.
\begin{figure}
\centering
\subfigure[ Relation on $M_1$]{\includegraphics[scale=0.4]{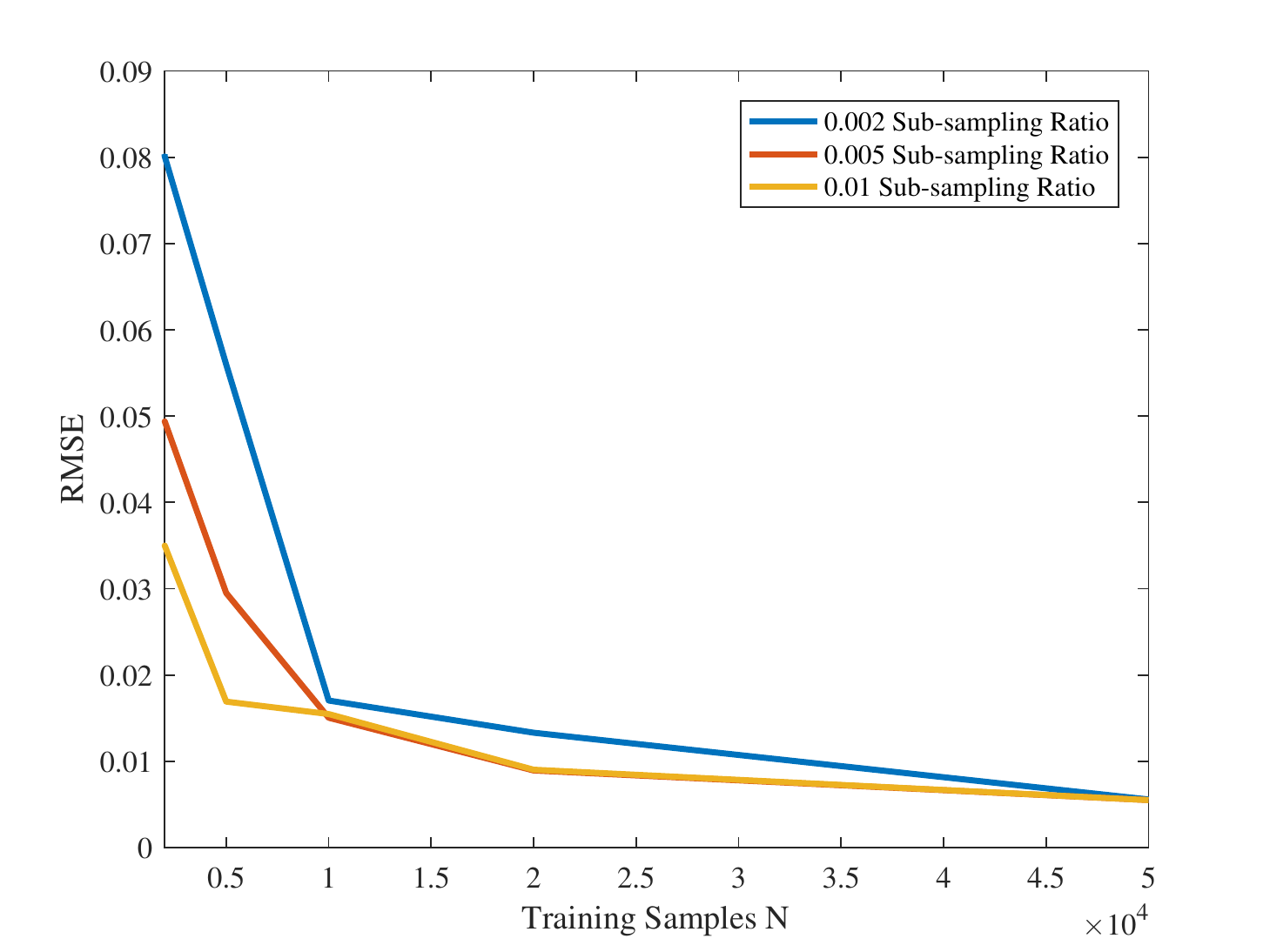}}
\subfigure[ Relation on $M_2$]{\includegraphics[scale=0.4]{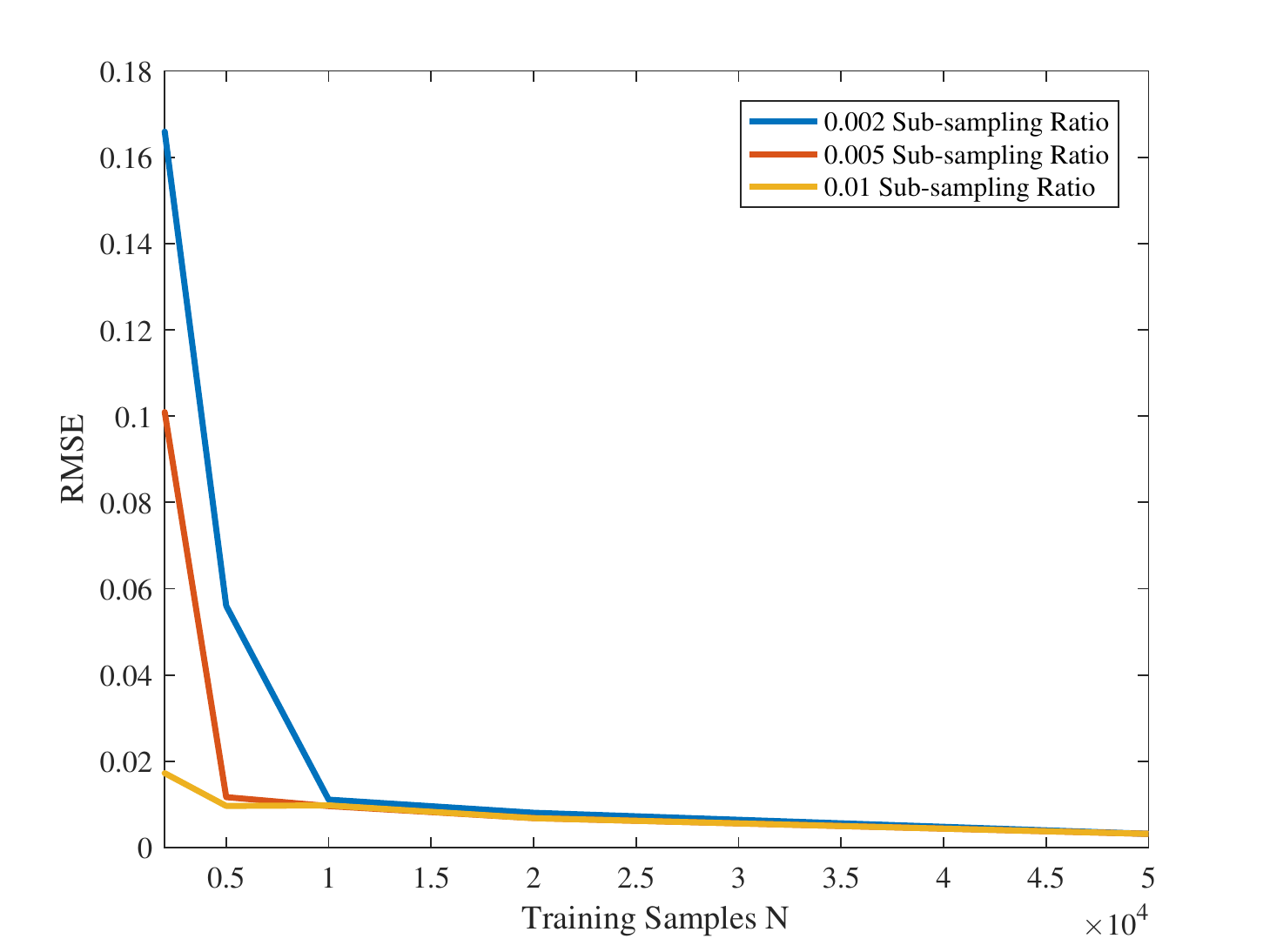}}
\caption{The relation between generalization error and number of samples}\label{select_n1}
\end{figure}

Figure \ref{select_n1}   exhibits two   findings for the proposed Nystr\"{o}m regularization. The one is that
RMSE decreases with respect to the number of samples. This partly verifies Theorem \ref{Theorem:error-for-exp} and Theorem \ref{Theorem:error-for-alg}, since our theoretical analysis is for the worst case. It is difficult for  KRR to get a similar curve as   Figure \ref{select_n1} since
KRR requires $\mathcal O(N^3)$  computational complexity and thus needs at least $\mathcal O(10^{14})$ floating computations, which is beyond the capability of nowadays computers.  The other is that Nystr\"{o}m regularization with different sub-sampling ratios perform similarly, provided the number of samples is larger than a specific value. Taking $M_2$ for example, three sub-sampling ratios perform the almost same when $N\geq 10000$. The main reason is that, for small size training data, too small sampling ratio results in extremely small hypothesis space and then degrades the generalization capability of KRR. Both findings verify our theoretical assertions and show the efficiency of the sequential sub-sampling scheme.

{\bf Simulation 4:\rm} In the previous simulations, the performance of Nystr\"{o}m regularization were tested on clean test data. However, in real-world time series forecasting, it is impossible to neglect the noise, making the machine learning  estimator  always lag  behind the real-world time series \citep{Chen1998}. As shown in Figure 1, a real-world time series can be divided into a deterministic part to quantify the relation between samples in successive time and a random part to describe the uncertainty of time series. Our theorems and simulations show  that  Nystr\"{o}m regularization is capable of discovering the deterministic part of time series, but similar as all existing machine learning approaches, it cannot catch the random part of time series. This makes our approach be not so good for time series with large random noise.  However, since the deterministic part of time series is  well approximated by the proposed Nystr\"{o}m regularization, we can use the derived estimator to be a noise-extractor to pursue the distribution of the random noise of time series. In this simulation, we focuses on the performance of Nystr\"{o}m regularization in extracting the random noise.

For this purpose,
we generated three time series according to (\ref{f_1}), with $\varepsilon_t$ be i.i.d. drawn from $\mathcal{B}(0.5)$, $\mathcal{U}[-0.2,0.2]$ and $\mathcal{N}(0,0.1^2)$, respectively. In this simulation, we set the number of training samples $N$ as 2000, $\lambda = 0.005$ and the sub-sampling ratio as 0.01. The number of test samples is 2000 (statistical bar), else 50. Our simulation results can be found in Figure \ref{Bernoulli}.
\begin{figure}[h]
\subfigure[without noise]
{\includegraphics[scale=0.25]{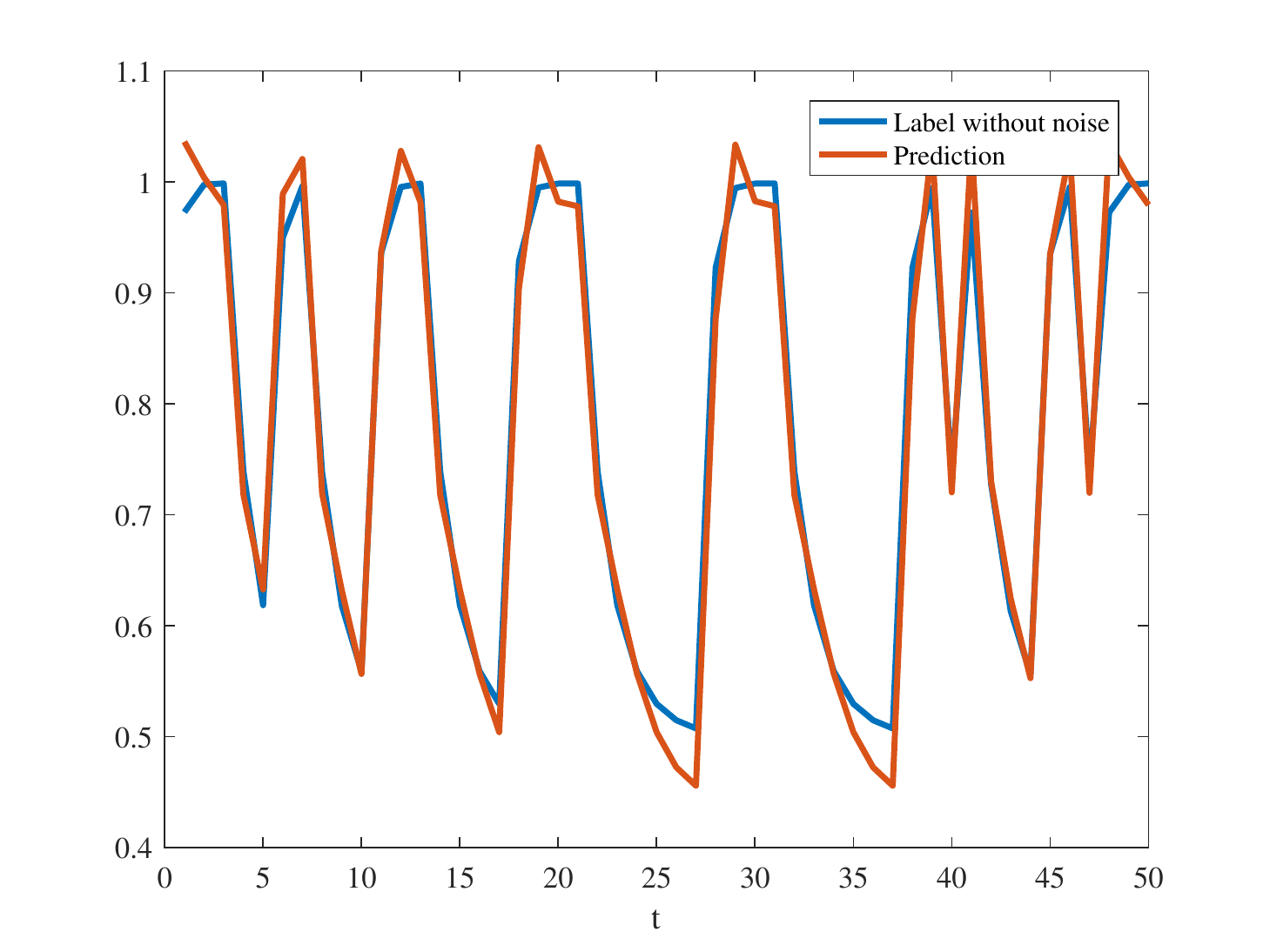}}
\subfigure[with bernoulli noise]{\includegraphics[scale=0.25]{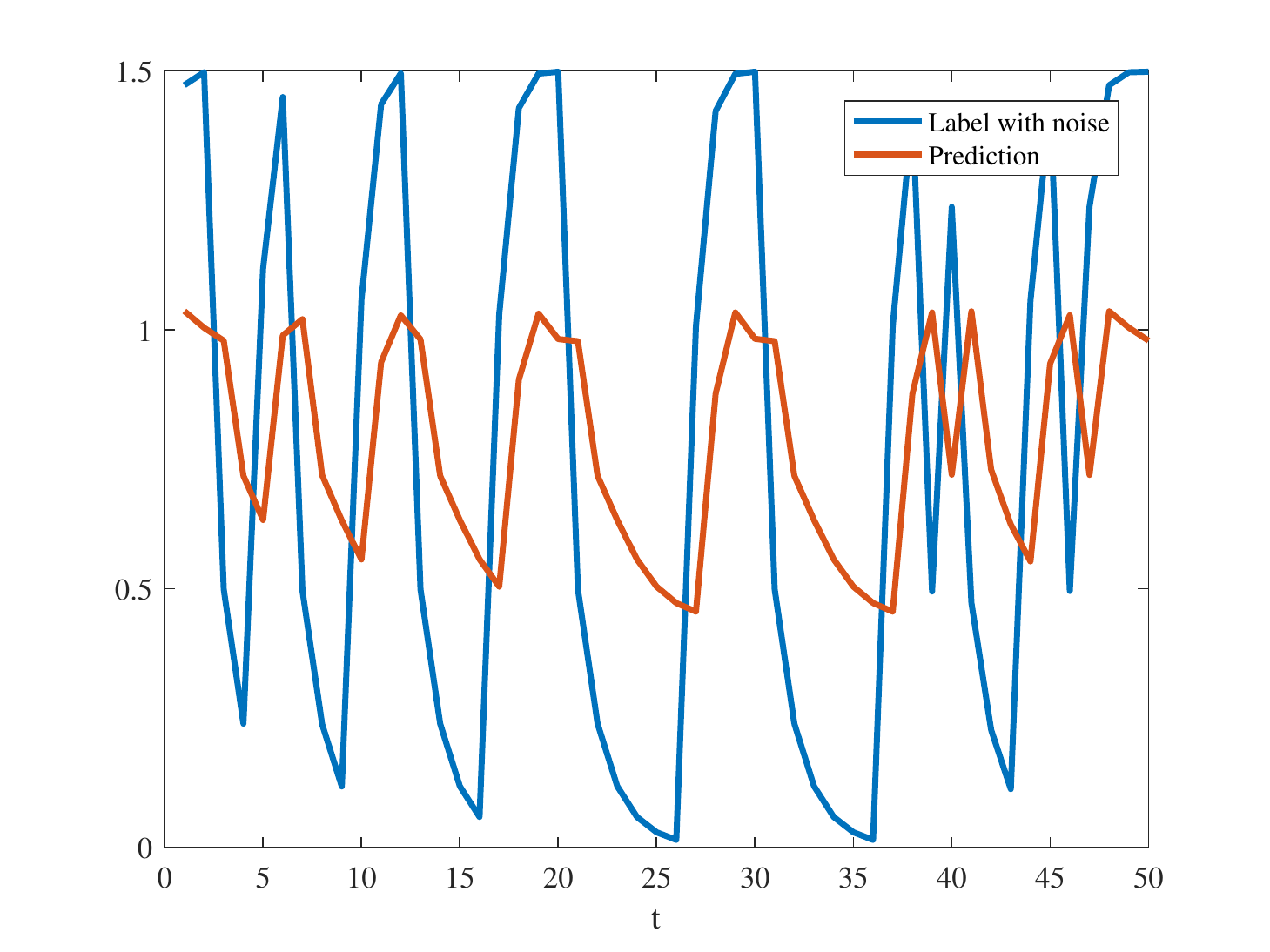}}
\subfigure[noise-exactor]{\includegraphics[scale=0.25]{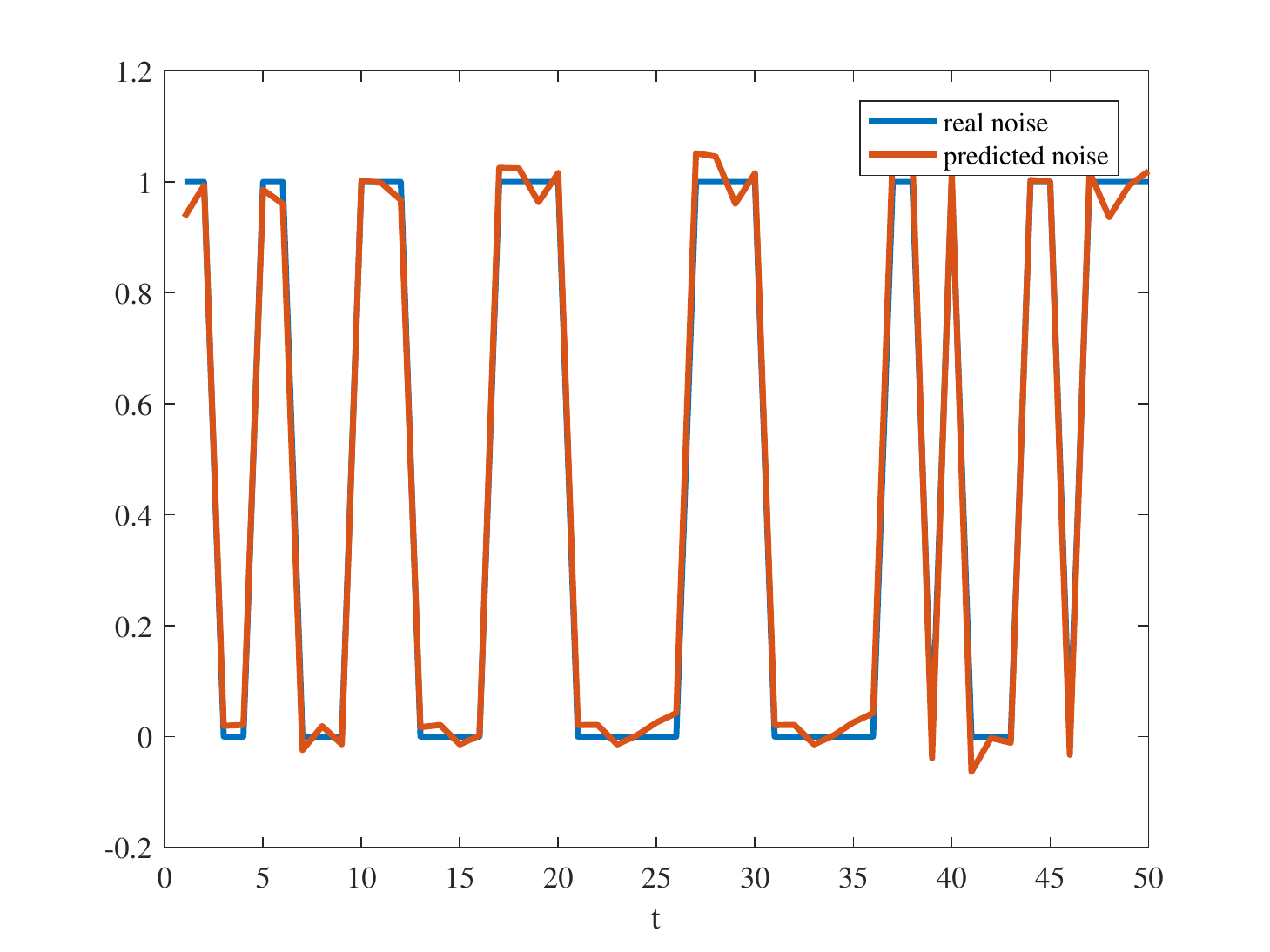}}
\subfigure[statistical bar]
{\includegraphics[scale=0.25]{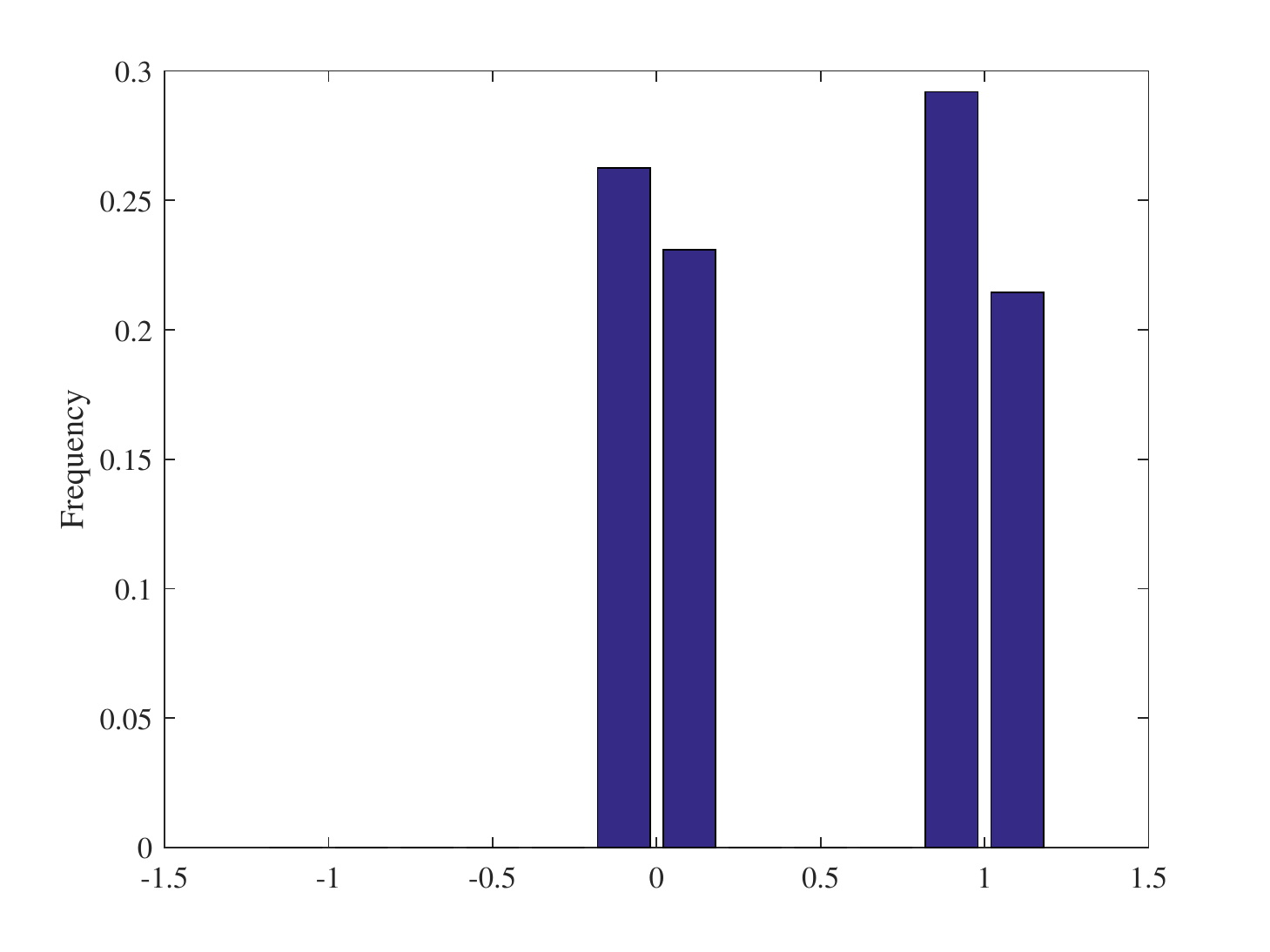}}
\subfigure[without  noise]{\includegraphics[scale=0.25]{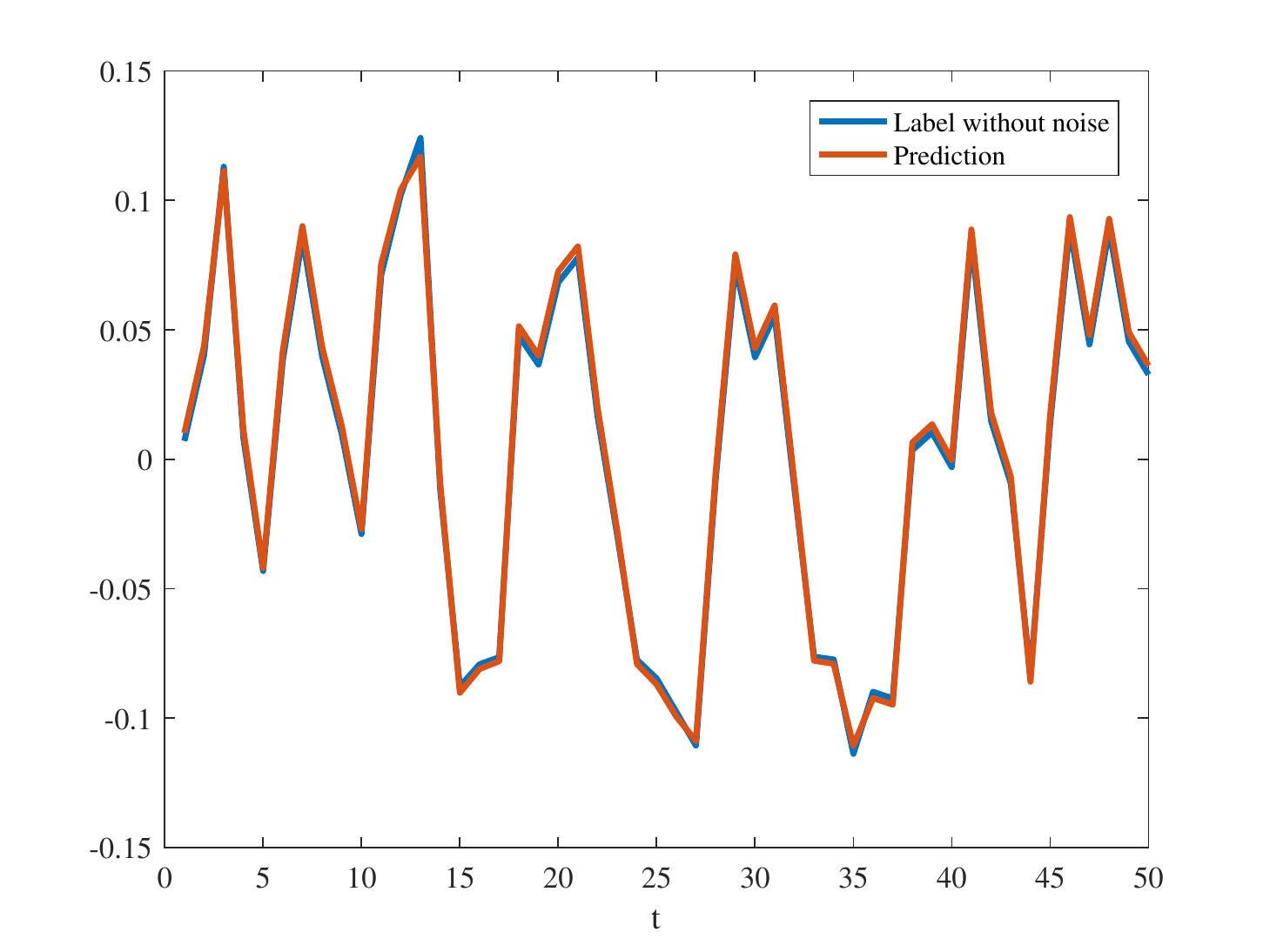}}
\subfigure[with uniform noise]{\includegraphics[scale=0.25]{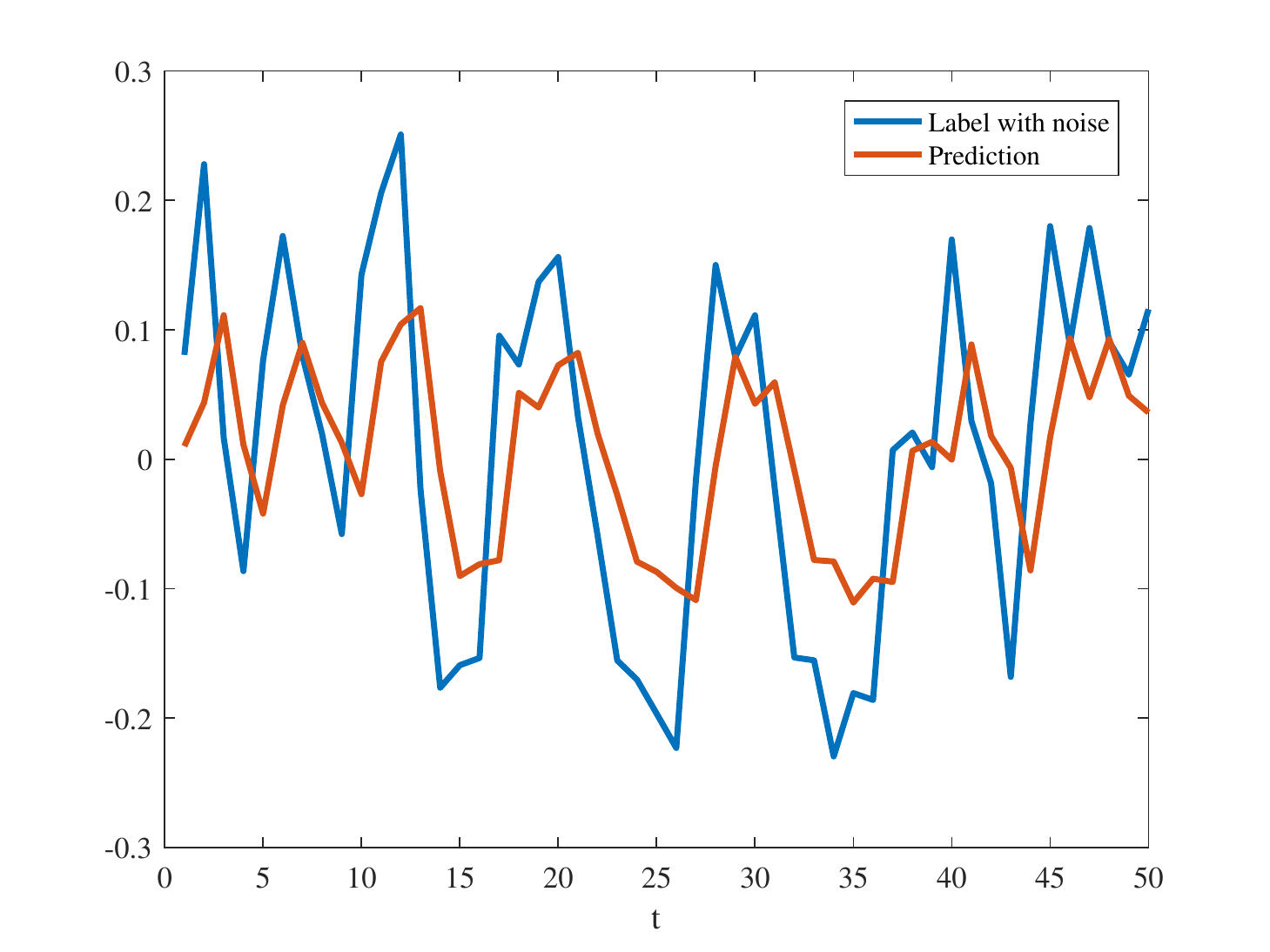}}
\subfigure[noise-exactor]{\includegraphics[scale=0.25]{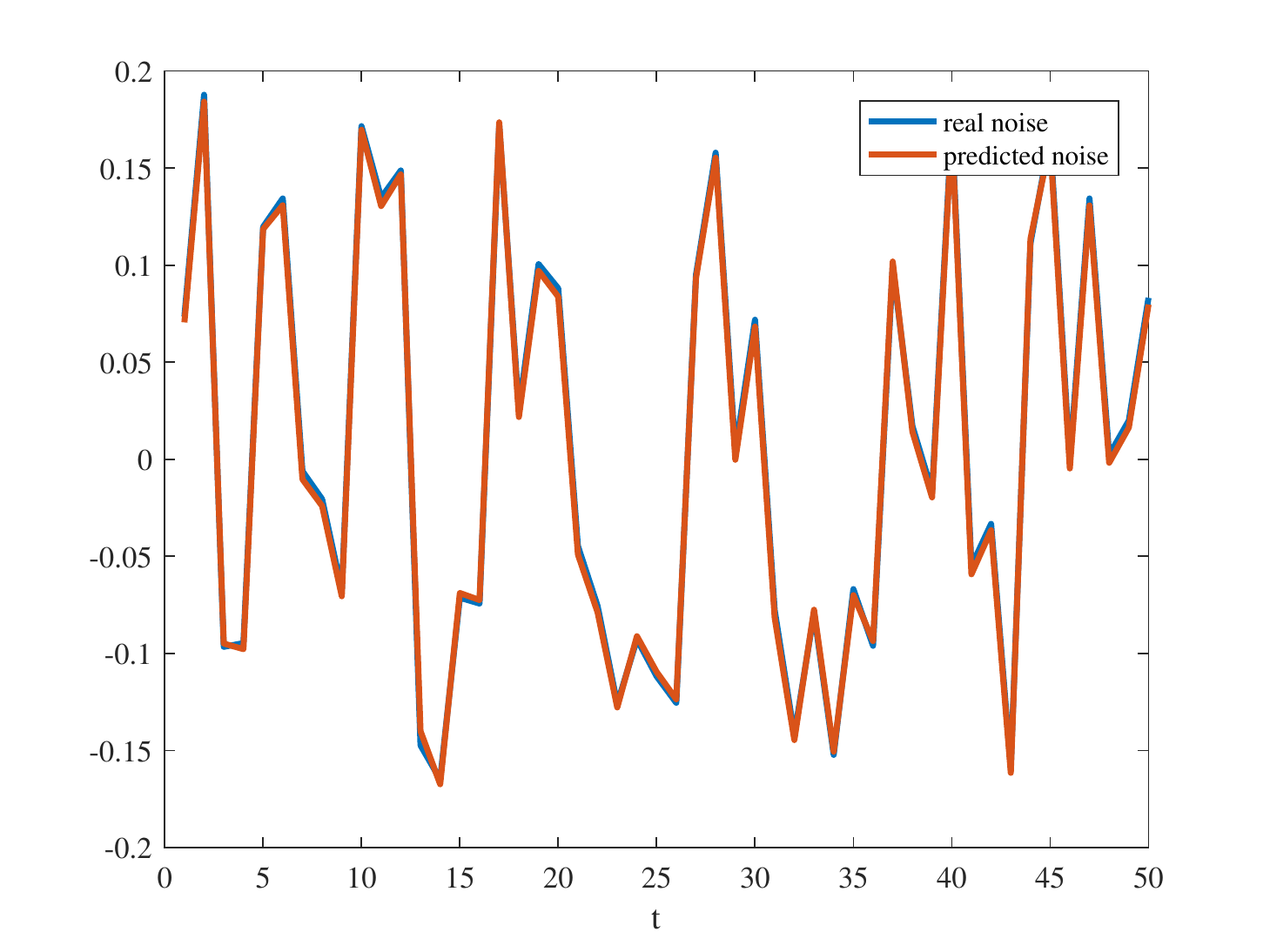}}
\subfigure[statistical bar]{\includegraphics[scale=0.25]{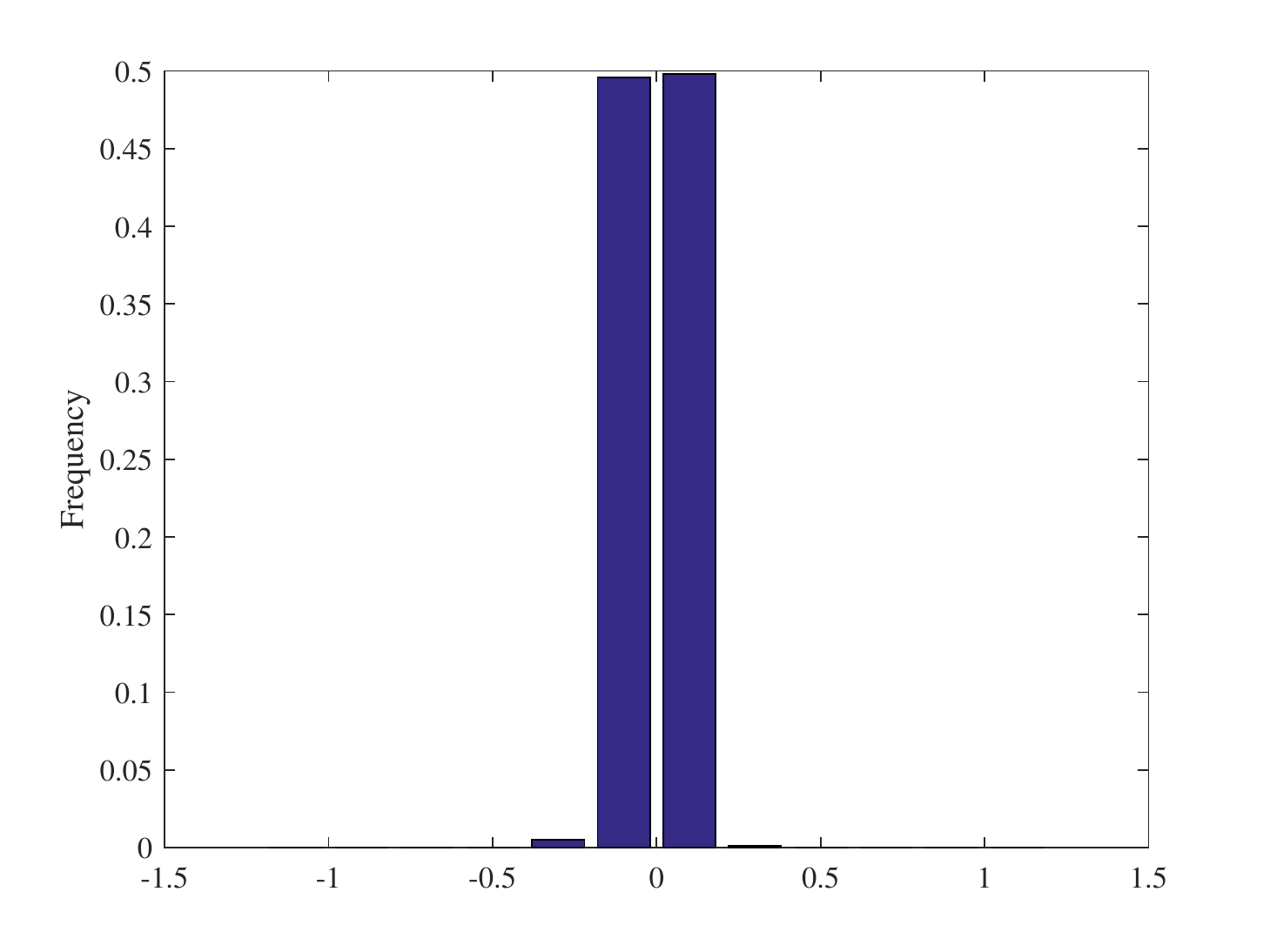}}
\subfigure[without noise]
{\includegraphics[scale=0.25]{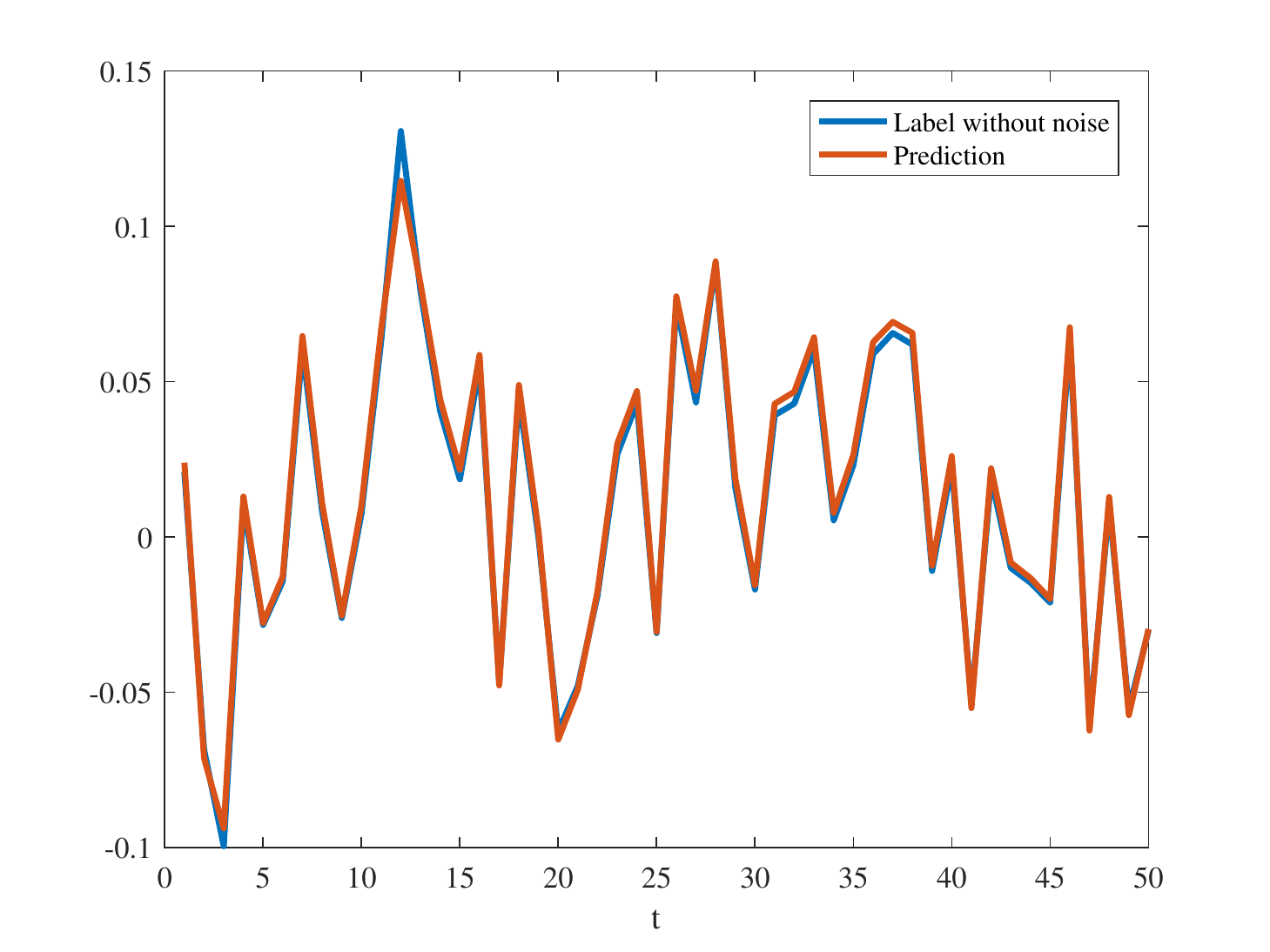}}
\subfigure[with gaussian noise]{\includegraphics[scale=0.25]{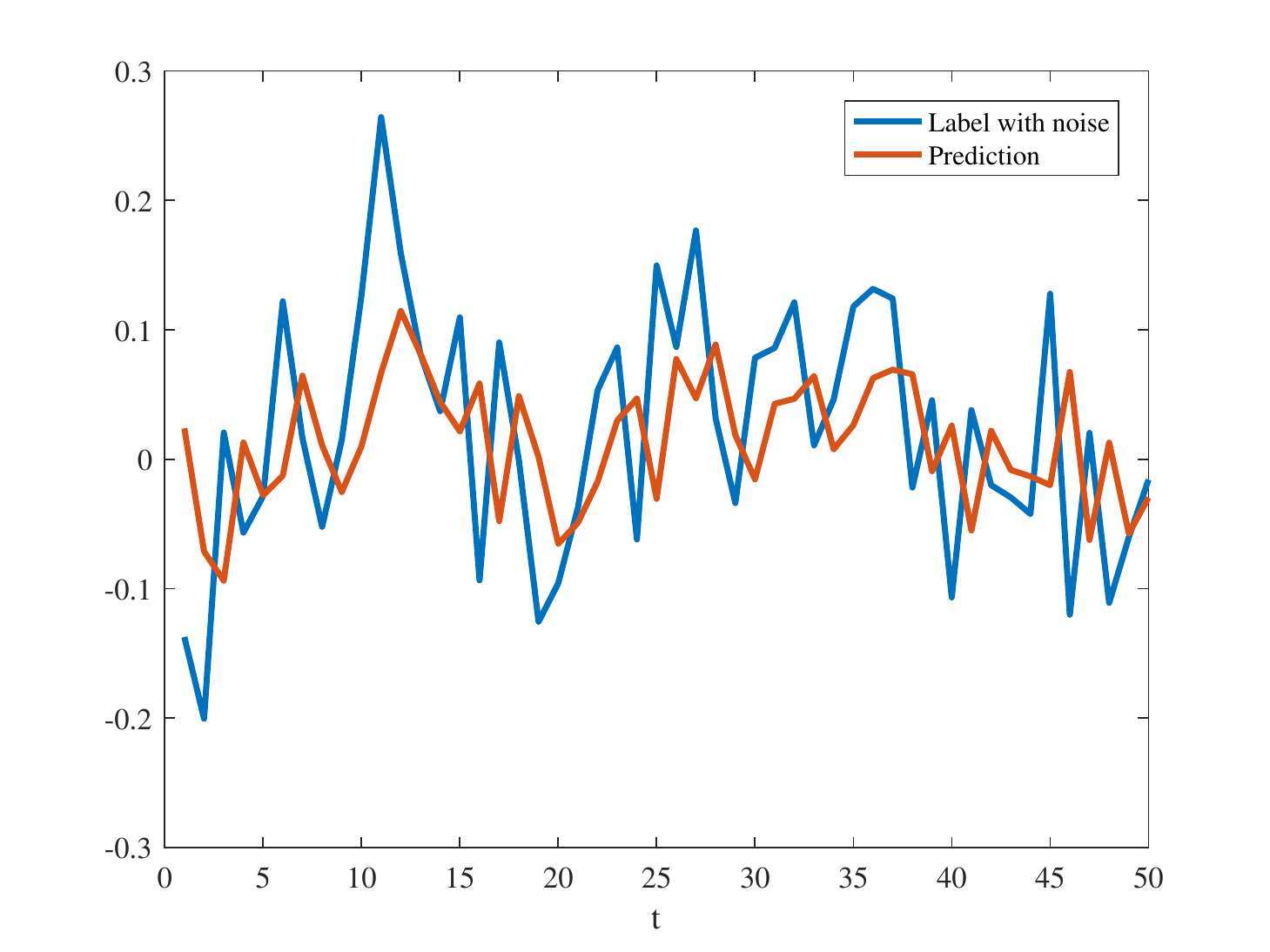}}
\subfigure[noise-exactor]{\includegraphics[scale=0.25]{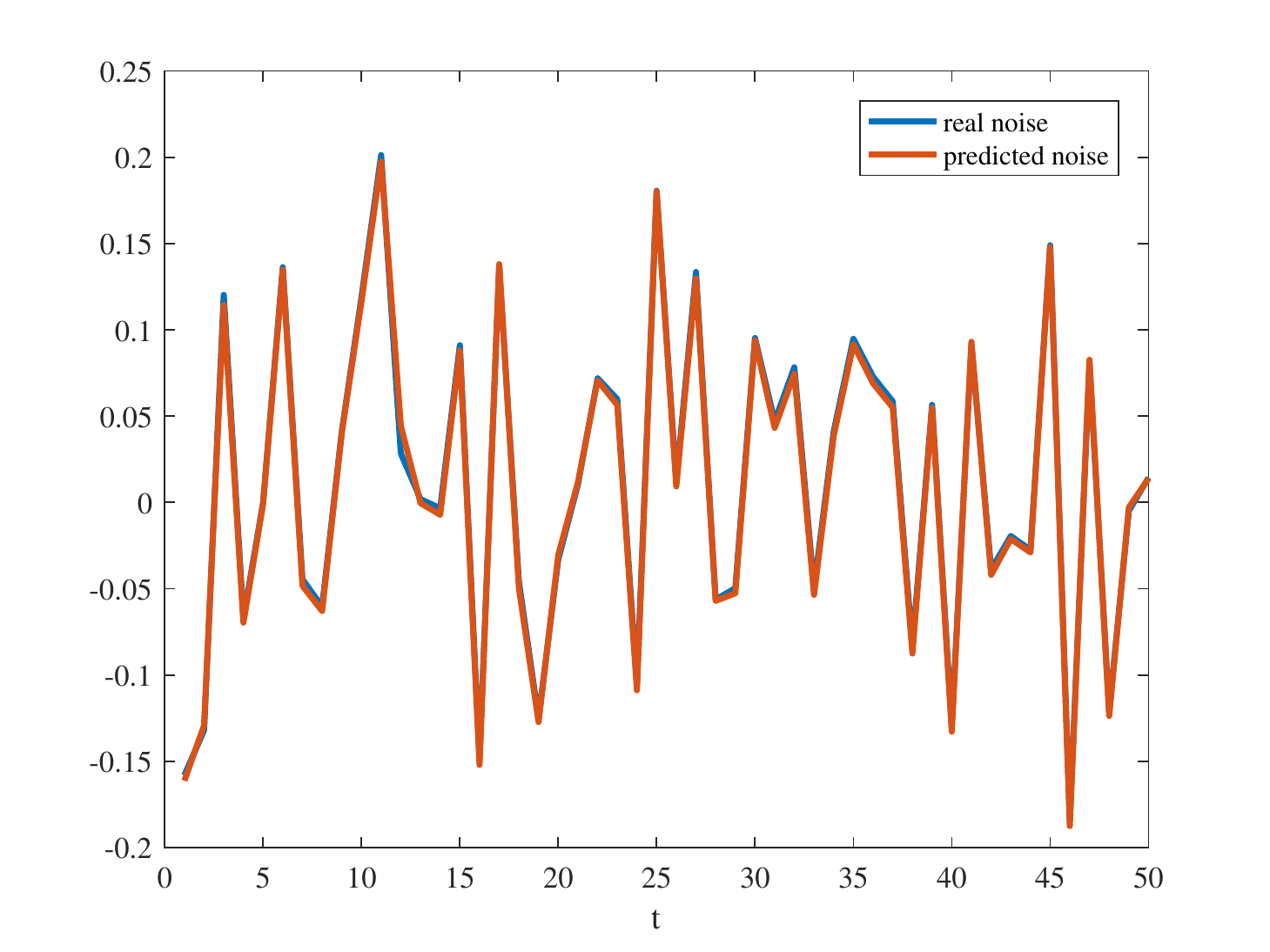}}
\subfigure[statistical bar]
{\includegraphics[scale=0.25]{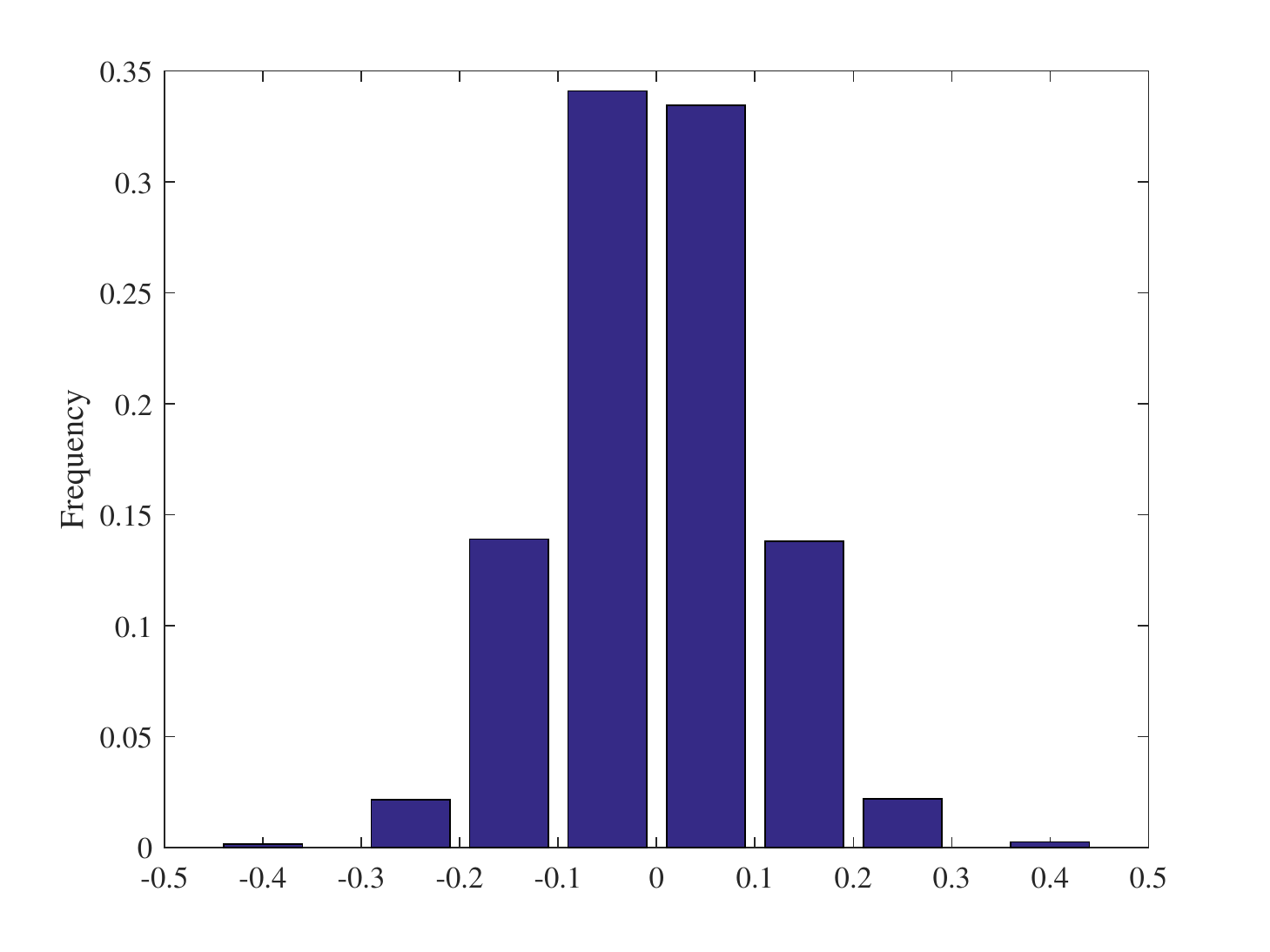}}
\caption{Noise extractor for Nystr\"{o}m regularization: (a)-(d) for  $\mathcal{B}(0.5)$; (e)-(h) for $\mathcal{U}[-0.2,0.2]$ and (i)-(l) for $\mathcal{N}(0,0.1^2)$.}\label{Bernoulli}
\end{figure}

It can be found in Figure \ref{Bernoulli} (a), (e), (i) that the proposed Nystr\"{o}m regularization with sub-sampling ratio $0.01$ can precisely catch the deterministic part of the time series, i.e., yielding an estimator that can approximate $x_{t}  =  0.5\sin(x_{t-1})$ very well, even though the training samples are contaminated by different noises. However, Figure \ref{Bernoulli}(b), (f), (j) show that the proposed algorithm is incapable of catching the random noise, making the derived estimator lag heavily behind the real time series.
In Figure \ref{Bernoulli} (c), (g), (k), we compare the real noise  with  $x_{t+1}-f_{D,D_j^*,\lambda}(x_t)$ to regard Nystr\"{o}m regularization as a noise-exactor. It is exhibited that using Nystr\"{o}m regularization, it is easy to mimic the random noise part of the time series.
From Figure \ref{Bernoulli} (d), (h), (l), we find that the distribution of the extracted noise via Nystr\"{o}m regularization is similar to the real noise.  All these findings yield the
the following two conclusions: 1) Nystr\"{o}m regularization with sequential sub-sampling can successfully capture the trend information (deterministic part) of time series. %While the trend could be captured in the situation of data with noise, the trend is accurately predicted in the situation of data without noise.
2) Nystr\"{o}m regularization with sequential sub-sampling has strong ability to extract the noise of data (not only the value but also the distribution).
Although the noise-extractor property of Nystr\"{o}m regularization seems trivial in toy simulation, it is extremely important in real-world time series forecasting, where the deterministic part and random noise part cannot be divided as those in the toy simulations.

\subsection{Real world applications}
\qquad In this part, we apply the proposed Nystr\"{o}m regularization with sequential sub-sampling to two real-world time series forecasting data, WTI data and BTC data, to show its learning performance.

\textbf{WTI data}:
In the industrial society, oil almost dominates the production of the whole industry. Oil and additional industries account for a high proportion in the national economy.  A deep understanding of the international crude oil price fluctuation mechanism and improving the accuracy of international crude oil price forecasts are of great significance to economic development, enterprise production operations and investment. WTI  (Western Texas Intermediate) Spot Prices from EIA U.S. (Energy Information Administration)
is suitable for refining gasoline, diesel, thermal fuel oil and aircraft fuel, etc., and can increase the output value of refineries. It is crude oil with a higher utilization rate. WTI is an important part of the international energy pricing system and has become the benchmark for global crude oil pricing. Meanwhile, WTI has become one of the two most market-oriented crude oils in the world.
Therefore, the analysis and prediction of WTI data is very important for finance and production.
The data (https://datahub.io/core/oil-prices) is daily recorded and from September 12, 2018 to August 28, 2020. Since it is difficult to check the smoothness of the regression function, we use a less-smoothed  kernel function
\begin{eqnarray*}
K(x,x')&=&1+min(x,x')
\end{eqnarray*}
in real-world data experiments.

\begin{figure}
\subfigure[ACF]{\includegraphics[scale=0.25]{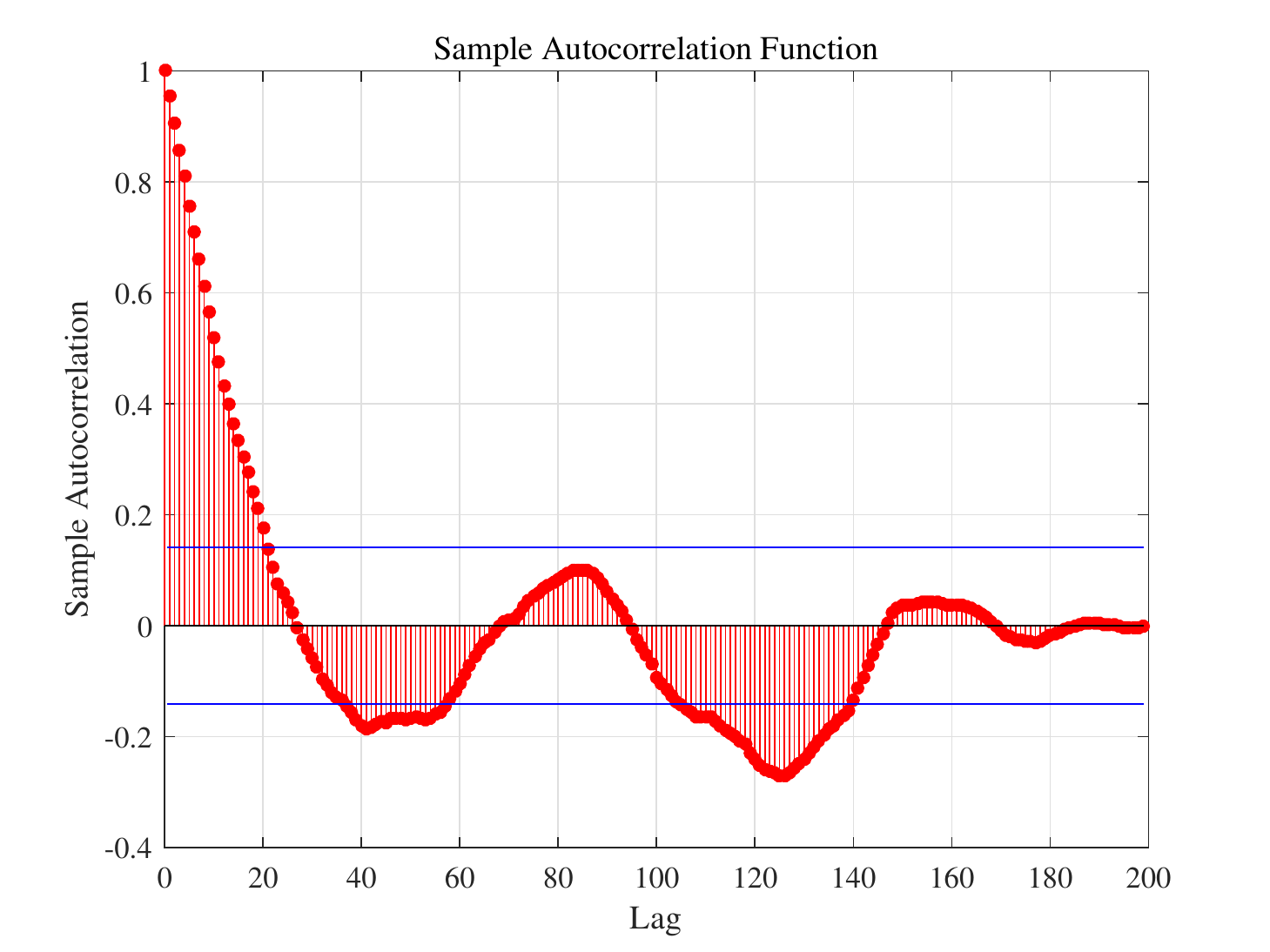}}
\subfigure[forecasting]{\includegraphics[scale=0.25]{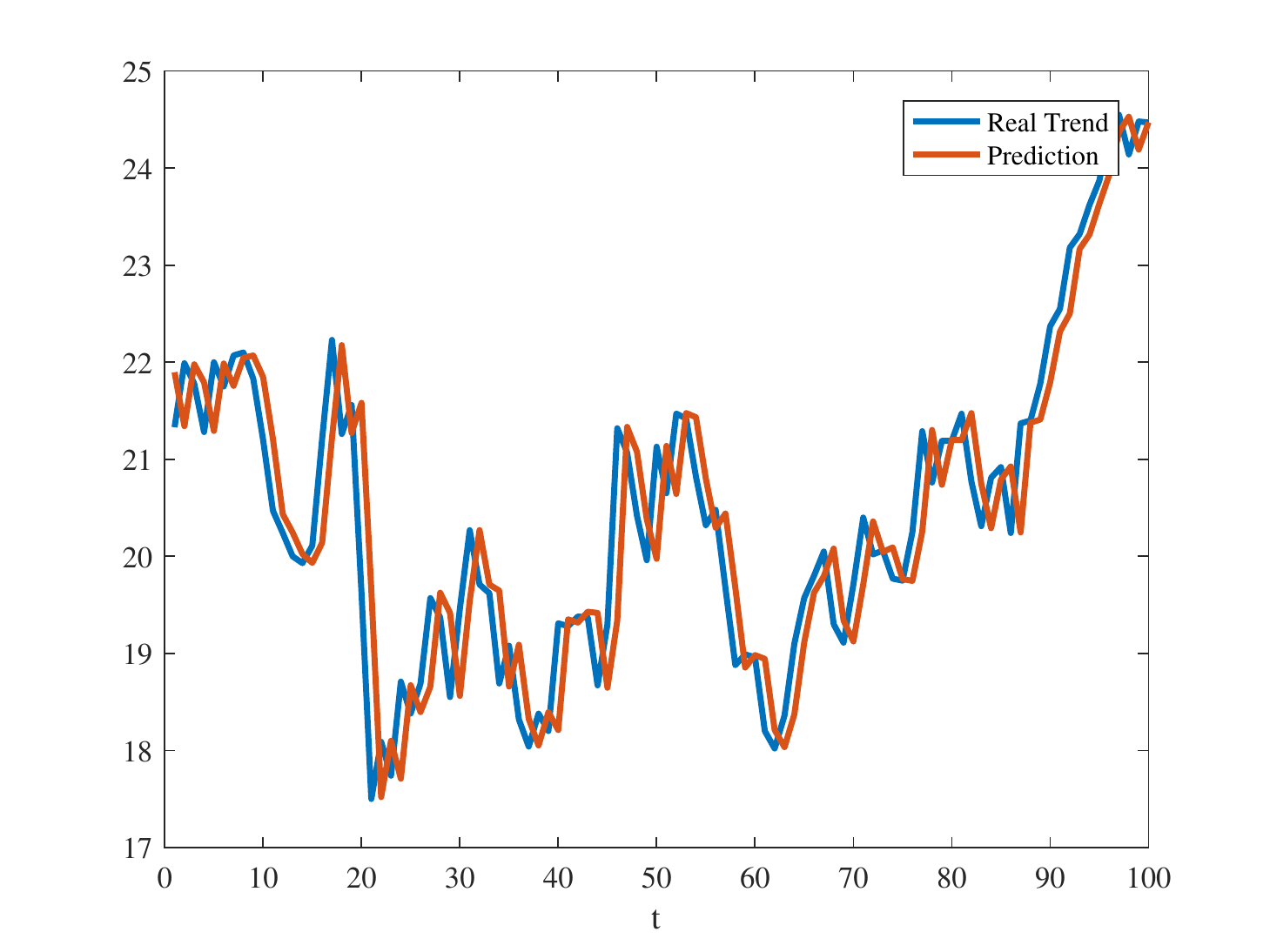}}
\subfigure[noise-extractor]{\includegraphics[scale=0.25]{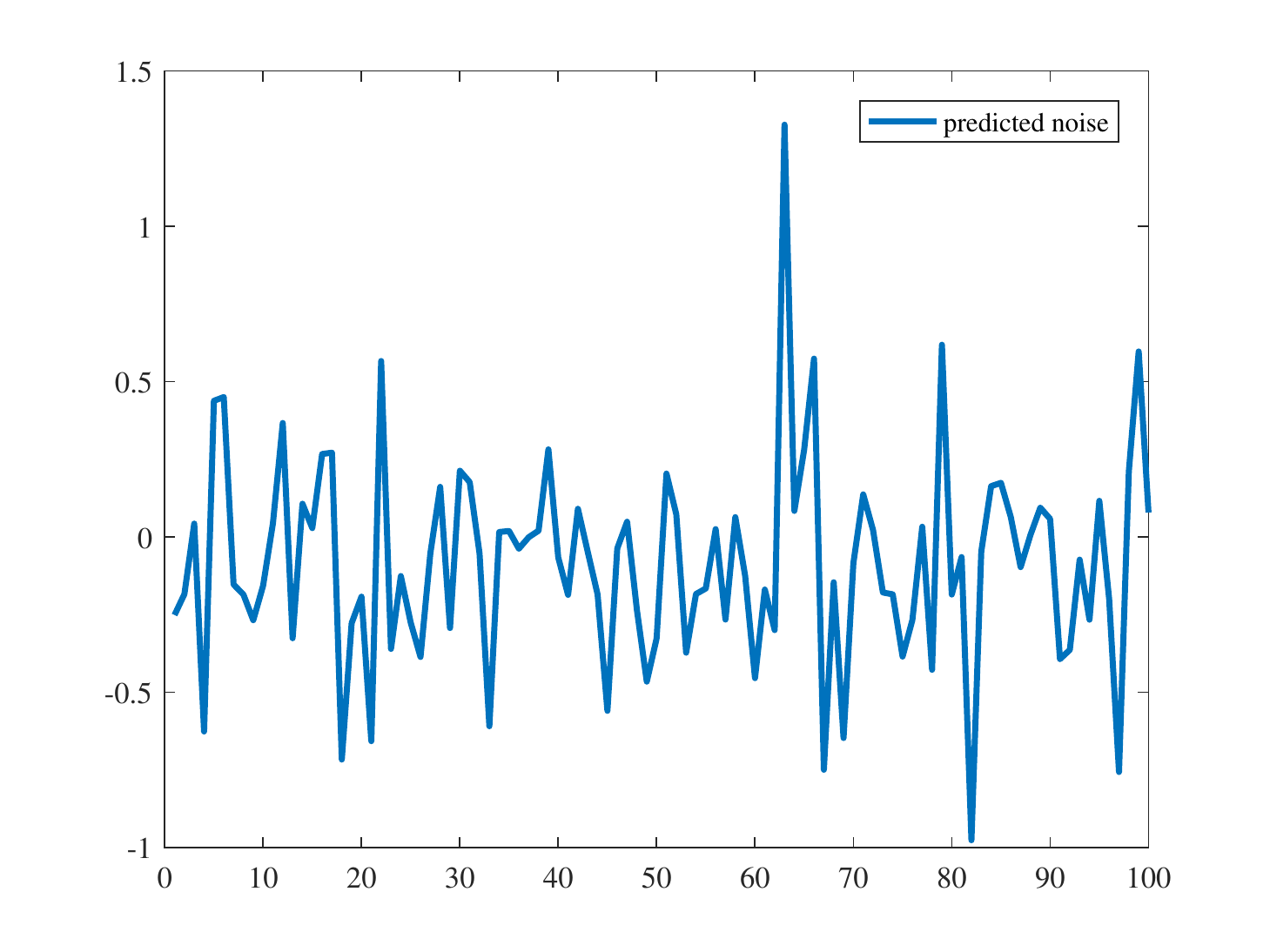}}
\subfigure[statistical bar]{\includegraphics[scale=0.25]{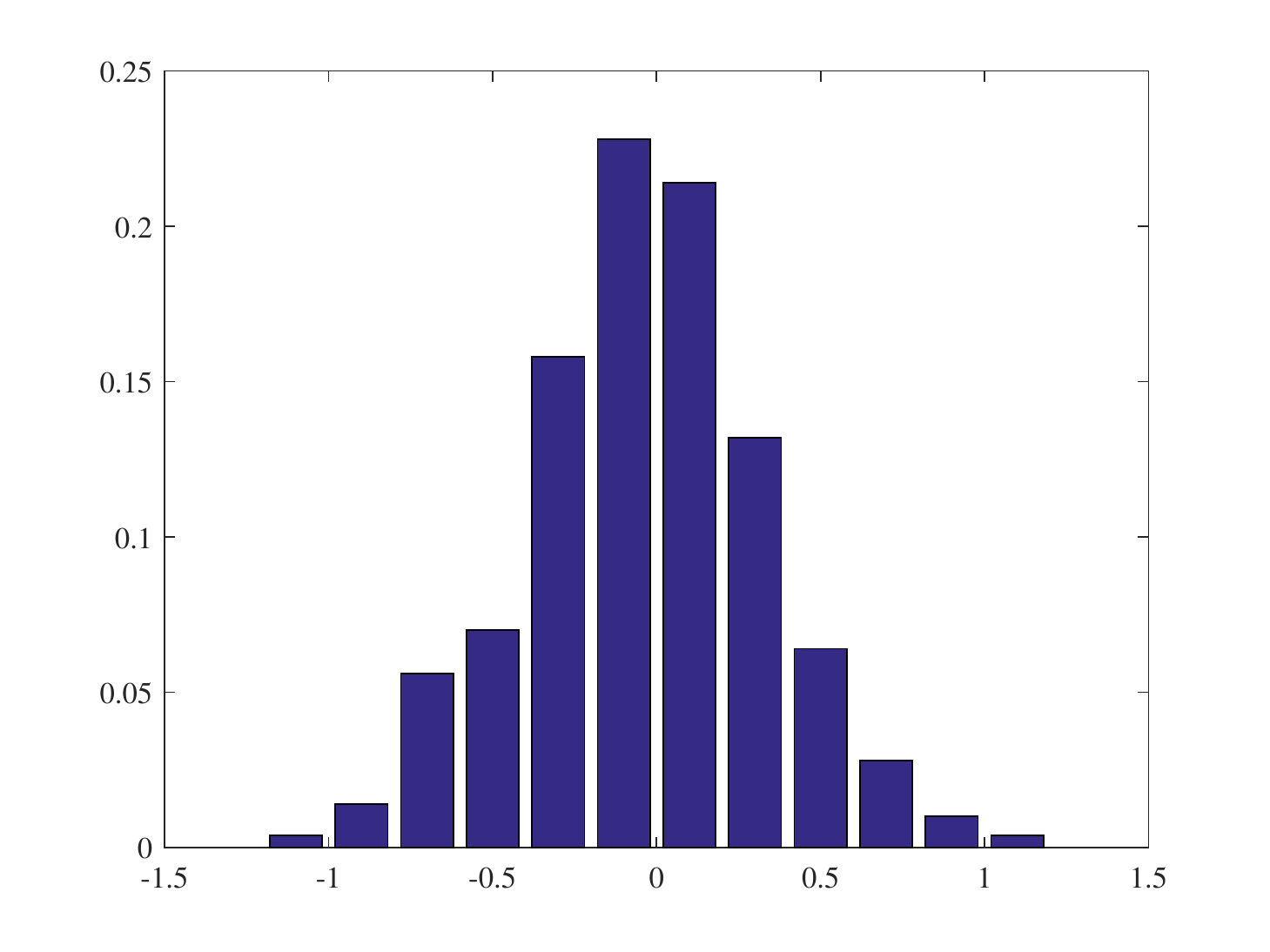}}
\\
\caption{Nystr\"{o}m regularization on WTI data}\label{WTI_fig}
\end{figure}

Before we demonstrating the availability  of Nystr\"{o}m regularization to WTI data, we present  the autocorrelation function (ACF) of the time series to verify their mixing property, just as  Figure \ref{WTI_fig} (a) purports to show. Although it is difficult to numerically compute the $\tau$-mixing coefficient of  WTI data, Figure \ref{WTI_fig} (a) implies that the dependence among time series decreases when the difference of time increases. We employ Nystr\"{o}m regularization on WTI data with the size of  samples and regularization parameters being set as:
\begin{itemize}
\item Forecasting: The number of training samples: 4000, test samples: 100, sampling ratio is 0.1, $\lambda=\frac{1}{1000}$.
\item Statistical bar: The number of training samples: 3000, test samples: 500,sampling ratio is 0.1, $\lambda=\frac{4}{3000}$.
\item Noise-extractor: The number of training samples: 3000, test samples: 100,sampling ratio is 0.1, $\lambda=\frac{4}{3000}$.
\end{itemize}
The experimental results are reported in Figure \ref{WTI_fig}.

There are three interesting findings illustrated by Figure \ref{WTI_fig}: 1)  It can be found in  Figure \ref{WTI_fig} (b) that the proposed Nystr\"{o}m regularization can roughly mimic the trend  of WTI data within a relatively small prediction error. However, due to the existence of noise, the predictor   lags slightly behind the real time series, making  the proposed algorithm a bit wise after the event; 2) As discussed in toy simulations, Nystr\"{o}m regularization can be used as a noise-extractor in practice. According to this property, we  extract the noise of WTI data in Figure \ref{WTI_fig} (b). Furthermore, Figure \ref{WTI_fig} (d) shows that the random  noise part of WTI data obeys the normal distribution; 3) Comparing Figure \ref{WTI_fig} (b) and Figure \ref{WTI_fig} (c), it is obvious that the random noise part is far smaller than the deterministic part, exhibiting a noise/signal ratio to be less than $1/15$, showing that exploiting the deterministic part of WTI data is important in practice. All these demonstrate  the excellent performance of Nystr\"{o}m regularization in WTI data forecasting.

%Figure \ref{WTI_fig}
%
%Fig.~\ref{WTI_fig} shows the experiments on WTI data. (a) shows the noise extracted from the data. (b) shows the noise distribution bar graph which shows the frequency in different intervals. (c) shows the real trend of WTI data and the prediction made by Nystr¨om regularization with sequential sub-sampling. (d) shows the ACF of WTI data. From the figures we see that: 1) the noise we extract is nearly Gaussian. 2) sequential sub-sampling can correctly predict the trend of WTI data.
% 3) ACF shows that in WTI data there is a correlation between the sample points.
%Thus the following conclusions can be drawn : 1) There is a correlation between the sample points in WTI data (non-independent). 2) Nystr¨om regularization with sequential sub-sampling has strong ability to predict the trend of real data. 3) The noise of the WTI data nearly  follows the Gaussian distribution.

\textbf{BITCOIN(BTC) data}: Bitcoin (BTC)  is a decentralized digital currency that can be transformed  from user to user on a peer-to-peer  BTC network by using the blockchain technique. After its release in 2009, its price increases from about 3\$ to 60000\$ in 2021. Such a   surge in price attracts more and more investors' attention. In particular, there are more  than 10 million active accounts in October, 2021. The problem is that, however, the price of BTC changes dramatically even for every minute, although the trends of price are roughly increasing. Therefore, it is highly desired to forecast the real trend of   price of BTC.

The BTC  data collected via ``https://www.kaggle.com/prasoonkottarathil/btcinusd'' record the price of BTC from September 17, 2014 to April 9, 2020 within each minute. Therefore, there are at least 500000 samples in the data set and the classical kernel methods and neural networks fail to tackle this massive time series. In this part, we focus on applying the proposed Nystr\"{o}m regularization on BTC data to mimic the trend of price and extract the noise. The data and parameters of Nystr\"{o}m regularization are described as follows:
\begin{itemize}
\item Forecasting: the number of training samples: 500000, test samples: 100, sampling ratio is 0.001,  $\lambda=\frac{1}{500000}$.
\item Noise-extractor: the number of training samples: 20000, test samples: 100, sampling ratio is 0.001, $\lambda=\frac{1}{20000}$.
\item   Statistical bar: the number of training samples: 20000, test samples: 3000, sampling ratio is 0.001, $\lambda=\frac{1}{20000}$.
\end{itemize}

%\begin{figure}
%\subfigure[Catching noise for BTC hour data]{\includegraphics[scale=0.5]{}}
%\subfigure[Statistical bar for BTC hour data]{\includegraphics[scale=0.5]{}}
%\subfigure[Prediction for BTC hour data]{\includegraphics[scale=0.5]{}}
%\subfigure[ACF for BTC hour data]{\includegraphics[scale=0.5]{}}
%\caption{BTC hour data}\label{BTC_hr}
%\end{figure}
\begin{figure}
\subfigure[ACF]{\includegraphics[scale=0.25]{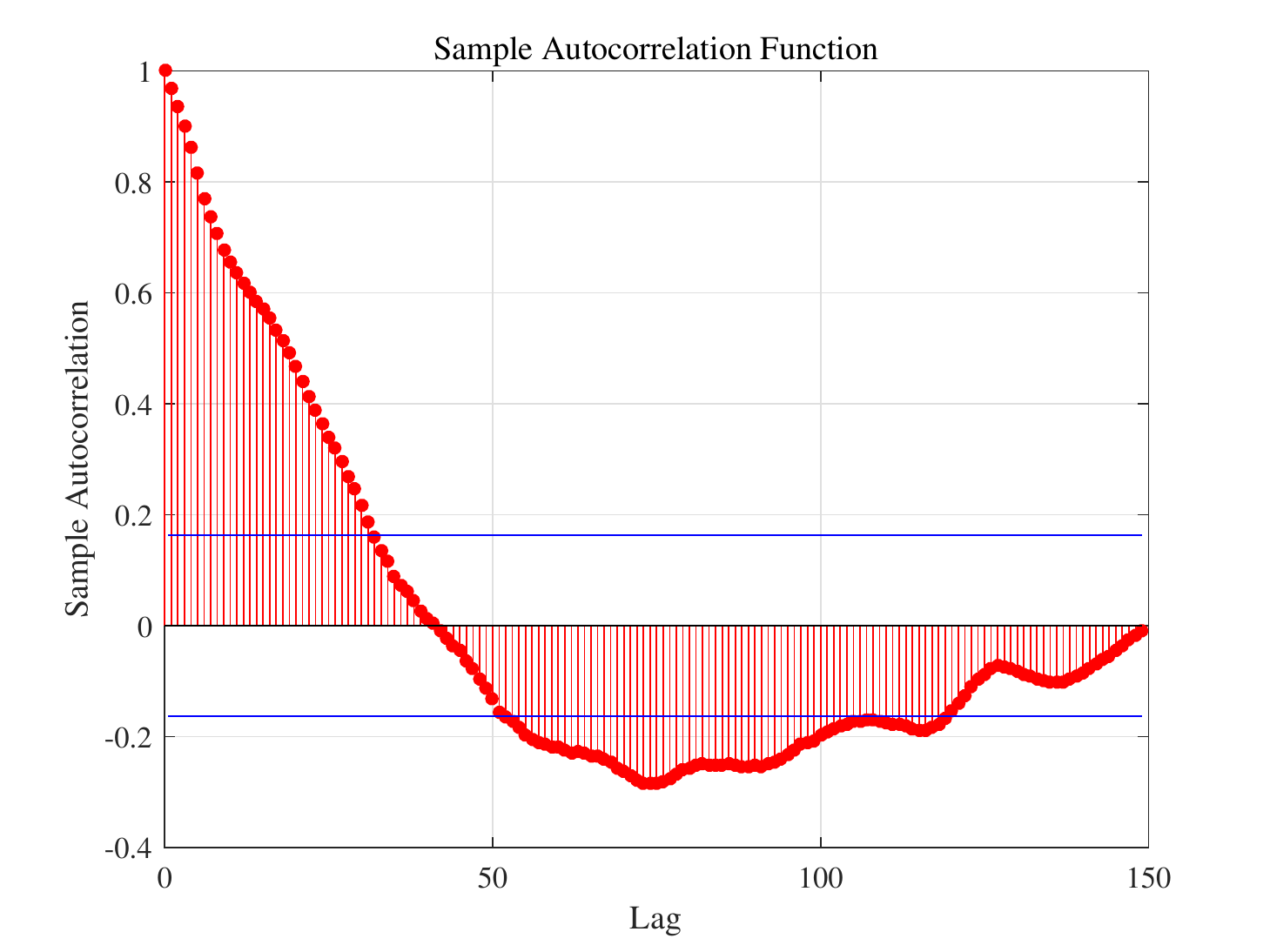}}
\subfigure[Forecasting]{\includegraphics[scale=0.25]{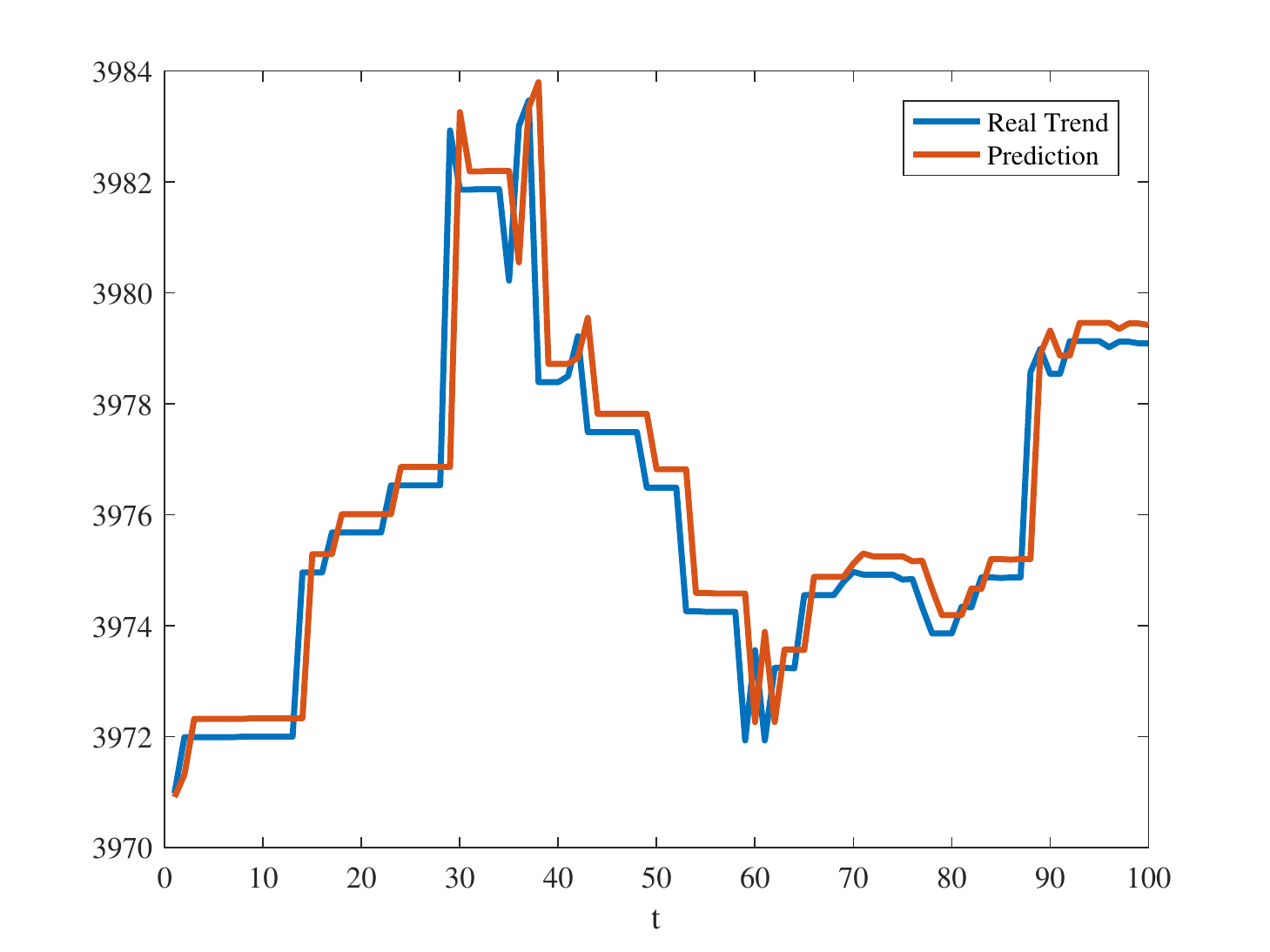}}
\subfigure[Noise-extractor]{\includegraphics[scale=0.25]{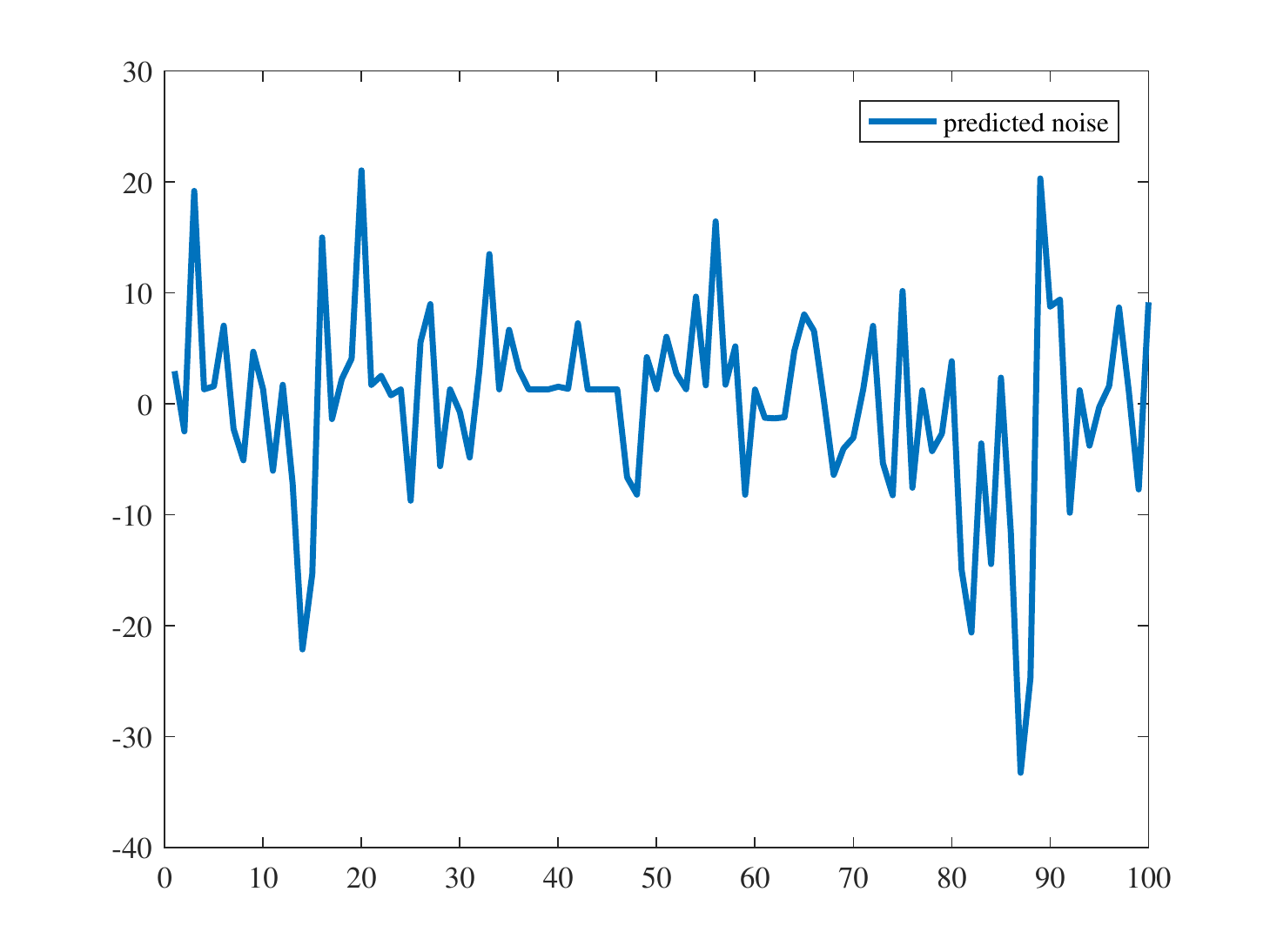}}
\subfigure[Statistical bar]{\includegraphics[scale=0.25]{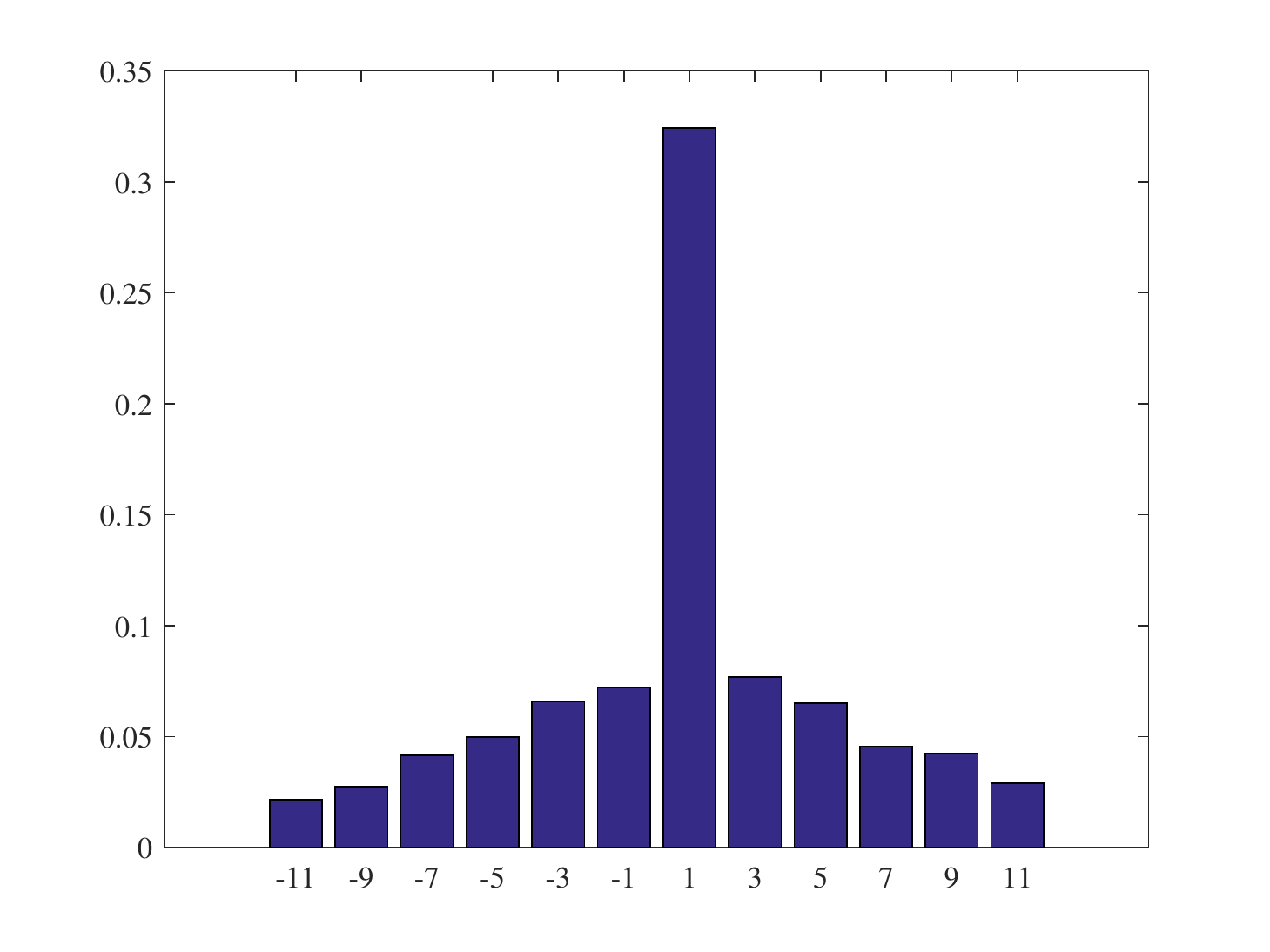}}
\caption{Nystr\"{o}m regularization for BTC min data}\label{BTC_min}
\end{figure}

The numerical  results are shown in Figure \ref{BTC_min}. As shown in Figure \ref{BTC_min} (a), the time series  concerning BTC price exhibits certain mixing property in the sense that the ACF curve decreases with respect to  the    difference of time. Like WTI data, it can be found in Figure \ref{BTC_min} (b) that the trends predicted by Nystr\"{o}m regularization can approximate the real trend within a relative small error, although there also exists a bit delay in prediction. As discussed above, the reason of delay is the existence of random noise that can be learned by BTC data. We then conduct our experiment on pursuing the distribution of such random noise.   Figure \ref{BTC_min} (c) and (d) show that the random noise part of BTC data behaves roughly as a normal distribution with standard deviation being smaller {than 30\$}.  In summary,
even with extremely small sub-sampling ratio (about 0.001), Nystr\"{o}m regularization can still predict the trend well and is capable of describing the distribution of noise for  Bitcoin price.
All these show the power of the suggested Nystr\"{o}m regularization with sequential sub-sampling in forecasting  time series and extracting its noise information.

\section{Proofs}\label{Sec.Proof}
We divide our proof into four steps: error decomposition, error estimates based on operator differences, estimate for operator differences and final  error analysis.

\subsection{Error decomposition}
We conduct our analysis in the popular integral operator approach developed in \citep{Smale2007,Rudi2015,Lin2017}.  Let $S_{D}:\mathcal H_K\rightarrow\mathbb R^{|D|}$ be the sampling
operator \citep{Smale2007} defined by
\begin{equation}\label{sampling-operator}
         S_{D}f:=(\langle f,K_{x}\rangle_K)_{(x,y)\in D}=(f(x))_{(x,y)\in D}.
\end{equation}
 Its scaled adjoint $S_D^T:\mathbb R^{|D|}\rightarrow
\mathcal H_K$  is
given by
$$
       S_{D}^T{\bf c}:=\frac1{|D|}\sum_{i=1}^{|D|}c_iK_{x_i},\qquad {\bf
       c}:=(c_1,c_2,\dots,c_{|D|})^T
       \in\mathbb
       R^{|D|}.
$$
Define
$$
         L_{K,D}f:=S_D^TS_Df=\frac1{|D|}\sum_{(x,y)\in
         D}f(x)K_x.
$$

In this proof only, we denote by $D_m$ an arbitrary sequential sub-sampling of $D$ with cardinality $m$.
Let $P_{D_m}$ be the projection from $\mathcal H_K$ to $\mathcal
H_{D_m}$, where $\mathcal H_{D_m}$ is defined by (\ref{hypothesis-space-sub}) with $D_j=D_m$.
Then for arbitrary $v>0$, there holds
\begin{equation}\label{projection-p-1}
      (I-P_{D_m})^v=I-P_{D_m}.
\end{equation}
Let $\mathcal S_D:\mathcal H_{D_m}\rightarrow \mathbb R^{|D|}$ be the sampling  operator defined by (\ref{sampling-operator}) such that the range of its adjoint  operator $\mathcal S^T_D$ is exactly $\mathcal H_{D_m}$. Let
$
       U\Sigma V^T
$
be the SVD of $\mathcal S_D$. Then we have
\begin{equation}\label{VD-property}
   V_{D_m}^TV_{D_m}=I,\qquad   V_{D_m}V_{D_m}^T=P_{D_m}.
\end{equation}

Write
\begin{equation}\label{spetral definetion}
     g_{D_m,\lambda}(L_{K,D}):=V_{D_m}(V_{D_m}^TL_{K,D}V_{D_m}+\lambda
     I)^{-1}V_{D_m}^T.
\end{equation}
Then it can be found in \citep{Rudi2015}
that
\begin{equation}\label{Nystrom operator}
       f_{D,D_m,\lambda}=g_{D_m,\lambda}(L_{K,D})S_D^Ty_D.
\end{equation}
Therefore, (\ref{Nystrom}) with $D_j=D_m$ is similar as the     spectral-type algorithms \citep{LoGerfo2008}. Under this circumstance, the property of $g_{D_m,\lambda}(L_{K,D})$ plays a crucial rule in bounding the generalization error of $f_{D,D_m,\lambda}$. For an arbitrary bounded linear operator $B$, it follows from  \eqref{VD-property} and \eqref{spetral definetion} that
\begin{eqnarray}\label{Important1}
     &&g_{D_m,\lambda}(L_{K,D})(L_{K,D}+\lambda
     I)V_{D_m}BV_{D_m}^T
      =
     V_{D_m}BV_{D_m}^T.
\end{eqnarray}
Inserting $B=(V^T_{D_m}L_{K,D}V_{D_m}+\lambda I)^{-1}$ into (\ref{Important1}), we obtain
\begin{eqnarray*}
      &&\|(L_{K,D}+\lambda
      I)^{1/2}g_{D_m,\lambda}(L_{K,D})(L_{K,D}+\lambda
      I)^{1/2}\|^2\\
      &=&
      \|(L_{K,D}+\lambda
      I)^{1/2}g_{D_m,\lambda}(L_{K,D})(L_{K,D}+\lambda I)
       {g_{D_m,\lambda}}(L_{K,D})
      (L_{K,D}+\lambda I)^{1/2}\|\\
      &=&
      \|(L_{K,D}+\lambda
      I)^{1/2}g_{D_m,\lambda}(L_{K,D})(L_{K,D}+\lambda I)^{1/2}\|,
\end{eqnarray*}
which yields
\begin{equation}\label{Important2}
    \|(L_{K,D}+\lambda
      I)^{1/2}g_{D_m,\lambda}(L_{K,D})(L_{K,D}+\lambda
      I)^{1/2}\|= 1,
\end{equation}
where $\|\cdot\|$ denotes the operator norm.

Define
$$
        f_{D,D_m,\lambda}^*:=g_{D_m,\lambda}(L_{K,D})L_{K,D}f_\rho
$$
be the noise-free version of $ f_{D,D_m,\lambda}$. The triangle inequality shows
\begin{equation}\label{error-d-1}
     \|f_{D,D_m,\lambda}-f_\rho\|_\rho
     \leq
     \|f_{D,D_m,\lambda}-f_{D,D_m,\lambda}^*\|_\rho
     +
     \|f_{D,D_m,\lambda}^*-f_\rho\|_\rho.
\end{equation}
Since $\|f_{D,D_m,\lambda}-f_{D,D_m,\lambda}^*\|_\rho$ describes the noise of samples, it  is named as the sample error in \citep{Rudi2015}.
From \eqref{spetral definetion}, we get
\begin{eqnarray*}
      \|f_{D,D_m,\lambda}^*-f_\rho\|_\rho
     &\leq& \|(g_{D_m,\lambda}(L_{K,D})L_{K,D}-I)(I-P_{D_m})f_\rho\|_\rho\\
     &+&\|(g_{D_m,\lambda}(L_{K,D})L_{K,D}-I)P_{D_m}f_\rho\|_\rho.
\end{eqnarray*}
It is easy to see that the second term in the righthand side of the above estimate is similar as the classical approximation error \citep{Guo2017}. The first term involves an additional term  $I-P_{D_m}$ to show the limitation of sub-sampling and thus is named as the computational error in \citep{Rudi2015}. Plugging the above inequality into \eqref{error-d-1}, we obtain
\begin{eqnarray}\label{Error decomposition}
   \|f_{D,D_m,\lambda}-f_\rho\|_\rho
   \leq
  \mathcal A(D,\lambda,m)+\mathcal S(D,\lambda,m)+\mathcal C(D,\lambda,m),
\end{eqnarray}
where
\begin{eqnarray*}
   \mathcal A(D,\lambda,m)&=&\|(g_{D_m,\lambda}(L_{K,D})L_{K,D}-I)P_{D_m}f_\rho\|_\rho,\\
   \mathcal S(D,\lambda,m)&=&\|f_{D,D_m,\lambda}-f_{D,D_m,\lambda}^*\|_\rho,\\ \mathcal C(D,\lambda,m)&=&\|(g_{D_m,\lambda}(L_{K,D})L_{K,D}-I)(I-P_{D_m})f_\rho\|_\rho
\end{eqnarray*}
are called the approximation error, sample error and computational error, respectively.

\subsection{Error estimates based on operator differences}
In  this part, we quantify $\|f_{D,D_m,\lambda}-f_\rho\|_\rho $ via differences between operators $L_{K,D}$ and $L_K$ and functions $S_D^Ty_D$ and $L_{K,D}f_\rho$. In particular, we use the products of operators
\begin{eqnarray*}
         \mathcal Q_{D,\lambda}&:=&\left\|(L_K + \lambda I) (L_{K,D}+\lambda
         I)^{-1}\right\|,\\
         \mathcal Q^*_{D,\lambda}&:=&\left\|(L_K + \lambda I)^{-1}(L_{K,D}+\lambda
         I) \right\|,
\end{eqnarray*}
and the difference of operators
$$
    \mathcal R_{D,\lambda}:=\left\|(L_K+\lambda
          I)^{-1/2}(L_{K,D}-L_K)\right\|
$$
to quantify the similarity of $L_K$ and $L_{K,D}$, while utilize
$$
         \mathcal P_{D,\lambda}:=
         \left\|(L_K+\lambda
          I)^{-1/2}(L_{K,D}f_\rho-S_D^Ty_D)\right\|_K
$$
to quantify the difference between $L_{K,D}f_\rho$ and $S_D^Ty_D$. Our main tools are the following two lemmas
that can be found in \cite[Proposition 3]{Rudi2015} and \cite[Proposition 6]{Rudi2015}, respectively.

\begin{lemma}\label{Lemma:Projection general}
Let $\mathcal H$, $\mathcal K$ , $\mathcal F$ be three separable
Hilbert spaces. Let $Z:\mathcal H\rightarrow\mathcal K$ be a bounded
linear operator and $P$ be a projection operator on $\mathcal H$
such that $\mbox{range}P=\overline{\mbox{range}Z^T}$. Then for any
bounded linear operator $F:\mathcal F\rightarrow\mathcal H$ and any
$\lambda>0$ we have
$$
     \|(I-P)F\|\leq\lambda^{1/2}\|(Z^TZ+\lambda I)^{-1/2}F\|.
$$
\end{lemma}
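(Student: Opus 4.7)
The plan is to exploit the fact that $I-P$ is the orthogonal projection onto $\ker(Z)$ and then use the Borel functional calculus for the positive self-adjoint operator $Z^TZ$ to replace $I-P$ by $\lambda^{1/2}(Z^TZ+\lambda I)^{-1/2}$ on that subspace.

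First I would set $Q:=I-P$. Since $P$ is the orthogonal projection onto $\overline{\mathrm{range}(Z^T)}$, the operator $Q$ is the orthogonal projection onto its orthogonal complement, which by the standard decomposition $\mathcal H=\overline{\mathrm{range}(Z^T)}\oplus\ker(Z)$ equals $\ker(Z)$. Consequently $ZQ=0$; taking adjoints gives $QZ^T=0$, whence $QZ^TZ=0$ and, by self-adjointness of $Z^TZ$, also $Z^TZ\,Q=0$. In particular $Q$ commutes with $Z^TZ$.

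Next I would observe that $Q$ is precisely the spectral projection of $Z^TZ$ onto its zero-eigenspace. By functional calculus, $Q$ therefore commutes with any Borel function of $Z^TZ$, including $(Z^TZ+\lambda I)^{\pm 1/2}$; moreover, since $Z^TZ+\lambda I$ acts as $\lambda I$ on $\mathrm{range}(Q)$, the identity
\begin{equation*}
Q(Z^TZ+\lambda I)^{1/2}=\lambda^{1/2}Q
\end{equation*}
holds as operators on $\mathcal H$. I would then factor $QF$ through $(Z^TZ+\lambda I)^{1/2}(Z^TZ+\lambda I)^{-1/2}=I$ to get
\begin{equation*}
(I-P)F \;=\; QF \;=\; Q(Z^TZ+\lambda I)^{1/2}(Z^TZ+\lambda I)^{-1/2}F \;=\; \lambda^{1/2}\,Q\,(Z^TZ+\lambda I)^{-1/2}F,
\end{equation*}
and take operator norms, using $\|Q\|\le 1$, to conclude $\|(I-P)F\|\le\lambda^{1/2}\|(Z^TZ+\lambda I)^{-1/2}F\|$.

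Since the argument is essentially a one-line functional-calculus identity, there is no serious obstacle; the only point that deserves care is the justification of $\mathrm{range}(P)^{\perp}=\ker(Z)$ from the hypothesis $\mathrm{range}(P)=\overline{\mathrm{range}(Z^T)}$, which is the standard Hilbert-space orthogonal-decomposition identity, and the verification that $Q$ is in fact the spectral projection onto $\ker(Z^TZ)$, so that it commutes with $(Z^TZ+\lambda I)^{\pm 1/2}$.
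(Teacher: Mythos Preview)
Your proof is correct. The paper does not actually prove this lemma; it simply cites it as \cite[Proposition 3]{Rudi2015}, so there is no ``paper's own proof'' to compare against. Your functional-calculus argument---identifying $I-P$ with the spectral projection of $Z^TZ$ onto its kernel and using $Q(Z^TZ+\lambda I)^{1/2}=\lambda^{1/2}Q$---is a clean, self-contained justification of the cited result.
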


\begin{lemma}\label{Lemma:operator inequality general}
Let $\mathcal H,\mathcal K$ be two separable Hilbert spaces,
$A:\mathcal H\rightarrow\mathcal H$ be a positive linear operator,
$V:\mathcal H\rightarrow\mathcal K$ be a partial isometry and
$B:\mathcal K\rightarrow\mathcal K$ be a bounded operator. Then for
all $0\leq r^*,s^*\leq1/2$, there holds
$$
      \|A^{r^*}VBV^TA^{s^*}\|\leq\|(V^TAV)^{r^*}B(V^TAV)^{s^*}\|.
$$
\end{lemma}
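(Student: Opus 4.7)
The plan is to reduce everything to the one-sided operator inequality
\begin{equation*}
V^T A^p V \;\leq\; (V^T A V)^p = M^p, \qquad p \in [0,1], \tag{$\star$}
\end{equation*}
and then chain it with a few applications of $\|T\|^2 = \|T^* T\| = \|T T^*\|$. The powers $2r^*$ and $2s^*$ will both lie in $[0,1]$ precisely because the lemma restricts $r^*, s^* \in [0,1/2]$.

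I would first establish $(\star)$. The positive linear map $\Phi: X \mapsto V^T X V$ from $B(\mathcal H)$ to $B(\mathcal K)$ satisfies $\Phi(I_{\mathcal H}) = V^T V \leq I_{\mathcal K}$; i.e.\ it is subunital, because $V$ is a partial isometry. By L\"owner's theorem, $t \mapsto t^p$ is operator concave on $[0,\infty)$ for $p \in [0,1]$ and it vanishes at $0$. Hansen--Pedersen's Jensen inequality for subunital positive maps then gives $\Phi(A^p) \leq \Phi(A)^p$, which is exactly $(\star)$.

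Next, to deal with the full expression, I would use $\|T\|^2 = \|T^* T\|$ on $T = A^{r^*} V B V^T A^{s^*}$, writing
\begin{equation*}
\|T\|^2 = \|A^{s^*} V B^* (V^T A^{2r^*} V) B V^T A^{s^*}\|.
\end{equation*}
Applying $(\star)$ with $p = 2r^*$ yields $V^T A^{2r^*} V \leq M^{2r^*}$; sandwiching this inequality by $B^*, B$, then by $V, V^T$, and then by $A^{s^*}, A^{s^*}$ preserves the positive-operator order, so operator-norm monotonicity on positive operators gives $\|T\|^2 \leq \|A^{s^*} V (B^* M^{2r^*} B) V^T A^{s^*}\|$.

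Finally, setting $Y = B^* M^{2r^*} B \geq 0$ and applying $\|T T^*\| = \|T^* T\|$ with $T = A^{s^*} V Y^{1/2}$ rewrites the right-hand side as $\|Y^{1/2} V^T A^{2s^*} V Y^{1/2}\|$. A second use of $(\star)$, now with $p = 2s^*$, followed by two more applications of $\|T^* T\| = \|T T^*\|$ (first with $T = M^{s^*} Y^{1/2}$, then with $T = M^{r^*} B M^{s^*}$), delivers
\begin{equation*}
\|Y^{1/2} V^T A^{2s^*} V Y^{1/2}\| \leq \|Y^{1/2} M^{2s^*} Y^{1/2}\| = \|M^{s^*} Y M^{s^*}\| = \|M^{r^*} B M^{s^*}\|^2,
\end{equation*}
and taking square roots closes the argument. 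The main obstacle is $(\star)$ itself: the textbook Jensen operator inequality is usually stated for unital maps, but here $\Phi$ is only subunital because $V^T V$ is merely a projection. The subunital version (Hansen--Pedersen) applies precisely because $t^p$ vanishes at $0$; once $(\star)$ is secured, no invertibility of $A$ or $M$ is required, and the rest is a routine manipulation of operator-norm identities.
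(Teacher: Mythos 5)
Your proof is correct: the Hansen--Pedersen operator Jensen inequality applied to the subunital positive map $X \mapsto V^T X V$ gives $V^T A^p V \leq (V^T A V)^p$ for $p \in [0,1]$ (this is precisely where the restriction $r^*, s^* \leq 1/2$ enters, via $p = 2r^*$ and $p = 2s^*$), and the subsequent chain of $\|T\|^2 = \|T^*T\| = \|TT^*\|$ identities together with order preservation under conjugation and norm monotonicity on positive operators is sound; note only that you have tacitly (and correctly) read $V$ as mapping $\mathcal K \to \mathcal H$, since the direction $V:\mathcal H \to \mathcal K$ as stated makes the compositions ill-typed and contradicts the use $V_{D_m}V_{D_m}^T = P_{D_m}$ in the application. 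The paper gives no proof of this lemma --- it imports it as Proposition 6 of Rudi et al.\ (2015) --- and your argument is essentially the one in that source, so you have supplied in full the proof the paper only cites.
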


With the help of  above lemmas and the important properties of $g_{D_m,\lambda}$ in \eqref{Important1} and \eqref{Important2}, we derive the following error estimate for $\|f_{D,D_m,\lambda}-f_\rho\|_\rho.$

\begin{proposition}\label{Proposition:Error decomposition detailed}
If (\ref{regularitycondition}) holds with $1/2\leq r\leq 1$, then we
have
\begin{eqnarray}\label{Error decomposition detailed}
     \|f_{D,D_m,\lambda}-f_\rho\|_\rho
    &\leq&
      \mathcal Q_{D,\lambda}\mathcal P_{D,\lambda}
      +
      \lambda^r\mathcal Q_{D,\lambda}^{r}\|h_\rho\|_\rho \nonumber\\
      &+&
      (\mathcal Q_{D,\lambda}^{\frac12}\mathcal (Q_{D,\lambda}^*)^{\frac12}+1)\lambda^{r}\mathcal
     Q_{D_m,\lambda}^{r}\|h_\rho\|_\rho.
\end{eqnarray}
\end{proposition}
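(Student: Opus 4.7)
The plan is to bound each term of the error decomposition \eqref{Error decomposition} separately so as to match the three summands on the right-hand side of the proposition. Throughout I would use (a) the norm identity $\|f\|_\rho=\|L_K^{1/2}f\|_K$ for $f\in\mathcal{H}_K$, (b) the SVD relations $V_{D_m}^TV_{D_m}=I$, $V_{D_m}V_{D_m}^T=P_{D_m}$ and the sandwich identities \eqref{Important1}--\eqref{Important2}, and (c) Cordes-type operator-monotonicity bounds of the form $\|(L_K+\lambda I)^a(L_{K,D}+\lambda I)^{-a}\|\leq\mathcal{Q}_{D,\lambda}^a$ (and analogues with $\mathcal{Q}_{D,\lambda}^*$ and $\mathcal{Q}_{D_m,\lambda}$) to shift spectral powers between the operators $L_K$, $L_{K,D}$, and $L_{K,D_m}$.

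For the sample error, I start from $f_{D,D_m,\lambda}-f_{D,D_m,\lambda}^*=g_{D_m,\lambda}(L_{K,D})(S_D^Ty_D-L_{K,D}f_\rho)$, pass to $\mathcal{H}_K$ via $L_K^{1/2}$, and insert the resolution $(L_{K,D}+\lambda I)^{-1/2}(L_{K,D}+\lambda I)^{1/2}$ twice so as to sandwich $g_{D_m,\lambda}(L_{K,D})$. The central block has norm $1$ by \eqref{Important2}; the two outer blocks $L_K^{1/2}(L_{K,D}+\lambda I)^{-1/2}$ and $(L_{K,D}+\lambda I)^{-1/2}(L_K+\lambda I)^{1/2}$ each contribute $\mathcal{Q}_{D,\lambda}^{1/2}$; and the remaining factor is precisely $\mathcal{P}_{D,\lambda}$, yielding the first summand $\mathcal{Q}_{D,\lambda}\mathcal{P}_{D,\lambda}$. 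For the approximation error, the key algebraic identity is $(g_{D_m,\lambda}(L_{K,D})L_{K,D}-I)P_{D_m}=-\lambda g_{D_m,\lambda}(L_{K,D})$, which follows by computing $V_{D_m}(B'+\lambda I)^{-1}B'V_{D_m}^T=P_{D_m}-\lambda g_{D_m,\lambda}(L_{K,D})$ with $B'=V_{D_m}^TL_{K,D}V_{D_m}$. Hence $\mathcal{A}(D,\lambda,m)=\lambda\|g_{D_m,\lambda}(L_{K,D})f_\rho\|_\rho$, and after substituting $f_\rho=L_K^rh_\rho$, the same sandwich plus the elementary spectral estimate $\|(L_K+\lambda I)^{-1/2}L_K^rh_\rho\|_K\leq\lambda^{r-1}\|h_\rho\|_\rho$ (derived from $\sigma^{2r-1}(\sigma+\lambda)^{-1}\leq\lambda^{2r-2}$ for $r\in[\tfrac12,1]$) produces the second summand $\lambda^r\mathcal{Q}_{D,\lambda}^r\|h_\rho\|_\rho$.

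The computational error is the most delicate step and is where I expect the main technical obstacle, since it blends full-sample operators ($L_{K,D}$, $\mathcal{Q}_{D,\lambda}$, $\mathcal{Q}_{D,\lambda}^*$) with sub-sample operators ($L_{K,D_m}$, $\mathcal{Q}_{D_m,\lambda}$). I would begin with the triangle split
\begin{equation*}
\mathcal{C}(D,\lambda,m)\leq \|L_K^{1/2}g_{D_m,\lambda}(L_{K,D})L_{K,D}(I-P_{D_m})f_\rho\|_K + \|(I-P_{D_m})f_\rho\|_\rho,
\end{equation*}
observing that the ``$+1$'' in the third summand comes from the second term. For that term, use idempotence \eqref{projection-p-1} to write $L_K^{1/2}(I-P_{D_m})f_\rho=L_K^{1/2}(I-P_{D_m})\cdot(I-P_{D_m})f_\rho$ and apply Lemma~\ref{Lemma:Projection general} to each factor, with $Z$ chosen so that $Z^TZ=L_{K,D_m}$; combined with Cordes this yields $\lambda^r\mathcal{Q}_{D_m,\lambda}^r\|h_\rho\|_\rho$. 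For the first term, factor $L_{K,D}(I-P_{D_m})=L_{K,D}^{1/2}\cdot L_{K,D}^{1/2}(I-P_{D_m})$; applying Lemma~\ref{Lemma:Projection general} to $L_{K,D}^{1/2}(I-P_{D_m})$ together with the identity $\|(L_K+\lambda I)^{-1/2}L_{K,D}^{1/2}\|^2=\|(L_K+\lambda I)^{-1/2}L_{K,D}(L_K+\lambda I)^{-1/2}\|\leq\mathcal{Q}_{D,\lambda}^*$ produces the $(\mathcal{Q}_{D,\lambda}^*)^{1/2}$ factor, while the leftover operator $L_K^{1/2}g_{D_m,\lambda}(L_{K,D})L_{K,D}^{1/2}$ is sandwiched as in the sample-error step and contributes a further $\mathcal{Q}_{D,\lambda}^{1/2}$. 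Summing the two pieces reproduces the third summand $(\mathcal{Q}_{D,\lambda}^{1/2}(\mathcal{Q}_{D,\lambda}^*)^{1/2}+1)\lambda^r\mathcal{Q}_{D_m,\lambda}^r\|h_\rho\|_\rho$, and the proposition follows by adding the three bounds. The delicate bookkeeping lies in correctly pairing the Cordes exponents so that $\mathcal{Q}$-, $\mathcal{Q}^*$-, and $\mathcal{Q}_{D_m,\lambda}$-factors appear with the prescribed powers, and in handling the interplay between the spaces $\mathcal{H}_{D_m}$ and $\mathcal{H}_K$ throughout the projection arguments.
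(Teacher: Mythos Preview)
Your overall architecture matches the paper's proof: bound $\mathcal S$, $\mathcal A$, $\mathcal C$ from \eqref{Error decomposition} separately, using the sandwich identity \eqref{Important2}, Cordes, and Lemma~\ref{Lemma:Projection general}. The sample-error argument and the computational-error split are essentially what the paper does (your $L_{K,D}^{1/2}$ factorisation and the paper's route through $(L_{K,D}+\lambda I)^{-1/2}L_{K,D}$ are equivalent ways of producing the $(\mathcal Q_{D,\lambda}^*)^{1/2}$ factor).

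There is, however, a gap in your approximation-error step. The route you describe---apply the \emph{symmetric} sandwich \eqref{Important2} and then the scalar estimate $\|(L_K+\lambda I)^{-1/2}L_K^rh_\rho\|_K\le\lambda^{r-1}\|h_\rho\|_\rho$---actually gives
\[
\mathcal A(D,\lambda,m)\;\le\;\lambda\cdot\mathcal Q_{D,\lambda}^{1/2}\cdot 1\cdot \mathcal Q_{D,\lambda}^{1/2}\cdot\lambda^{r-1}\|h_\rho\|_\rho\;=\;\lambda^r\mathcal Q_{D,\lambda}\|h_\rho\|_\rho,
\]
i.e.\ exponent $1$, not $r$, on $\mathcal Q_{D,\lambda}$. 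To get the sharper $\mathcal Q_{D,\lambda}^{r}$ in the proposition one needs the \emph{asymmetric} estimate
\[
\bigl\|(L_{K,D}+\lambda I)^{1/2}\,g_{D_m,\lambda}(L_{K,D})\,P_{D_m}\,(L_{K,D}+\lambda I)^{r-1/2}\bigr\|\;\le\;\lambda^{r-1},
\]
and because $g_{D_m,\lambda}(L_{K,D})$ is a \emph{projected} resolvent (not $(L_{K,D}+\lambda I)^{-1}$), this does not follow from \eqref{Important2} alone. This is precisely where the paper invokes Lemma~\ref{Lemma:operator inequality general} (the partial-isometry interpolation inequality with $r^*=1/2$, $s^*=r-1/2$), an ingredient you do not list. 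A parallel issue appears in the computational error if you want the exponent $r$ on $\mathcal Q_{D_m,\lambda}$: ``apply Lemma~\ref{Lemma:Projection general} to each factor'' naturally produces $\mathcal Q_{D_m,\lambda}^{1}$; the paper instead bounds $\|(I-P_{D_m})L_K^{r-1/2}\|\le\|(I-P_{D_m})L_K^{1/2}\|^{2r-1}$ via Cordes before invoking Lemma~\ref{Lemma:Projection general} once. Your weaker exponents would still suffice for Theorems~\ref{Theorem:error-for-exp}--\ref{Theorem:error-for-alg} (since $\mathcal Q_{D,\lambda}$ is ultimately bounded by a constant), but they do not prove the proposition as stated.
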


\begin{proof}
According to \eqref{Error decomposition}, it suffices to bound $\mathcal A(D,\lambda,m), \mathcal S(D,\lambda,m),\mathcal C(D,\lambda,m)$ respectively. We at first use (\ref{Important1}) and (\ref{Important2}) to bound $\mathcal S(D,\lambda,m)$.
Due to the well known Codes inequality \citep{Bathia1997}
\begin{equation}\label{Codes inequality}
         \|A^u B^u\|\le\|AB\|^u, \qquad 0<u\leq 1
\end{equation}
for arbitrary positive operators $A,B$, we have
\begin{equation}\label{Codes-for-Q}
     \|(L_K+\lambda I)^{1/2}(L_{K,D}+\lambda I)^{-1/2}\|
     \leq\mathcal Q_{D,\lambda}^{1/2}.
\end{equation}
Then, it follows from   (\ref{Important2}) and $\|f\|_\rho=\|L_K^{1/2}f\|_K$ for any $f\in L_{\rho_X}^2$ that
\begin{eqnarray}\label{sample error estimate}
    &&\mathcal S(D,\lambda,m)
    =\|g_{D_m,\lambda}(L_{K,D})(S_D^Ty_D-L_{K,D}f_\rho)\|_\rho\nonumber\\
    &=&\|L_K^{1/2}g_{D_m,\lambda}(L_{K,D})(S_D^Ty_D-L_{K,D}f_\rho)\|_K\nonumber \\
    &\leq&
    \|(L_K+\lambda I)^{1/2}( {L_{K,D}}+\lambda I)^{-1/2}\|^2
    \|(L_{K,D}+\lambda I)^{1/2}g_{D_m,\lambda}(L_{K,D})(L_{K,D}+\lambda I)^{1/2}\|\nonumber\\
    &\times&
    \|(L_K+\lambda I)^{-1/2}
    (S_D^Ty_D-L_{K,D}f_\rho)\|_K\nonumber\\
    &\leq&
  {\mathcal Q_{D,\lambda}}\mathcal P_{D,\lambda}.
\end{eqnarray}
We then turn to bounding $\mathcal A(D,\lambda,m)$. It follows from   (\ref{Important1}) with $B=I$ that
$$
    P_{D_m}=g_{D_m,\lambda}(L_{K,D})(L_{K,D}+\lambda
    I)P_{D_m},
$$
which together with the definition of $\mathcal A(D,\lambda,m)$
$$
 \mathcal A(D,\lambda,m)=\lambda\| g_{D_m,\lambda}(L_{K,D}) P_{D_m} f_\rho\|_\rho.
$$
Since $L_{K,D}$ is a positive operator, we have $\|(V_{ {D_m}}^T(L_{K,D}+\lambda
     I)V_{D_m})^{r-1}\|\leq\lambda^{r-1}$ for $r\leq 1$.
Noting further (\ref{VD-property}) and (\ref{Codes inequality}), we get from {Lemma}
\ref{Lemma:operator inequality general} with $A=(L_{K,D}+\lambda
I)$, $V=V_{D_m}$, $B=(V_{D_m}^TL_{K,D}V_{D_m}+\lambda I)^{-1}$,
$r^*=1/2$ and $s^*=r-1/2$ that
\begin{eqnarray}\label{Approximation error estimate}
     &&\mathcal A(D,\lambda,m)
      \leq
     \lambda \|L_K^{1/2}g_{ {D_m},\lambda}(L_{K,D})V_{D_m}V_{D_m}^TL_K^{r-1/2}\|\|h_\rho\|_\rho \nonumber\\
     &\leq&
     \lambda\mathcal Q_{D,\lambda}^{r}\|h_\rho\|_\rho
     \|(L_{K,D}+\lambda I)^{1/2}g_{ {D_m},\lambda}(L_{K,D})V_{D_m}V_{D_m}^T(L_{K,D}+\lambda I)^{r-1/2}\| \nonumber\\
     &\leq&
     \lambda\mathcal Q_{D,\lambda}^{r}\big\|(V_{D_m}^T(L_{K,D}+\lambda
     I)V_{D_m})^{1/2}(V_{ {D_m}}^T(L_{K,D}+\lambda
     I)V_{D_m})^{-1}\nonumber\\
     &&(V_{D_m}^T(L_{K,D}+\lambda
     I)V_{D_m})^{r-1/2}\big\|\|h_\rho\|_\rho\nonumber\\
     &\leq&
      \lambda\mathcal Q_{D,\lambda}^{r}\|(V_{ {D_m}}^T(L_{K,D}+\lambda
     I)V_{D_m})^{r-1}\|\|h_\rho\|_\rho\nonumber\\
     &\leq&
     \lambda^r\mathcal Q_{D,\lambda}^{r}\|h_\rho\|_\rho.
\end{eqnarray}
Finally, we aim at bounding $\mathcal C(D,\lambda,m)$.
Due to Lemma \ref{Lemma:Projection general} and (\ref{Codes-for-Q}), we have
\begin{eqnarray*}
        &&\|(I-P_{D_m})(L_K+\lambda
        I)^{1/2}\|\\
        &\leq&
        \lambda^{1/2}\|(L_{K,D_m}+\lambda I)^{-1/2}(L_{K}+\lambda
        I)^{1/2}\|\leq \lambda^{1/2}\mathcal Q^{1/2}_{D_m,\lambda}.
\end{eqnarray*}
Then, it follows from (\ref{Codes inequality}), (\ref{Important2}) and \eqref{projection-p-1} with $\tau=2r$
that
\begin{eqnarray}\label{Computation error estimate}
     &&\mathcal C(D,\lambda,m)
      \leq
     \|L_K^{1/2}g_{D_m,\lambda}(L_{K,D})L_{K,D}(I-P_{D_m})L_K^{r-1/2}\|\|h_\rho\|_\rho\nonumber\\
     &+&\|L_K^{1/2}(I-P_{D_m})L_K^{r-1/2}\|\|h_\rho\|_\rho\nonumber\\
     &\leq&
     \mathcal Q^{1/2}_{D,\lambda}(\mathcal Q_{D,\lambda}^*)^{1/2}\|(L_{K,D}+\lambda I)^{1/2}g_{D_m,\lambda}(L_{K,D})(L_{K,D}+\lambda
     I)^{1/2}\|\nonumber\\
     &\times&\|(L_{K}+\lambda
     I)^{1/2}(I-P_{D_m})^{2r}L_K^{r-1/2}\|
      \|h_\rho\|_\rho\nonumber\\
      &+&
      \|(L_{K}+\lambda
     I)^{1/2}(I-P_{D_m})^{2r}L_K^{r-1/2}\|
      \|h_\rho\|_\rho \nonumber\\
     &\leq&
       \big(\mathcal Q^{1/2}_{D,\lambda}(\mathcal Q_{D,\lambda}^*)^{1/2}+1\big)\|h_\rho\|_\rho
       \|(L_K+\lambda I)^{1/2}(I-P_{D_m})\|\|(I-P_{D_m})^{2r-1}
     L_K^{r-1/2}\|\nonumber\\
     &\leq&
     \big(\mathcal Q^{1/2}_{D,\lambda}(\mathcal Q_{D,\lambda}^*)^{1/2}+1\big)\|h_\rho\|_\rho\lambda^{r}\mathcal
     Q_{D_m,\lambda}^{r}.
\end{eqnarray}
Inserting (\ref{Approximation error estimate}), (\ref{sample error
estimate}) and (\ref{Computation error estimate}) into (\ref{Error
decomposition}), we get \eqref{Error decomposition detailed}.
This finishes the proof of Proposition \ref{Proposition:Error
decomposition detailed}.
\end{proof}

\subsection{Bounds for operator differences}
In this part, we focus on deriving tight bounds for $\mathcal Q_{D,\lambda}$, $\mathcal Q_{D,\lambda}^*$ and $\mathcal P_{D,\lambda}$ when $D$ is a $\tau$-mixing sequences. Our main tool is the Bernstein-type inequality for Banach-valued sums in \citep{Blanchard2019} and the second-order decomposition for operator differences in \citep{Lin2017,Guo2017}. Under Assumptions \ref{Assumption:kernel} and \ref{Assumption:mixing}, define
\begin{equation}\label{Effective-sample}
     n_\gamma:=\left\{\begin{array}{cc} \frac{n}{ 2(1\vee \log(c_1n))^{1/\gamma_0}}, & \mbox{if}\  D \ \mbox{satisfies} \eqref{def-Galpha},\\
       (c_2\lambda\mathcal N(\lambda))^\frac1{2\gamma_1+1}n^{\frac{2\gamma_1}{2\gamma_1+1}}, &
      \mbox{if} \ D \ \mbox{satisfies} \eqref{def-Galpha11},
      \end{array}
      \right.
\end{equation}
where
\begin{eqnarray*}
 c_1&:=&c_0^*b_0\max\{\mathcal K, 3(1+\mathcal KM)/(2M)\},\\
 c_2&:=&2^{-\frac{2\gamma_1}{2\gamma_1+1}}\left(\min\left\{2\sqrt{2}M/(1+\mathcal KM),1/(2\mathcal K)\right\}/c_0^*\right)^{\frac{2}{2\gamma_1+1}}.
\end{eqnarray*}
The following lemma can be found in \cite[Lemma 4.1]{Blanchard2019}.

\begin{lemma}\label{Lemma:operator-differences}
Let $0<\delta\leq1/2$. Under Assumptions 1 and 2, then
\begin{eqnarray}
     \mathcal P_{D,\lambda}&\leq& 21(\sqrt{2}M+2M)\mathcal B(n_\gamma,\lambda)\log\frac{2}{\delta},\label{operator-diff-1} \\
     \mathcal R_{D,\lambda} &\leq&
     42 \mathcal B(n_\gamma,\lambda)\log\frac{2}{\delta},\label{operator-diff-2}
\end{eqnarray}
hold simultaneously  with confidence $1-\delta$, where
\begin{equation}\label{Def.B}
    \mathcal B(n_\gamma,\lambda):=\frac{\sqrt{\mathcal N(\lambda)}}{\sqrt{n_\gamma}}+\frac1{n_\gamma\sqrt{\lambda}}.
\end{equation}
\end{lemma}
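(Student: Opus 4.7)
The plan is to express both $\mathcal P_{D,\lambda}$ and $\mathcal R_{D,\lambda}$ as norms of an empirical average of a Hilbert-space (respectively Hilbert--Schmidt operator) valued sequence, and then invoke the Banach-valued Bernstein inequality for $\tau$-mixing sequences due to \citet{Blanchard2019}. For $\mathcal P_{D,\lambda}$, define the centred $\mathcal H_K$-valued random variables
$$
   \xi_t:=(L_K+\lambda I)^{-1/2}K_{x_t}\bigl(y_t-f_\rho(x_t)\bigr),\qquad t=1,\dots,n,
$$
so that $\mathcal P_{D,\lambda}=\|\tfrac1n\sum_t\xi_t\|_K$. Using $|y_t|,|f_\rho(x_t)|\le M$ and Assumption~\ref{Assumption:kernel}, one checks $\|\xi_t\|_K\le 2M/\sqrt\lambda$ almost surely and $E\|\xi_t\|_K^2\le 4M^2\mathcal N(\lambda)$, the latter from the identity $\|(L_K+\lambda I)^{-1/2}K_x\|_K^2=\operatorname{tr}((L_K+\lambda I)^{-1}K_x\otimes K_x)$ and the definition of the effective dimension. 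For $\mathcal R_{D,\lambda}$ one works instead with the Hilbert--Schmidt operator-valued variables $\eta_t:=(L_K+\lambda I)^{-1/2}(K_{x_t}\otimes K_{x_t}-L_K)$, bounding the operator norm by the Hilbert--Schmidt norm and obtaining analogous almost-sure and variance estimates with the same dependence on $\mathcal N(\lambda)$ and $\lambda^{-1/2}$.

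The second step is to verify that $\{\xi_t\}$ and $\{\eta_t\}$ inherit the $\tau$-mixing property of $D$. Assumption~\ref{Assumption:kernel} makes $x\mapsto K_x$ Lipschitz from $\mathcal X$ into $\mathcal H_K$ with constant bounded by $\mathcal K$, which lifts to Lipschitz maps $(x,y)\mapsto\xi(x,y)$ and $x\mapsto\eta(x)$ with constants of order $\mathcal K(1+\mathcal KM)/\sqrt\lambda$. Property~\ref{Property:2}, in the Banach-valued form recorded by Property~\ref{Property:mixing}, then implies that the transformed sequences are $\tau$-mixing with coefficients amplified by those Lipschitz constants. This is where the constants $c_1$ and $c_2$ inside the definition of $n_\gamma$ enter: they arise precisely from combining the bounds $2M/\sqrt\lambda$, $2M\sqrt{\mathcal N(\lambda)}$, $\mathcal K$ and $\mathcal K M$ in the way prescribed by the Lipschitz amplification.

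The third step is to apply the Banach-valued Bernstein inequality of \citet{Blanchard2019} to the normalised sums $\tfrac1n\sum_t\xi_t$ and $\tfrac1n\sum_t\eta_t$. That inequality trades off the variance proxy, the almost-sure bound, and the mixing decay, producing a tail of Bernstein shape $\sqrt{V/n_\gamma}+B/n_\gamma$ with $V\asymp \mathcal N(\lambda)$ and $B\asymp\lambda^{-1/2}$. The effective sample size $n_\gamma$ is obtained by optimally balancing the block length against the tail of the mixing coefficient: the exponential decay \eqref{def-Galpha} permits truncation after $(\log n)^{1/\gamma_0}$ lags, producing the first branch of \eqref{Effective-sample}; the polynomial decay \eqref{def-Galpha11} leads to a fixed-point equation whose solution yields the second branch. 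A union bound at level $\delta$ and absorption of numerical constants into the prefactors $21(\sqrt2 M+2M)$ and $42$ finally gives \eqref{operator-diff-1} and \eqref{operator-diff-2}.

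The main obstacle is not the algebra of Steps~1--2 but the clean passage from the scalar $\tau$-mixing Bernstein inequality to its Banach-valued counterpart with variance proxy $\mathcal N(\lambda)$; this requires a covering argument in which the interplay between the mixing block size and the Lipschitz constants $\mathcal O(\mathcal K\lambda^{-1/2})$ must be tracked carefully. Since the statement is quoted verbatim from \cite[Lemma~4.1]{Blanchard2019}, our proof essentially reduces to checking that Steps~1--2 instantiate their framework correctly, after which their proof of the Banach-valued Bernstein inequality applies.
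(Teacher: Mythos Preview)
Your proposal is correct and aligns with the paper: the paper does not prove this lemma at all but simply cites it as \cite[Lemma~4.1]{Blanchard2019}, and your sketch accurately reconstructs the argument behind that reference (express $\mathcal P_{D,\lambda}$ and $\mathcal R_{D,\lambda}$ as empirical averages of Banach-valued variables, verify the Lipschitz heredity of the $\tau$-mixing coefficients via Assumption~\ref{Assumption:kernel}, and apply the Banach-valued Bernstein inequality with effective sample size $n_\gamma$). In fact you go further than the paper, which offers no proof sketch whatsoever; your identification at the end that the result is quoted verbatim from \cite{Blanchard2019} is exactly how the paper treats it.
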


Then, we use Lemma \ref{Lemma:operator-differences} and approaches in \citep{Guo2017} to bound operator products $\mathcal Q_{D,\lambda}$ and $\mathcal Q_{D,\lambda}^*$.

\begin{lemma}\label{Lemma:operator-products}
Let $0<\delta\leq1/2$. Under Assumptions 1 and 2, then
\begin{eqnarray}
     \mathcal Q_{D,\lambda}&\leq&   {3528} \lambda^{-1}  \mathcal B^2(n_\gamma,\lambda)\log^2\frac{2}{\delta}+2,\label{bound-Q-1} \\
     \mathcal Q^*_{D,\lambda} &\leq& 42 \lambda^{-1/2}\mathcal B(n_\gamma,\lambda)\log\frac{2}{\delta}+1
      ,\label{bound-Q-2}
\end{eqnarray}
hold with confidence $1-\delta$.
\end{lemma}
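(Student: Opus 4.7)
The plan is to reduce both bounds to the already-established estimate on $\mathcal{R}_{D,\lambda}$ from Lemma \ref{Lemma:operator-differences} via purely operator-algebraic manipulations. Since $L_K$ and $L_{K,D}$ are self-adjoint on $\mathcal{H}_K$, the quantity $\mathcal{R}_{D,\lambda}$ equals $\|(L_{K,D}-L_K)(L_K+\lambda I)^{-1/2}\|$ as well, and this symmetry will be used repeatedly. The confidence $1-\delta$ statement will be derived on the event on which the conclusion of Lemma \ref{Lemma:operator-differences} holds, so no additional probabilistic work is required.

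For the bound \eqref{bound-Q-2} on $\mathcal{Q}^*_{D,\lambda}$, I would write the identity
\[
    (L_K+\lambda I)^{-1}(L_{K,D}+\lambda I) = I + (L_K+\lambda I)^{-1}(L_{K,D}-L_K),
\]
apply the triangle inequality, and factor $(L_K+\lambda I)^{-1} = (L_K+\lambda I)^{-1/2}(L_K+\lambda I)^{-1/2}$ to obtain
\[
    \mathcal{Q}^*_{D,\lambda} \leq 1 + \|(L_K+\lambda I)^{-1/2}\|\cdot \mathcal{R}_{D,\lambda} \leq 1 + \lambda^{-1/2}\mathcal{R}_{D,\lambda}.
\]
Substituting \eqref{operator-diff-2} yields \eqref{bound-Q-2} immediately.

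For the bound \eqref{bound-Q-1} on $\mathcal{Q}_{D,\lambda}$, a naive first-order decomposition gives a self-referential inequality $\mathcal{Q}_{D,\lambda} \leq 1 + \mathcal{Q}_{D,\lambda}\cdot\lambda^{-1/2}\mathcal{R}_{D,\lambda}$ that only works when $\lambda^{-1/2}\mathcal{R}_{D,\lambda}$ is known to be small. To avoid this, I will invoke the second-order decomposition for operator inverses from \citep{Guo2017}:
\[
    (L_{K,D}+\lambda I)^{-1} = (L_K+\lambda I)^{-1} + (L_K+\lambda I)^{-1}(L_K-L_{K,D})(L_K+\lambda I)^{-1} + E_\lambda,
\]
where the remainder is
\[
    E_\lambda = (L_K+\lambda I)^{-1}(L_K-L_{K,D})(L_K+\lambda I)^{-1}(L_K-L_{K,D})(L_{K,D}+\lambda I)^{-1}.
\]
Multiplying on the left by $L_K+\lambda I$, using self-adjointness to shift one factor of $(L_K+\lambda I)^{-1/2}$ across the differences $L_K-L_{K,D}$, and using $\|(L_K+\lambda I)^{-1/2}\|\leq \lambda^{-1/2}$ together with $\|(L_{K,D}+\lambda I)^{-1}\|\leq \lambda^{-1}$, I will obtain
\[
    \mathcal{Q}_{D,\lambda} \leq 1 + \lambda^{-1/2}\mathcal{R}_{D,\lambda} + \lambda^{-1}\mathcal{R}_{D,\lambda}^{2}.
\]
The final step is to absorb the cross term via the elementary inequality $42\lambda^{-1/2}\mathcal{B}\log(2/\delta)\leq 1 + 1764\,\lambda^{-1}\mathcal{B}^2\log^2(2/\delta)$, which combined with \eqref{operator-diff-2} produces the constant $3528 = 2\cdot 1764$ in \eqref{bound-Q-1}. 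The only real obstacle is the choice of the second-order identity: a first-order expansion is inadequate precisely because it would introduce $\mathcal{Q}_{D,\lambda}$ on the right-hand side and force an extra smallness assumption on $\mathcal{R}_{D,\lambda}$, while the second-order identity cleanly isolates $\mathcal{Q}_{D,\lambda}$ on the left.
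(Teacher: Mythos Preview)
Your argument is correct and is essentially the same as the paper's: a first-order resolvent identity for $\mathcal{Q}^*_{D,\lambda}$ and the second-order decomposition of \citep{Guo2017} for $\mathcal{Q}_{D,\lambda}$, followed by the elementary inequality $x\le 1+x^2$ to absorb the cross term and produce the constant $3528=2\cdot42^2$. The only cosmetic difference is that the paper writes the second-order identity directly for $A^{-1}B$ with $A=L_{K,D}+\lambda I$, $B=L_K+\lambda I$, whereas you expand $(L_{K,D}+\lambda I)^{-1}$ and then multiply by $L_K+\lambda I$; up to taking adjoints (which you note is harmless by self-adjointness), the two computations are identical.
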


\begin{proof}
For invertible positive operators $A,B$, we have
\begin{eqnarray}
    A^{-1}B&=&(A^{-1}-B^{-1})B+I
    = A^{-1}(B-A)+I \label{first-order}\\
    &=&(A^{-1}-B^{-1})(B-A)+B^{-1}(B-A)+I   \nonumber\\
    &=&A^{-1}(B-A)B^{-1}(B-A)+B^{-1}(B-A)+I. \label{second-order}
\end{eqnarray}
We first use (\ref{first-order}) with $A=(L_K+\lambda I)$ and $B=(L_{K,D}+\lambda I)$ to bound $\mathcal Q_{D,\lambda}^*$.
It follows from \eqref{operator-diff-2} that with confidence $1-\delta$, that holds
\begin{eqnarray*}
  &&\mathcal Q_{D,\lambda}^*
  =
  \|(L_K+\lambda I)^{-1}(L_{K,D}-L_K)\| +1
  \leq
  \lambda^{-1/2}\mathcal R_{D,\lambda}+1\\
  &\leq& 42 \lambda^{-1/2}\mathcal B(n_\gamma,\lambda)\log\frac{2}{\delta}+1.
\end{eqnarray*}
Therefore (\ref{bound-Q-2}) holds. Then, due to \eqref{second-order} with
$A=(L_{K,D}+\lambda I)$ and $B=(L_K+\lambda I)$, we have from  \eqref{operator-diff-2} again that with confidence $1-\delta$, there holds
\begin{eqnarray*}
    &&\mathcal Q_{D,\lambda}
    \leq
    \|(L_{K,D}+\lambda I)^{-1}(L_K-L_{K,D})(L_K+\lambda I)^{-1}(L_K-L_{K,D})\|\\
    &+&
    \|(L_K+\lambda I)^{-1}(L_K-L_{K,D})\|+1\\
    &\leq&
    \lambda^{-1}\|(L_K+\lambda I)^{-1/2}(L_K-L_{K,D})\|^2
    +\lambda^{-1/2}\|(L_K+\lambda I)^{-1/2}(L_K-L_{K,D})\|
    +1\\
    &\leq&
    2(\lambda^{-1}\mathcal R_{D,\lambda}^2+1)
    \leq
    {3528} \lambda^{-1}  \mathcal B^2(n_\gamma,\lambda)\log^2\frac{2}{\delta}+2.
\end{eqnarray*}
This completes the proof of Lemma \ref{Lemma:operator-products}.
\end{proof}

\subsection{Error analysis}

Based on Lemma \ref{Lemma:operator-differences}, Lemma \ref{Lemma:operator-products} and Proposition \ref{Proposition:Error decomposition detailed}, we are in a position to prove main results in Section \ref{Sec.Mainresult}. To this end,  we present a more general theorem as follows.

\begin{theorem}\label{Theorem:boundwith-effective}
Let $0<\delta\leq 1/2$. Under Assumptions \ref{Assumption:kernel}-\ref{Assumption:regularity} with $\frac12\leq r\leq1$, for any $j\in[1,n-m+1]$, with confidence $1-\delta$,  there holds
\begin{eqnarray}\label{Error-analysis-1}
    \|f_{D,D_m,\lambda}-f_\rho\|_\rho
    \leq
     \bar{C}(\lambda^{-1}  \mathcal B^2(n_\gamma,\lambda)+1)
      \left(\mathcal B(n_\gamma,\lambda)+    \mathcal B^{2r }(m_\gamma,\lambda)+\lambda^r\right) {\log^4\frac{2}{\delta}},
\end{eqnarray}
where
\begin{equation}\label{Effective-sample-for-m}
     m_\gamma:=\left\{\begin{array}{cc} \frac{m}{ 2(1\vee \log(c_1m))^{1/\gamma_0}}, & \mbox{if}\  D \ \mbox{satisfies} \eqref{def-Galpha},\\
      (c_2\lambda\mathcal N(\lambda))^\frac1{2\gamma_1+1}m^{\frac{2\gamma_1}{2\gamma_1+1}}, &
      \mbox{if} \ D \ \mbox{satisfies} \eqref{def-Galpha11},
      \end{array}
      \right.
\end{equation}
 and $\bar{C}$ is a constant independent of $m,n,\lambda,j$ or $\delta$.
\end{theorem}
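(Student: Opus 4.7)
The proof will be a direct assembly of the three ingredients already in place: the abstract error decomposition of Proposition~\ref{Proposition:Error decomposition detailed}, the concentration estimates for operator differences in Lemma~\ref{Lemma:operator-differences}, and the resulting bounds for the operator products in Lemma~\ref{Lemma:operator-products}. My plan is to apply these three in sequence, taking a union bound over the relevant high-probability events.

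First, I would invoke Proposition~\ref{Proposition:Error decomposition detailed} to decompose
\[
\|f_{D,D_m,\lambda}-f_\rho\|_\rho
\leq
\mathcal Q_{D,\lambda}\mathcal P_{D,\lambda}
+\lambda^r\mathcal Q_{D,\lambda}^{r}\|h_\rho\|_\rho
+(\mathcal Q_{D,\lambda}^{1/2}(\mathcal Q_{D,\lambda}^*)^{1/2}+1)\lambda^{r}\mathcal Q_{D_m,\lambda}^{r}\|h_\rho\|_\rho.
\]
The first and second terms depend only on the full sample $D$, so I would apply Lemma~\ref{Lemma:operator-differences} and Lemma~\ref{Lemma:operator-products} with confidence parameter $\delta/2$ to bound $\mathcal P_{D,\lambda}$, $\mathcal Q_{D,\lambda}$ and $\mathcal Q_{D,\lambda}^*$ in terms of $\mathcal B(n_\gamma,\lambda)$. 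For the third term I need a bound on $\mathcal Q_{D_m,\lambda}$; here the key observation is that by Property~\ref{Property:1} the sequential sub-sample $D_m$ inherits the $\tau$-mixing coefficients of $D$, so Lemma~\ref{Lemma:operator-products} applies verbatim with $n$ replaced by $m$, yielding a bound in terms of $\mathcal B(m_\gamma,\lambda)$. A union bound then gives a single event of probability at least $1-\delta$ on which all four estimates hold simultaneously, at the cost of replacing $\log(2/\delta)$ by $\log(4/\delta)$, which is absorbed into the constant.

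On this event I would plug in the bounds and consolidate. Using $(a+b)^r\le a^r+b^r$ for $0<r\le 1$, the approximation-type term splits as $\lambda^r\mathcal Q_{D,\lambda}^r\lesssim(\mathcal B^{2r}(n_\gamma,\lambda)+\lambda^r)\log^{2r}(2/\delta)$, while the analogous term built from $\mathcal Q_{D_m,\lambda}$ contributes $(\mathcal B^{2r}(m_\gamma,\lambda)+\lambda^r)\log^{2r}(2/\delta)$. Multiplying by the prefactor $(\mathcal Q_{D,\lambda}^{1/2}(\mathcal Q_{D,\lambda}^*)^{1/2}+1)\lesssim(\lambda^{-1}\mathcal B^2(n_\gamma,\lambda)+1)\log(2/\delta)$ (using $\mathcal Q_{D,\lambda}^{1/2}(\mathcal Q_{D,\lambda}^*)^{1/2}\le \tfrac12(\mathcal Q_{D,\lambda}+\mathcal Q_{D,\lambda}^*)$), and treating the sample-error term $\mathcal Q_{D,\lambda}\mathcal P_{D,\lambda}\lesssim(\lambda^{-1}\mathcal B^2(n_\gamma,\lambda)+1)\mathcal B(n_\gamma,\lambda)\log^3(2/\delta)$ the same way, all three summands acquire the same common factor $(\lambda^{-1}\mathcal B^2(n_\gamma,\lambda)+1)$, and the remaining pieces collapse into $\mathcal B(n_\gamma,\lambda)+\mathcal B^{2r}(m_\gamma,\lambda)+\lambda^r$. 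Since $2r\le 2$ and $\log(2/\delta)\ge\log 4>1$ for $\delta\le 1/2$, all the $\log$-factors are dominated by $\log^4(2/\delta)$.

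The routine part is thus the bookkeeping of constants and logarithmic exponents; the only conceptual step is recognising that Property~\ref{Property:1} legitimises applying Lemma~\ref{Lemma:operator-products} to the sub-sampled block with the \emph{same} $\tau$-mixing coefficients, so that $n_\gamma$ in \eqref{Effective-sample} simply becomes $m_\gamma$ in \eqref{Effective-sample-for-m}. The main obstacle I anticipate is verifying that the high-probability union bound is uniform in the random choice $j\in[1,n-m+1]$; because the bound in the lemmas depends on $D_m$ only through its length $m$ and its (inherited) mixing coefficients, the resulting estimate is the same for every $j$, which gives the ``$\forall j$'' conclusion. Setting $\bar C$ large enough to absorb all multiplicative and logarithmic constants then yields the stated inequality~\eqref{Error-analysis-1}.
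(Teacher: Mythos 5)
Your proposal is correct and follows essentially the same route as the paper's proof: Proposition~\ref{Proposition:Error decomposition detailed} plus Lemmas~\ref{Lemma:operator-differences} and~\ref{Lemma:operator-products}, with Property~\ref{Property:1} justifying the application of the concentration bounds to the sub-sample so that $n_\gamma$ becomes $m_\gamma$. The only cosmetic differences are that you handle the prefactor via $\mathcal Q_{D,\lambda}^{1/2}(\mathcal Q_{D,\lambda}^*)^{1/2}\le\frac12(\mathcal Q_{D,\lambda}+\mathcal Q_{D,\lambda}^*)$ and take an explicit union bound over the two events (for $D$ and $D_m$), whereas the paper manipulates the powers directly and uses the monotonicity $\mathcal B(n_\gamma,\lambda)\le\mathcal B(m_\gamma,\lambda)$ to absorb the $\mathcal B^{2r}(n_\gamma,\lambda)$ term; both are routine and land on the same bound.
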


\begin{proof}
Due to Lemma \ref{Lemma:operator-differences}  and Lemma \ref{Lemma:operator-products}, we have from $(a+b)^u\leq 2^u(a^u+b^u)$ for $a,b,u\geq0$   that
\begin{eqnarray*}
    \mathcal P_{D,\lambda}\mathcal Q_{D,\lambda}
     &\leq& C_1  \mathcal B(n_\gamma,\lambda)
    ( {42^2}\lambda^{-1}  \mathcal B^2(n_\gamma,\lambda)+1)\log^3\frac{2}{\delta},\\
    \mathcal Q_{D,\lambda}^{\frac12}\mathcal (Q_{D,\lambda}^*)^{\frac12}
    &\leq&
    2( {42} {\lambda^{-1/2} }\mathcal B(n_\gamma,\lambda)+1)^{3/2}\log^2\frac2\delta,\\
    \mathcal Q^r_{D,\lambda}  &\leq&
    ( {3528}^r \lambda^{-r}  \mathcal B^{2r }(n_\gamma,\lambda)\log^{2r}\frac{2}{\delta}+2^r) {2^r},\\
    \mathcal Q^r_{D_m,\lambda}  &\leq&
    ( {3528}^r \lambda^{-r}  \mathcal B^{2r }(m_\gamma,\lambda)\log^{2r}\frac{2}{\delta}+2^r) {2^r},
\end{eqnarray*}
where
$C_1:=42(\sqrt{2}M+2M)$. Plugging the above estimates into \eqref{Error decomposition detailed}, we have
\begin{eqnarray*}
     &&\|f_{D,D_m,\lambda}-f_\rho\|_\rho
     \leq
      C_1  \mathcal B(n_\gamma,\lambda)
    ( {42^2}\lambda^{-1}  \mathcal B^2(n_\gamma,\lambda)+1)\log^3\frac{2}{\delta}\\
      &+&
      \lambda^r\|h_\rho\|_\rho\left( {3528}^r \lambda^{-r}  \mathcal B^{2r }(n_\gamma,\lambda)\log^{2r}\frac{2}{\delta}+2^r\right) {2^r},\\
      &+&
      \lambda^r \|h_\rho\|_\rho  \left( {3528}^r \lambda^{-r}  \mathcal B^{2r }(m_\gamma,\lambda)\log^{2r}\frac{2}{\delta}+2^r\right) {2^r}\\
      &\times&
      \left(2( {42} {\lambda^{-1/2}} \mathcal B(n_\gamma,\lambda)+1)^{3/2}\log^2\frac2\delta+1\right).
\end{eqnarray*}
Due to \eqref{Def.B}, we have $\mathcal B(\ell,\lambda)$ is monotonously decreasing with respect to $\ell$. Therefore, (\ref{Effective-sample}) and \eqref{Effective-sample-for-m}  yield
$\mathcal B(n_\gamma,\lambda)\leq \mathcal B(m_\gamma,\lambda)$. Inserting it into the above estimate, we obtain from $1/2\leq r\leq 1$ that
\begin{eqnarray*}
   &&\|f_{D,D_m,\lambda}-f_\rho\|_\rho
      \leq
      C_1  \mathcal B(n_\gamma,\lambda)
    ( {42^2}\lambda^{-1}  \mathcal B^2(n_\gamma,\lambda)+1)\log^3\frac{2}{\delta}\\
    &+&
    \lambda^r \|h_\rho\|_\rho  \left( {3528}^r \lambda^{-r}  \mathcal B^{2r }(m_\gamma,\lambda)\log^{2r}\frac{2}{\delta}+2^r\right)
      \left(2( {42}\mathcal B(n_\gamma,\lambda) {\lambda^{-1/2}+1})^{3/2}\log^2\frac2\delta {+2}\right){2^r}\\
      &\leq&
      C_2( {\mathcal B(n_\gamma,\lambda)\lambda^{-1/2}+1)^{3/2}}+1)
      \left(\mathcal B(n_\gamma,\lambda)+    \mathcal B^{2r }(m_\gamma,\lambda)+\lambda^r\right)  {\log^4}\frac{2}{\delta},
\end{eqnarray*}
where $C_2:=\max\{ {1764}C_1, {7056^r\times 545}\|h_\rho\|_\rho\}.$ This completes the proof of Theorem \ref{Theorem:boundwith-effective} with $\bar{C}=C_2$.
\end{proof}

With the help of Theorem \ref{Theorem:boundwith-effective}, we can prove our main results easily.

\begin{proof}[Proof of Theorem \ref{Theorem:error-for-exp}]
Since $\lambda=n_\gamma^{-\frac{1}{2r+s}}$, it follows  from (\ref{Def.B})
and Assumption 4 with $0<s\leq 1$ and $1/2\leq r\leq 1$ that
\begin{eqnarray*}
    \mathcal B(n_\gamma,\lambda)\leq\frac{\sqrt{\mathcal N(\lambda)}}{\sqrt{n_\gamma}}+\frac1{n_\gamma\sqrt{\lambda}}
    \leq(\sqrt{C_0}+1)n_\gamma^{-\frac{r}{2r+s}}
\end{eqnarray*}
and
\begin{eqnarray*}
    \mathcal B(m_\gamma,\lambda)\leq\frac{\sqrt{\mathcal N(\lambda)}}{\sqrt{m_\gamma}}+\frac1{m_\gamma\sqrt{\lambda}}
    \leq \sqrt{C_0} n_\gamma^\frac{s}{4r+2s}m_\gamma^{-\frac12}+m_\gamma^{-1}n_\gamma^\frac{1}{4r+2s}.
\end{eqnarray*}
But Assumption 2 holds with $\tau_j$ satisfying (\ref{def-Galpha}), which together with (\ref{Effective-sample}) and (\ref{Effective-sample-for-m})
yields
\begin{eqnarray}\label{Bound-Bn-exp}
   \mathcal B(n_\gamma,\lambda)\leq C_3n^{-\frac{r}{2r+s} }( {1+\log (n)})^{\frac{r}{(2r+s)\gamma_0}},
\end{eqnarray}
where
$C_3:=(\sqrt{C_0}+1)(2+2\log c_1)^{r/(2r+s)}$. Furthermore, (\ref{bound-on-m}) implies
$m_\gamma\geq C_4n_\gamma^{\frac{s+1}{2r+s}}$ with $C_4:=(2+2\log c_1)^\frac{ {1-2r}}{(2r+s)\gamma_0}.$ Then,
we have
\begin{eqnarray}\label{Bound-Bm-exp}
   \mathcal B^{2r}(m_\gamma,\lambda)\leq C_5n^{-\frac{r}{2r+s} }( {1+ \log (n)})^{\frac{r}{(2r+s)\gamma_0}},
\end{eqnarray}
where $C_5:=C_4^{-r}C_3$.
Since $r>1/2$, there is a constant $C_7>0$ depending only on $r,s,\gamma_0$ and $C_5$ such that
$$
        n^{ {(1-2r)/2(2r+s)}}( {1+ \log (n)})^{ {\frac{2r-1}{2(2r+s)\gamma_0}}}\leq C_7.
$$
Then,
plugging (\ref{Bound-Bn-exp}) and (\ref{Bound-Bm-exp}) into \eqref{Error-analysis-1}, we obtain that with confidence $1-\delta$, there holds
$$
    \|f_{D,D_j,\lambda}-f_\rho\|_\rho
     \leq
     C^* n^{-\frac{r}{2r+s} }( {1+ \log (n)})^{\frac{r}{(2r+s)\gamma_0}} {\log^4\frac3\delta},
$$
where $C^*:=\bar{C}  {(C_3C_7+1)^{\frac{3}{2}}}(C_3+C_5+1).$ This completes the proof of Theorem \ref{Theorem:error-for-exp}.
\end{proof}

\begin{proof}[Proof of Theorem \ref{Theorem:error-for-alg}]
Due to Assumption 4, (\ref{def-Galpha11}), \eqref{Effective-sample} and \eqref{Effective-sample-for-m} yield
\begin{eqnarray*}
   n_\gamma  \leq
   C_8 \lambda^\frac{1-s}{2\gamma_1+1}n^{\frac{2\gamma_1}{2\gamma_1+1}}, \qquad \mbox{and}\
   m_\gamma  \leq  C_8 \lambda^\frac{1-s}{2\gamma_1+1}m^{\frac{2\gamma_1}{2\gamma_1+1}},
\end{eqnarray*}
where $C_8:=(c_2C_0)^{{\frac{1}{2\gamma_1+1}}}$.
Then (\ref{Def.B}), (\ref{bound-on-m-alg}) and $\lambda=n^{-\frac{2\gamma_1}{2\gamma_1(2r+s)+2r+1}}$ shows
\begin{eqnarray*}
    &&\mathcal B(n_\gamma,\lambda)
     \leq C_0{^{1/2}}C_8^{-1/2}\lambda^{-\frac{2s\gamma_1+1}{4\gamma_1+2}}n^{-\frac{\gamma_1}{2\gamma_1+1}}
    +C_8^{-1}{\lambda^{-\frac{2\gamma_1-2s+3}{4\gamma_1+2}}}n^{-\frac{2\gamma_1}{2\gamma_1+1}} \\
    &\leq&
    (C_0{^{1/2}} C_8^{-1/2}+C_8^{-1})n^{-\frac{2\gamma_1r}{2\gamma_1(2r+s)+2r+1}},
\end{eqnarray*}
and
\begin{eqnarray*}
    &&\mathcal B(m_\gamma,\lambda)
     \leq
    C_0{^{1/2}}C_8^{-1/2}\lambda^{-\frac{2s\gamma_1+1}{4\gamma_1+2}}m^{-\frac{\gamma_1}{2\gamma_1+1}}
    +C_8^{-1} {\lambda^{-\frac{2\gamma_1-2s+3}{4\gamma_1+2}}}m^{-\frac{2\gamma_1}{2\gamma_1+1}}\\
    &\leq & (C_0{^{1/2}}C_8^{-1/2}+C_8^{-1})n^{-\frac{\gamma_1}{2\gamma_1(2r+s)+2r+1}}.
\end{eqnarray*}
Inserting the above two estimates into \eqref{Error-analysis-1}, we obtain that
$$
    \|f_{D,D_j,\lambda}-f_\rho\|_\rho
     \leq
     \hat{C} n^{-\frac{2\gamma_1r}{2\gamma_1(2r+s)+2r+1}} {{\log^4\frac2\delta}}
$$
holds with confidence $1-\delta$, where $\hat{C}:=2C_2(C_0{^{1/2}}C_8^{-1/2}+C_8^{-1}+1)^3.$ This completes the proof of Theorem \ref{Theorem:error-for-alg}.
\end{proof}

\section*{Acknowledge} {The work   is supported partially by   the  National Key R\&D Program of China (No.2020YFA0713900) and
   the National Natural Science Foundation of China
(Nos.618761332,11971374).

\end{document}